\newenvironment{manac1}[1][htb]
  {% Update algorithm name
  \begin{algorithm}[#1]%
  }{\end{algorithm}}
\newenvironment{manac2}[1][htb]
{% Update algorithm name
\begin{algorithm}[#1]%
}{\end{algorithm}}
\newenvironment{manac3}[1][htb]
{% Update algorithm name
\begin{algorithm}[#1]%
}{\end{algorithm}}
\newenvironment{marl}[1][htb]
{% Update algorithm name
\begin{algorithm}[#1]%
}{\end{algorithm}}
\newtheorem{assumption}{X.}
\newtheorem{theorem}{Theorem}
\newtheorem{remark}{Remark}
\newtheorem{lemma}{Lemma}
\newtheorem{definition}{Definition}
\title{Multi-agent Natural Actor-critic Reinforcement Learning Algorithms}
\author{
 Prashant Trivedi \\
  Industrial Engineering and Operations Research \\
  Indian Institute of Technology Bombay India\\
  \texttt{trivedi.prashant15@iitb.ac.in} \\
  %% examples of more authors
   \And
 Nandyala Hemachandra\\
  Industrial Engineering and Operations Research \\
  Indian Institute of Technology Bombay India\\
  \texttt{nh@iitb.ac.in}
}
\begin{document}

\maketitle

\begin{abstract}
Multi-agent actor-critic algorithms are an important part of the Reinforcement Learning paradigm. We propose three fully decentralized multi-agent natural actor-critic (MAN) algorithms in this work. The objective is to collectively find a joint policy that maximizes the average long-term return of these agents. In the absence of a central controller and to preserve privacy, agents communicate some information to their neighbors via a time-varying communication network. We prove convergence of all the 3 MAN algorithms to a globally asymptotically stable set of the ODE corresponding to actor update; these use linear function approximations. We show that the Kullback-Leibler divergence between policies of successive iterates is proportional to the objective function's gradient. We observe that the minimum singular value of the Fisher information matrix is well within the reciprocal of the policy parameter dimension. Using this, we theoretically show that the optimal value of the deterministic variant of the MAN algorithm at each iterate dominates that of the standard gradient-based multi-agent actor-critic (MAAC) algorithm. To our knowledge, it is a first such result in multi-agent reinforcement learning (MARL). 
To illustrate the usefulness of our proposed algorithms, we implement them on a bi-lane traffic network to reduce the average network congestion. We observe an almost 25\% reduction in the average congestion in 2 MAN algorithms; the average congestion in another MAN algorithm is on par with the MAAC algorithm. We also consider a generic $15$ agent MARL; the performance of the MAN algorithms is again as good as the MAAC algorithm.
\end{abstract}

\keywords{Natural Gradients \and Actor-Critic Methods \and Networked Agents \and Traffic Network Control \and Stochastic Approximations \and Function Approximations \and  Fisher Information Matrix \and Non-Convex Optimization \and Quasi second-order methods \and Local optima value comparison \and Algorithms for better local minima}

\section{Introduction}
\label{sec: introduction_manac}

Reinforcement learning (RL) has been explored in recent years and is of great interest to researchers because of its broad applicability in many real-life scenarios. In RL, agents interact with the environment and take decisions sequentially. It is applied successfully to various problems, including elevator scheduling, robot control, etc. There are many instances where RL agents surpass human performance, such as openAI beating the world champion DOTA player, DeepMind beating the world champion of Alpha Star.

% \textcolor{red}{This para can be removed.} 
The sequential decision-making problems are generally modeled via Markov decision process (MDP). It requires the knowledge of system transitions and rewards. In contrast, RL is a data-driven MDP framework for sequential decision-making tasks; the transition probability matrices and the reward functions are not assumed, but their realizations are available as observed data. 

In RL, the purpose of an agent is to learn an optimal or nearly-optimal policy that maximizes the ``reward function" or functions of other user-provided  ``reinforcement signals" from the observed data. However, in many realistic scenarios, there is more than one agent. To this end, researchers explore the multi-agent reinforcement learning (MARL) methods, but most are centralized and relatively slow. Furthermore, these MARL algorithms use the standard/vanilla gradient, which has limitations. For example, the standard gradients cannot capture the angles in the state space and may not be effective in many scenarios. 
The natural gradients are more suitable choices because they capture the intrinsic curvature in the state space. In this work, we are incorporating natural gradients in the MARL framework.

In the multi-agent setup that we consider, the agents have some private information and a common goal. This goal could be achieved by deploying a central controller and converting the MARL problem into a single-agent RL problem. However, deploying a central controller often leads to scalability issues. On the other hand, if there is no central controller and the agents do not share any information then there is almost no hope of achieving the common goal. An intermediate model is to share some parameters via (possibly) a time-varying, and sparse communication matrix \cite{zhang2018fully}. The algorithms based on such intermediate methods are often attributed as consensus based algorithms. 

% \textcolor{red}{This can be removed because earlier we wrote it to incorporate the dynamic games keyword in the paper.}
The consensus based algorithm models can also be considered as intermediate between dynamic non-cooperative and cooperative game models. Non-cooperative games, as multi-agent systems, model situations where the agents do not have a common goal and do not communicate. On the contrary, cooperative games model situations where a central controller achieves a common goal using complete information.

Algorithm 2 of \cite{zhang2018fully} is a consensus based actor-critic algorithm. We call it MAAC (multi-agent actor-critic) algorithm. The MAAC algorithm uses the standard gradient and hence lacks in capturing the intrinsic curvature present in the state space. We propose three multi-agent natural actor-critic (MAN) algorithms and incorporate the curvatures via natural gradients. These algorithms use the linear function approximations for the state value and reward functions.
We prove the convergence of all the 3 MAN algorithms to a globally asymptotically stable equilibrium set of ordinary differential equations (ODEs) obtained from the actor updates. 
% The equilibrium set gives the local optima of the objective function.

Here is a brief overview of our two time-scale approach. Let $J(\theta)$ be the global MARL objective function of $n$ agents, where $\theta = (\theta^1, \dots, \theta^n)$ is the actor (or policy) parameter. For a given policy parameter $\theta$ of each MAN algorithm,
%For each MAN algorithm, a given policy parameter $\theta$, 
we first show in Theorem \ref{thm: critic_convergence_fisher} the convergence of critic parameters (to be defined later) on a faster time scale. Note that these critic parameters are updated via the communication matrix. We then show the  convergence of each agent's actor parameters to an asymptotically stable attractor set of its ODE. These actor updates use the natural gradients in the form of Fisher information matrix and advantage parameters (Theorem \ref{thm: MANAC-algo_1}, \ref{thm: actor_convergence_advantage_parameter} and \ref{thm: actor_convergence_advantage_parameter_fisher}).
The actor parameter $\theta$ is shown to converge on the slower time scale.

Our MAN algorithms use a log-likelihood function via the Fisher information matrix and incorporate the curvatures.
% Being a first order method, the standard gradients methods overlooks
%\sout{do not have information about} 
% the curvatures in the objective function. However, incorporating the Fisher information matrix to the standard gradients scales the parameter updates via a log-likelihood function.
We show that this log-likelihood function is indeed the KL divergence between the consecutive policies, and it is the gradient of the objective function up to scaling (Lemma \ref{lemma: kl_grad_J}). Unlike standard gradient methods, where the updates are restricted to the parameter space only, the natural gradient-based methods allow the updates to factor in the curvature of the policy distribution prediction space via the KL divergence between them. {Thus, 2 of our MAN algorithms, FI-MAN and FIAP-MAN, use a certain \emph{representation} of the objective function gradient in terms of the gradient of this KL divergence (Lemma \ref{lemma: kl_grad_J}). It turns out these two algorithms have much better empirical performance (Section \ref{subsubsec: traffic_network_details}).}

% \textcolor{blue}{2 paras here; more on various viewpoints of natural gradients and algos that provably converge to a better local minima }

We now point out a couple of important consequences of the representation learning aspect of our MAN algorithms for reinforcement learning. First, we show that under some conditions, our deterministic version of the FI-MAN algorithm converges to local minima with a better objective function value than the deterministic counterpart of the MAAC algorithm, Theorem \ref{thm: J_comp_t+1}. To the best of our knowledge, this is a new result in non-convex optimization; we are not aware of any algorithm that is \emph{proven} to converge to a better local maxima \cite{bottou2018optimization,nocedal2006numerical}. This relies on the important observation, which can be of independent interest, that $1/m$ is  \emph{uniform} upper bound on the smallest singular value of Fisher information matrix $G(\theta)$, Lemma \ref{lemma: sigma_min_less_1_m}; here $m$ is the common  dimension of the compatible policy parameter $\theta$ and the Fisher information matrix $G(\theta)$.

The natural gradient-based methods can be viewed as quasi-second order methods, as the Fisher information matrix $G(\cdot)$ is an invertible linear transformation of basis that is used in first-order optimization methods \cite{agarwal2021theory}. However, they are not regarded as second-order methods because the Fisher information matrix is not the Hessian of the \textit{objective function}. %\textcolor{blue}{to complete}

To validate the usefulness of our proposed algorithms, we
perform a comprehensive set of computational experiments in two settings: a bi-lane traffic network and an abstract MARL model. 
On a bi-lane traffic network model, the objective is to find the traffic signaling plan that reduces the overall network congestion. We consider two different arrival patterns between various origin-destination (OD) pairs. With the suitable linear function approximations to incorporate the humongous state space $(50^{16})$ and action space $(3^4)$, we observe a significant reduction ($\approx 25\%$) in the average network congestion in 2 of our MAN algorithms. One of our MAN algorithms that are only based on the advantage parameters and never estimate the Fisher information matrix inverse is on-par with the MAAC algorithm. In the abstract MARL model, we consider $15$ agents with 15 states and 2 actions in each state and generic reward functions  \cite{dann2014policy,zhang2018fully}. Each agent's reward is private information and hence not known to other agents. Our MAN algorithms either outperform or are on-par with the MAAC algorithm with high confidence.

\textbf{Organization of the paper:}
In Section \ref{sec: MARL_framework} we introduce the multi-agent reinforcement learning (MARL) and a novel natural gradients framework. We then propose MAN algorithms, namely FI-MAN, AP-MAN, and FIAP-MAN, and provide some theoretical insights on their relative performance in Section \ref{sec: MANAC}. Convergence proofs of all the algorithms are available in Section \ref{sec: Convergence_analysis}. Finally, we provide the computational experiments for modeling traffic network control and another abstract MARL problem in Section \ref{sec: experiments}. 

The detailed proof of theorems and lemmas are available in Appendix \ref{app: proofs}. More details of the computations and empirical observations are available in Appendix \ref{app: experiments_detailed}. We provide some backgrounds on MDP, single-agent actor-critic algorithm, and the MAAC algorithms in Appendix \ref{app: relevant_background}.

\section{MARL framework and natural gradients}
\label{sec: MARL_framework}

This section describes the multi-agent reinforcement learning (MARL) framework and the notion of natural gradients. We first describe some basic notations and the multi-agent Markov decision process (MDP).

Let $N = \{1,2,\dots, n\}$ denote the set of agents. Each agent independently interacts with a stochastic environment and takes a local action. We consider a fully decentralized setup in which a communication network connect the agents.
% Each node of the communication network represents an agent. 
This network is used to exchange information among agents in the absence of a central controller so that agents' privacy remains intact. The communication network is possibly time-varying and sparse. \textcolor{black}{We assume the communication among agents is synchronized, and hence there are no information delays. Moreover, only some parameters (that we define later) are shared among the neighbors of each agent. It also addresses an important aspect of the privacy protection of such agents.} Formally, the communication network is characterized by an undirected graph $\mathcal{G}_t = (N, \mathcal{E}_t)$, where $N$ is the set of all nodes (or agents) and $\mathcal{E}_t$ is the set of communication links available at time $t\in \mathbb{N}$. We say, agents $i,j \in N$ communicate at time $t$ if $(i,j)\in \mathcal{E}_t$. 

Let $\mathcal{S}$ denote the common state space available to all the agents. At any time $t$, each agent observes a common state $s_t\in \mathcal{S}$, and takes a local action $a^i_t$ from the set of available actions $\mathcal{A}^i$.
\textcolor{black}{We assume that for any agent $i\in N$, the entire action set $\mathcal{A}^i$ is feasible in every state, $s\in \mathcal{S}$.} The action $a^i_t$ is taken as per a local policy $\pi^i: \mathcal{S} \times \mathcal{A}^i \rightarrow [0,1]$, where $\pi^i(s_t,a^i_t)$ is the probability of taking action $a^i_t$ in state $s_t$ by agent $i\in N$.  Let $\mathcal{A} := \prod_{i=1}^n \mathcal{A}^i$ be the joint action space of all the agents. To each state and action pair, every agent receives a finite reward from the local reward function ${R}^i: \mathcal{S} \times \mathcal{A} \rightarrow \mathbb{R}$. Note that the reward is private information of the agent and it is not known to other agents. The state transition probability of MDP is given by $P: \mathcal{S} \times \mathcal{A} \times \mathcal{S} \rightarrow [0,1]$. Using only local rewards and actions it is hard for any classical reinforcement learning algorithm to maximize the averaged reward determined by the joint actions of all the agents. To this end, we consider the multi-agent networked MDP given in \cite{zhang2018fully}. The multi-agent networked MDP is defined as $(\mathcal{S}, \{\mathcal{A}^i\}_{i\in N}, P, \{R^i\}_{i\in N}, \{\mathcal{G}_t\}_{t\geq 0})$, with each component described as above.  Let joint policy of all agents be denoted by $\pi:\mathcal{S} \times \mathcal{A} \rightarrow [0,1]$ satisfying $\pi(s,a) = \prod_{i\in N} \pi^i(s,a^i)$. 
Let $a_t = (a^1_t,\dots, a^n_t)$ be the action taken by all the agents at time $t$. Depending on the action $a^i_t$ taken by agent $i$ at time  $t$, the agent receives a random reward $r^i_{t+1}$ with the expected value $R^i(s_t,a_t)$. Moreover, with probability $P(s_{t+1}| s_t,a_t)$ the multi-agent MDP shifts to next state $s_{t+1} \in \mathcal{S}$. 

Due to the large state and action space, it is often helpful to consider the parameterized policies \cite{grondman2012survey,sutton2018reinforcement}. 
We parameterize the local policy, $\pi^i(\cdot, \cdot)$ by $\theta^i\in \Theta^i  \subseteq \mathbb{R}^{m_i}$,  where $\Theta^i$ is the compact set. To find the global policy parameters we can pack all the local policy parameters as $\theta = [(\theta^1)^{\top}, \dots, (\theta^n)^{\top}]^{\top}\in \Theta \subseteq  \mathbb{R}^m$, where $\Theta := \prod_{i\in N}\Theta^i$, and $m=\sum_{i=1}^n m_i$. The parameterized joint policy is then given by $\pi_{\theta}(s,a) = \prod_{i\in N}\pi^i_{\theta^i}(s,a^i)$. The objective of the agents is to collectively find a joint policy $\pi_{\theta}$ that maximizes the averaged long-term return, provided each agent has local information only. 
For a given policy parameter $\theta$, let the globally averaged long-term return be denoted by $J(\theta)$ and defined as 
\begin{equation*}
\label{eqn: objective_fn}
J(\theta) = lim_{T\rightarrow \infty}~ \frac{1}{T} \mathbb{E} \left( \sum_{t=0}^{T-1} \frac{1}{n} \sum_{i\in {N}} r^i_{t+1} \right)
= \sum_{s\in \mathcal{S}} d_{\theta}(s) \sum_{a\in \mathcal{A}}  \pi_{\theta}(s,a) \bar{R}(s,a),
\end{equation*}
where $\bar{R}(s,a) = \frac{1}{n} \sum_{i\in {N}} R^i(s,a)$ is the globally averaged reward function. Let $\bar{r}_t = \frac{1}{n}\sum_{i\in {N}}r^i_t$. Thus $\bar{R}(s,a) = \mathbb{E}[\bar{r}_{t+1}|s_t=s,a_t=a]$.
Therefore, the joint objective of the agents is to solve the following optimization problem 
\begin{equation}
\label{eqn: objective}
max_{\theta} ~ J(\theta) = max_{\theta} ~ \sum_{s\in \mathcal{S}} d_{\theta}(s) \sum_{a\in \mathcal{A}}  \pi_{\theta}(s,a) \bar{R}(s,a).
\end{equation}
% where $\bar{R}(s,a) = \frac{1}{n} \sum_{i\in {N}} R^i(s,a)$ is the globally averaged reward function. Let $\bar{r}_t = \frac{1}{n}\sum_{i\in {N}}r^i_t$. Thus $\bar{R}(s,a) = \mathbb{E}[\bar{r}_{t+1}|s_t=s,a_t=a]$.
Like single-agent RL \cite{bhatnagar2009natural}, we also require the following regularity assumption on networked multi-agent MDP and parameterized policies. 
\begin{assumption}
% [Regularity of Networked MDP]
\label{ass: regularity}
For each agent $i\in N$, the local policy function $\pi^i_{\theta^i}(s,a^i)>0$ for any $s \in \mathcal{S}, a^i\in \mathcal{A}^i$ and $\theta^i \in \Theta^i$. Also $\pi^i_{\theta^i}(s,a^i)$ is continuously differentiable with respect to parameters $\theta^i$ over $\Theta^i$. Moreover, for any $\theta \in \Theta$, $P^{\theta}$ is the transition matrix for the Markov chain $\{s_t\}_{t\geq 0}$ induced by policy $\pi_{\theta}$, that is, for any $s,s^{\prime} \in \mathcal{S}$,
\begin{equation*}
\label{eqn: P^theta}
    P^{\theta}(s^{\prime}|s) = \sum_{a\in \mathcal{A}} \pi_{\theta}(s,a) P(s^{\prime}|s,a).
\end{equation*}
Furthermore, the Markov chain $\{s_t\}_{t\geq 0}$ is assumed to be ergodic under $\pi_{\theta}$ with stationary distribution $d_{\theta}(s)$ over $\mathcal{S}$. 
\end{assumption}

The regularity assumption X. \ref{ass: regularity} on a multi-agent networked MDP is standard in the work of single agent actor-critic algorithms with function approximations \cite{konda2000actor,bhatnagar2009natural}. The continuous differentiability of policy $\pi_{\theta}(\cdot, \cdot)$ with respect to $\theta$ is required in policy gradient theorem \cite{sutton2018reinforcement}, and it is commonly satisfied by well-known class of functions such as neural networks or deep neural networks. Moreover, assumption X. \ref{ass: regularity} also implies that the Markov chain $\{(s_t,a_t)\}_{t\geq 0}$ has stationary distribution $\tilde{d}_{\theta}(s,a) =  d_{\theta}(s) \cdot \pi_{\theta}(s,a)$ for any $s \in \mathcal{S}, a\in \mathcal{A}$.

Based on the objective function given in Equation (\ref{eqn: objective}) the  global state-action value function associated with state-action pair $(s,a)$ for a given policy $\pi_{\theta}$ is defined as
\begin{equation}
\label{eqn: Global_Q_fnction}
Q_{\theta}(s,a) = \sum_{t\geq 0} \mathbb{E}[\bar{r}_{t+1}-J(\theta) | s_0 =s, a_0=a, \pi_{\theta}].
\end{equation}
Note that the global state-action value function, $Q_{\theta}(s,a)$ given in Equation (\ref{eqn: Global_Q_fnction}) is motivated from the gain and bias relation for the average reward criteria of the single agent MDP as given in say, Section 8.2.1 in \cite{puterman2014markov}. It captures the expected sum of fluctuations of the global rewards about the globally averaged objective function (`average adjusted sum of rewards'  \cite{mahadevan1996average}) when action $a = (a^1,a^2, \dots, a^n)$ is taken in state $s\in \mathcal{S}$ at time $t=0$,  and thereafter the policy $\pi_{\theta}$ is followed. 
% Above equation is also motivated from the gain and bias relation for the average criteria of single agent MDP as given in say, Section 8.2.1 in \cite{puterman2014markov}.
%\textcolor{black}{this Q fn may be motivated the Bellman/DP eqn; in the average case, such eqns have a bias and gain terms -- Puterman interprets them, via, graphs also for {\em MDPs}. we can try those explanations here, if they make sense.}
%\textcolor{black}{not clear about the `cumulative return'} 
Similarly, the global state value function is defined as
\begin{equation}
\label{eqn: global_V_fn}
V_{\theta}(s) = \sum_{a\in \mathcal{A}} \pi_{\theta}(s,a) \cdot Q_{\theta}(s,a).
\end{equation}
We now state the policy gradient theorem for MARL setup \cite{zhang2018fully}. To this end, define the global advantage function  $A_{\theta}(s,a) := Q_{\theta}(s,a) - V_{\theta}(s)$.
\textcolor{black}{Note that the advantage function captures the benefit of taking action $a$ in state $s$ and thereafter following the policy $\pi_{\theta}$ over the case when policy $\pi_{\theta}$ is followed from state $s$ itself.}
For the multi-agent setup, we define the local advantage function $A^i_{\theta}: \mathcal{S} \times \mathcal{A} \rightarrow \mathbb{R}$ for each agent $i\in N$ as $A^i_{\theta}(s,a) := Q_{\theta}(s,a) - \widetilde{V}^i_{\theta}(s,a^{-i})$, where $\widetilde{V}^i_{\theta}(s,a^{-i}) := \sum_{a^i\in A^i} \pi^i_{\theta^i} (s,a^i) Q_{\theta}(s,a^i,a^{-i})$. Note that $\widetilde{V}^i_{\theta}(s,a^{-i})$ represents the value of state $s$ to an agent $i\in N$ when policy, $\pi(\cdot,\cdot)$ is parameterized by $\theta  = [(\theta^1)^{\top}, (\theta^2)^{\top}, \dots, (\theta^n)^{\top}]$, and all other agents are taking action $a^{-i} = (a^1, \dots, a^{i-1}, a^{i+1}, \dots, a^n)$.

% \textcolor{black}{both the weight vectors of policy and advantage approx are related as in Eq 3 and policy gradient theorem in the paper by Rick Sutton et al -- to discuss }

\begin{theorem}[Policy gradient theorem for MARL \cite{zhang2018fully}]
\label{thm: pgt}
Under assumption X. \ref{ass: regularity}, for any $\theta \in \Theta$, and each agent $i\in N$, the gradient of $J(\theta)$ with respect to $\theta^i$ is given by 
\begin{equation*}
\label{eqn: pgt_grad_J}
\begin{aligned}
    \nabla_{\theta^i}J(\theta) &=& \mathbb{E}_{s\sim d_{\theta}, a\sim \pi_{\theta}}[\nabla_{\theta^i} \log \pi^{i}_{\theta^i}(s,a^i) \cdot A_{\theta}(s,a)]
    \\
    &=& \mathbb{E}_{s\sim d_{\theta}, a\sim \pi_{\theta}}[\nabla_{\theta^i} \log \pi^{i}_{\theta^i}(s,a^i) \cdot A^i_{\theta}(s,a)].
\end{aligned}
\end{equation*}
\end{theorem}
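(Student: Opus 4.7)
The plan is to mirror the classical policy gradient derivation for the average-reward single-agent setting and then exploit the product structure of the joint policy $\pi_\theta(s,a) = \prod_{j \in N} \pi^j_{\theta^j}(s,a^j)$ to factor out agent $i$. Concretely, from Equation (\ref{eqn: Global_Q_fnction}) one obtains the average-reward Bellman identity $Q_\theta(s,a) = \bar R(s,a) - J(\theta) + \sum_{s'} P(s'|s,a) V_\theta(s')$, which combined with Equation (\ref{eqn: global_V_fn}) yields
\begin{equation*}
\nabla_{\theta^i} V_\theta(s) = \sum_{a} [\nabla_{\theta^i} \pi_\theta(s,a)]\, Q_\theta(s,a) - \nabla_{\theta^i} J(\theta) + \sum_{s'} P^\theta(s'|s)\, \nabla_{\theta^i} V_\theta(s').
\end{equation*}
Rearranging and averaging both sides against the stationary distribution $d_\theta$ makes the transition term telescope (since $\sum_s d_\theta(s) P^\theta(s'|s) = d_\theta(s')$ by ergodicity from Assumption X.\ref{ass: regularity}), leaving
\begin{equation*}
\nabla_{\theta^i} J(\theta) = \sum_{s} d_\theta(s) \sum_{a} [\nabla_{\theta^i} \pi_\theta(s,a)]\, Q_\theta(s,a).
\end{equation*}

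Next I would use the product form of $\pi_\theta$: since only the factor $\pi^i_{\theta^i}$ depends on $\theta^i$, we get $\nabla_{\theta^i} \pi_\theta(s,a) = \pi_\theta(s,a) \nabla_{\theta^i} \log \pi^i_{\theta^i}(s,a^i)$. This converts the above into the expectation form
\begin{equation*}
\nabla_{\theta^i} J(\theta) = \mathbb{E}_{s \sim d_\theta,\, a \sim \pi_\theta}\bigl[\nabla_{\theta^i} \log \pi^i_{\theta^i}(s,a^i)\, Q_\theta(s,a)\bigr].
\end{equation*}

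Finally I would invoke the standard ``baseline'' trick twice to upgrade $Q_\theta$ to either advantage. For any function $b(s,a^{-i})$ that is independent of $a^i$,
\begin{equation*}
\sum_{a^i} \pi^i_{\theta^i}(s,a^i) \nabla_{\theta^i} \log \pi^i_{\theta^i}(s,a^i) = \nabla_{\theta^i} \sum_{a^i} \pi^i_{\theta^i}(s,a^i) = 0,
\end{equation*}
so subtracting $b(s,a^{-i})$ from $Q_\theta(s,a)$ inside the expectation changes nothing. Taking $b(s,a^{-i}) = V_\theta(s)$ produces the first equality (with $A_\theta = Q_\theta - V_\theta$), and taking $b(s,a^{-i}) = \widetilde V^i_\theta(s,a^{-i})$ produces the second (with $A^i_\theta = Q_\theta - \widetilde V^i_\theta$); in each case one also uses $\sum_{a^{-i}} \prod_{j \neq i} \pi^j_{\theta^j}(s,a^j) = 1$ to collapse the marginal.

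I do not expect genuine obstacles here; the derivation is standard once one has the average-reward Bellman identity. The only subtle point to check carefully is the baseline step for $\widetilde V^i_\theta$: one must verify that $\widetilde V^i_\theta(s,a^{-i})$ is indeed a function of $(s,a^{-i})$ only so that the agent-$i$ score-function identity applies after marginalizing $a^i$; this follows directly from its definition as $\sum_{a^i} \pi^i_{\theta^i}(s,a^i) Q_\theta(s,a^i,a^{-i})$.
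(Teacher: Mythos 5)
Your proposal is correct and follows essentially the same route as the paper's proof: exploit the product structure $\pi_\theta(s,a)=\prod_j \pi^j_{\theta^j}(s,a^j)$ so that $\nabla_{\theta^i}\log\pi_\theta = \nabla_{\theta^i}\log\pi^i_{\theta^i}$, then use the score-function identity $\sum_{a^i}\nabla_{\theta^i}\pi^i_{\theta^i}(s,a^i)=0$ to subtract the two baselines $V_\theta(s)$ and $\widetilde V^i_\theta(s,a^{-i})$. The only difference is that you re-derive the average-reward policy gradient identity from the Bellman equation and the stationarity telescoping, whereas the paper simply cites the single-agent result of Sutton et al.; your version is a bit more self-contained but otherwise identical in substance.
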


% \textcolor{black}{probably we don't need boundedness of the second derivative of $J$ as in their final theorem, for SA to converge toa local optima?? the later SA theory dropped this, looks like?}

%\textcolor{black}{should we just write details of policy gradient theorem when natural gradients are used? } 
The proof of this theorem is available in \cite{zhang2018fully}. For the sake of completeness we provide the proof in Appendix \ref{app: pgt}. \textcolor{black}{The idea of the proof is as follows: We first recall the policy gradient theorem for single agent. Now using the fact that for multi-agent case the global policy is product of  local policies, i.e., $\pi_{\theta}(s,a) = \prod_{i=1}^n \pi^i_{\theta^i}(s,a^i)$, and $\sum_{a^i\in \mathcal{A}^i} \pi^i_{\theta^i} (s,a^i) = 1$, hence $\nabla_{\theta^i} \left[\sum_{a^i\in \mathcal{A}^i} \pi^i_{\theta^i} (s,a^i)\right] = 0$, we show $\nabla_{\theta^i} J(\theta) = \mathbb{E}_{s\sim d_{\theta},a \sim \pi_{\theta}}[\nabla_{\theta^i} \log \pi^i_{\theta^i}(s,a^i) \cdot Q_{\theta}(s,a)] $. Now, observe that adding/subtracting any function $\Lambda$ that is independent of the action $a^i$ taken by agent $i\in N$ to $Q_{\theta}(s,a)$ doesn't make any difference in the above expected value. In particular, considering two such $\Lambda$ functions $V_{\theta}(s)$ and $\tilde{V}_{\theta}^i(s,a^{-i})$, we have desired results. }

We refer to $\psi^i(s,a^i) := \nabla_{\theta^i} \log \pi^{i}_{\theta^i} (s,a^i)$ as the score function. We will see in Section \ref{subsec: manac2} that the same score function is called the compatible features. This is because the above policy gradient theorem with function approximations require the compatibility condition (Theorem 2 \cite{sutton1999policy}).
The policy gradient theorem for MARL relates the gradient of the global objective function w.r.t. $\theta^i$ and the local advantage function $A^i_{\theta}(\cdot, \cdot)$. It also suggests that the global objective function's gradients can be obtained solely using the local score function, $\psi^i(s,a^i)$ if agent $i\in N$ has an unbiased estimate of the advantage functions $A^i_{\theta}$ or $A_{\theta}$. However, estimating the advantage function requires the rewards $r^i_t$ of \textit{all} the agents $i\in N$; therefore, these functions cannot be well estimated by any agent $i\in N$ alone. 
To this end, \cite{zhang2018fully} have proposed two fully decentralized actor-critic algorithms based on the consensus network. These algorithms work in a fully decentralized fashion and empirically achieve the same performance as a centralized algorithm in the long run. 
% \sout{However, \textcolor{black}{We use algor2 ...; drop `however'} we are using .}
We use algorithm 2 of \cite{zhang2018fully} which we are calling as multi-agent actor-critic (MAAC) algorithm. 
%\textcolor{black}{that performance is same is shown by them?} 

In the fully decentralized setup, we consider the weight matrix $C_t = [c_t(i,j)]$, depending on the network topology of communication network $\mathcal{G}_t$. Here $c_t(i,j)$ represents the weight of the message transmitted from agent $i$ to agent $j$ at time $t$. At any time $t$, the local parameters are updated by each agent using this weight matrix. For generality, we take the weight matrix $C_t$ to be random. This is either because $\mathcal{G}_t$ is a time-varying graph or the randomness in the consensus algorithm \cite{boyd2006randomized}. The weight matrix satisfies the following assumptions \cite{zhang2018fully}.
\begin{assumption}
\label{ass: comm_matrix}
The sequence of non-negative random matrices $\{C_t\}_{t \geq 0} \subseteq \mathbb{R}^{n\times n}$ satisfy the following:
\begin{enumerate}
    \item $C_t$ is row stochastic, i.e., $C_t \mathbbm{1} = \mathbbm{1}$. Moreover, $\mathbb{E}(C_t)$ is column stochastic, i.e., $\mathbbm{1}^{\top}\mathbb{E}(C_t)=\mathbbm{1}^{\top}$. Furthermore, there exists a constant $\gamma \in (0,1)$ such that for any $c_t(i,j)>0$, we have $c_t(i,j)\geq \gamma$.
    \item Weight matrix $C_t$ respects $\mathcal{G}_t$, i.e., $c_t(i,j) = 0$ if $(i,j)\notin \mathcal{E}_t$. 
    \item The spectral norm of $\mathbb{E}[C_t^{\top} (I - \mathbbm{1}\mathbbm{1}^{\top}/n)C_t]$ is smaller than one. 
    \item Given the $\sigma$-algebra generated by the random variables before time $t$, $C_t$ is conditionally independent of $r^i_{t+1}$ for each $i\in {N}$.
\end{enumerate}
\end{assumption}

Assumption X. \ref{ass: comm_matrix}(1) of considering a doubly stochastic matrix is standard in the work of consensus-based algorithms \cite{mathkar2016nonlinear}. It is often helpful in the convergence of the update to a common vector \cite{bianchi2013performance}. To prove the stability of the consensus update (see Appendix A of \cite{zhang2018fully} for detailed proof), we require the lower bound on the weights of the matrix \cite{nedic2009distributed}. 
Assumption X. \ref{ass: comm_matrix}(2) is required for the connectivity of $\mathcal{G}_t$. To provide the geometric convergence in distributed optimization, authors in \cite{nedic2017achieving} provide the connection between the time-varying network and the spectral norm property. The same connection is required for convergence in our work also. To this end, we have assumption X. \ref{ass: comm_matrix}(3) above. Assumption X. \ref{ass: comm_matrix}(4) on the conditional independence of $C_t$ and $r_{t+1}$ is common in many practical multi-agent systems. The Metropolis matrix \cite{xiao2005scheme} given below satisfy all the assumptions. It is defined only based on the local information of the agents. Let $N_t(i):= \{j\in N:(i,j)\in \mathcal{E}_t\}$ be set of neighbors of agent $i\in N$ at time $t$ in the weight matrix $C_t$, and $d_t(i) = |N_t(i)|$ is the degree of agent $i\in N$. The weights $c_t(i,j)$ in the Metropolis matrix are given by
\begin{eqnarray*}
    c_t(i,j) &=& \frac{1}{1+max\{d_t(i),d_t(j)\}}, ~~ \forall ~ (i,j)\in \mathcal{E}_t
    \\
    c_t(i,i)  &=& 1- \sum_{j\in N_t(i)} c_t(i,j),~~ \forall~ i\in N.
\end{eqnarray*} 
Now, we will outline the actor-critic algorithm using linear function approximations in a fully decentralized setting.
The actor-critic algorithm consists of two steps -- critic step and actor step. At each time $t$, the actor suggests a policy parameter $\theta_t$. The critic evaluates its value using the policy parameters and criticizes or gives the feedback to the actor. Using this feedback from critic, the actor then update the policy parameters, and this continues until convergence. Let the global state value temporal difference (TD) error be defined as $\bar{\delta}_t = \bar{r}_{t+1} - J(\theta) + V_{\theta}(s_{t+1}) - V_{\theta}(s_t)$.
% It captures the difference in the actual reward obtained, and the reward estimates. 
It is known that the state value temporal difference error is an unbiased estimate of the advantage function $A_{\theta}$  \cite{sutton2018reinforcement}, i.e.,
% \textcolor{black}{some (other) motivation/discussion for TD error??}
\begin{equation}
    \mathbb{E}[\bar{\delta}_t~|~ s_t=s, a_t =a, \pi_{\theta}] = A_{\theta}(s,a),~~ \forall~ s\in \mathcal{S},~a\in \mathcal{A}. 
\label{eqn: td_unbiased_advantage}
\end{equation}
The TD error specifies how different the new value is from the old prediction. Often in many applications \cite{corke2005networked,dall2013distributed} the state space is either large or infinite.
%\sout{Therefore, exact state value is not available.} 
To this end, we use the linear function approximations for state value function. Later on, we also use the linear function approximation for the advantage function in Section \ref{subsec: manac2}. Let the state value function $V_{\theta}(s)$ be approximated using the linear function as $ V_{\theta}(s; v) := v^{\top}\varphi(s)$, where $\varphi(s) = [\varphi_1(s), \dots, \varphi_L(s)]^{\top} \in \mathbb{R}^L$ is the feature associated with state $s$, and $v\in \mathbb{R}^L$. 
% \textcolor{black}{why $V_\theta$ when the basis function is $\varphi$?}
Note that $L << |\mathcal{S}|$, hence the value function is approximated using very small number of features. Moreover, let $\mu^i_t$ be the estimate of the global objective function $J(\theta)$ by agent $i\in N$ at time $t$. Note that $\mu^i_t$ tracks the long-term return to each agent $i \in N$. The MAAC algorithm is based on the consensus network (details in Appendix \ref{app: MARL_algo2}) and consists of the following updates for objective function estimate and the critic parameters
\begin{eqnarray}
\label{eqn: marl_obj_updates}
\tilde{\mu}^i_t = (1-\beta_{v,t}) \cdot \mu^i_t + \beta_{v,t} \cdot r^i_{t+1}; ~~\mu^i_{t+1} =  \sum_{j\in {N}} c_t(i,j) \tilde{\mu}^j_t
\\
\tilde{v}^i_t = v^i_t + \beta_{v,t} \cdot \delta^i_t \cdot \nabla_v V_t(v^i_t); ~~ v^i_{t+1} = \sum_{j\in {N}}c_t(i,j)\tilde{v}^j_t,
\label{eqn: marl_critic_updates}
\end{eqnarray}
where $\beta_{v,t}> 0$ is the critic step-size and $\delta^i_t = r^i_{t+1} - \mu^i_t + V_{t+1}(v^i_t) - V_{t}(v^i_t)$ is the local TD error. \textcolor{black}{Here $V_{t+1}(v^i_t) := v_t^{i^{\top}} \varphi(s_{t+1})$, and hence $V_{t+1}(v^i_t) = V_{\theta}(s_{t+1}; v^i_t)$. It is a linear function approximation of the state value function, $V_{\theta}(s_{t+1})$ by agent $i\in N$.}
Note that the estimate of the advantage function as given in Equation (\ref{eqn: td_unbiased_advantage}) require $\bar{r}_{t+1}$ which is not available to each agent $i\in N$. Therefore, we parameterize the reward function $\bar{R}(s,a)$ used in the critic update as well. 
% The first algorithm is based on the local advantage function $A^i_{\theta}$ which requires estimating the action-value function ${Q}_{\theta}$ of policy $\pi_{\theta}$. We assume that each agent maintains its own parameter $w^i$ and uses ${Q}(\cdot,\cdot,w)$ as a local estimate of ${Q}_{\theta}$. This $w^i$ is also shared by each agent to its neighbours over the network in order to reach a consensual estimate of ${Q}_{\theta}$. Note that for online implementation of Algorithm \ref{alg: MARL-algo1}, the joint actions $a_{t+1}$ is needed to evaluate the action-value TD-error $\delta_t^i$. The detailed algorithm is given in Appendix \ref{sup: MARL_algo1}
% Since the advantage function $A_{\theta}$ is an unbiased estimate of $\bar{\delta}_t$, thus $r^i_{t+1}$ can't be used to estimate it, because it requires $\bar{r}_{t+1}$, and hence $\delta^i_t$ cannot be used for the estimation. Thus, a globally averaged reward function $\bar{R}$ is used in the critic update as well. 
Let $\bar{R}(s,a)$ be approximated using a linear function as $\bar{R}(s,a;\lambda) = \lambda^{\top}f(s,a)$, where $f(s,a) = [f_1(s,a), \dots, f_M(s,a)]^{\top} \in \mathbb{R}^M, ~M << |\mathcal{S}||\mathcal{A}|$ are the features associated with state action pair $(s,a)$. To obtain the estimate of $\bar{R}(s,a)$ we use the following least square minimization
\begin{equation*}
\label{eqn: min_reward_parameter}
    min_{\lambda} ~ \sum_{s\in \mathcal{S}, a\in \mathcal{A}} \tilde{d}_{\theta}(s,a) [ \bar{R}(s,a) - \bar{R}(s,a; \lambda)]^2,
\end{equation*}
where $\bar{R}(s,a) := \frac{1}{n}\sum_{i\in {N}} R^i(s,a)$, and $\tilde{d}_{\theta}(s,a) = d_{\theta}(s) \cdot \pi_{\theta}(s,a)$.
The above optimization problem can be \textit{equivalently characterized} as
\begin{equation*}
min_{\lambda} ~ \sum_{i\in {N}} \sum_{s\in \mathcal{S}, a\in \mathcal{A}} \tilde{d}_{\theta}(s,a) [ {R}^i(s,a) - \bar{R}(s,a; \lambda) ]^2,
\end{equation*}
as both the optimization problems have the same stationary points. For more details see Appendix \ref{app: derivative_reward_optimizations}. 

Taking first order derivative with respect to $\lambda$ implies that we should also do the following as a part of critic update:
\begin{equation}
    \tilde{\lambda}^i_t = \lambda^i_t + \beta_{v,t} \cdot [r^i_{t+1} - \bar{R}_t(\lambda^i_t)] \cdot \nabla_{\lambda}\bar{R}_t(\lambda^i_t); ~~
    \lambda^i_{t+1} = \sum_{j\in {N}} c_t(i,j) \tilde{\lambda}^j_t,
\label{eqn: marl_lmbda_update}
\end{equation}
where $\bar{R}_t(\lambda^i_t)$ is the linear function approximation of the global reward $\bar{R}_t(s_t,a_t)$ by agent $i\in N$ at time $t$, i.e., $\bar{R}_t(\lambda^i_t) = \lambda_t^{i^{\top}} f(s_t,a_t)$.
The TD error with parameterized reward, $\bar{R}_t(\cdot)$ is given by $\tilde{\delta}^i_t := \bar{R}_{t}(\lambda^i_t) - \mu^i_t + V_{t+1}(v^i_t) - V_{t}(v^i_t)$. 
\textcolor{black}{Note that each agent $i\in N$ knows its local reward function $r^i_{t+1}(s_t, a_t)$, but at the same time he/she also seeks to get some information about the global reward, $\bar{r}_{t+1}(s_t, a_t)$ because the objective is to maximize the globally averaged reward function. Therefore, in above Equation (\ref{eqn: marl_lmbda_update}), each agent $i\in N$ uses  $\bar{R}_t(\lambda^i_t)$ as an estimate of the global reward function.} Each agent $i\in N$ then updates the policy/actor parameters as 
%\textcolor{black}{it seeks because ...? }
\begin{eqnarray}
    \theta^i_{t+1} &=& \theta^i_t + \beta_{\theta,t} \cdot \tilde{\delta}^i_t \cdot \psi^i_t,
\label{eqn: marl_actor_update}
\end{eqnarray}
where $\beta_{\theta,t}>0$ is the actor step size. Note that we have used $\tilde{\delta}^i_t \cdot \psi^i_t$ instead of $\nabla_{\theta^i} J(\theta)$ in the actor update. However, $\tilde{\delta}^i_t  \cdot \psi^i_t$ may not be an unbiased estimate of the gradient of objective function $\nabla_{\theta^i} J(\theta)$. This follows from the policy gradient Theorem for $\nabla_{\theta^i} J(\theta)$; consider the following difference 
\begin{eqnarray*}
    & & \mathbb{E}_{s_t\sim d_{\theta}, a_t \sim \pi_{\theta}}[\tilde{\delta}^i_{t} \cdot \psi^i_{t}] - \nabla_{\theta^i} J(\theta) 
    \\
    &=&  \mathbb{E}_{s_t\sim d_{\theta}, a_t \sim \pi_{\theta}}[(\bar{R}_{t}(\lambda^i_t) - \mu^i_t + V_{t+1}(v^i_t) - V_{t}(v^i_t))\cdot \psi^i_t] - \mathbb{E}_{s_t\sim d_{\theta}, a_t \sim \pi_{\theta}}[ (\bar{R}(s_t,a_t) - \mu^i_t  +V_{t+1}(v^i_t) - V(s_t)) \cdot \psi^i_t]
    \\
    &=& \mathbb{E}_{s_t\sim d_{\theta}, a_t \sim \pi_{\theta}}[(\bar{R}_{t}(\lambda^i_t) - \bar{R}(s_t,a_t)) \cdot \psi^i_{t} ] + \mathbb{E}_{s_t \sim d_{\theta}}[(V_{\theta}(s_t) - V_{t}(v^i_t)) \cdot  \psi^i_{t} ]
    % \\
    % &=& \mathbb{E}_{s_t\sim d_{\theta}, a_t \sim \pi_{\theta}}[(\bar{R}_{t}(\lambda^i_t) - \bar{R}(s_t,a_t)) \psi^i_{t,\theta}] + \mathbb{E}_{s_t \sim d_{\theta}}[(V_{\theta}(s_t) - \varphi_t^{\top} v_{\theta})  \psi^i_{t,\theta}]
\end{eqnarray*}
This implies,
% \begin{equation}
%     \mathbb{E}[\tilde{\delta}^i_{t,\theta} \psi^i_{t,\theta}] = \nabla_{\theta^i} J(\theta) + \underbrace{\mathbb{E}_{s_t\sim d_{\theta}, a_t \sim \pi_{\theta}}[(\bar{R}_{t}(\lambda^i_t) - \bar{R}(s_t,a_t)) \psi^i_{t,\theta}] + \mathbb{E}_{s_t \sim d_{\theta}}[(V_{\theta}(s_t) - V_{t}(v^i_t)))  \psi^i_{t,\theta}]}_{bias}
% \end{equation}
\begin{equation}
\label{eqn: biased_estimate_J}
    \mathbb{E}_{s_t\sim d_{\theta}, a_t \sim \pi_{\theta}}[\tilde{\delta}^i_t \cdot \psi^i_t ] = \nabla_{\theta^i} J(\theta) + b ~~ where,
\end{equation}
$b = \mathbb{E}_{s_t\sim d_{\theta}, a_t \sim \pi_{\theta}}[(\bar{R}_{t}(\lambda^i_t) - \bar{R}(s_t,a_t)) \cdot \psi^i_{t} ] + \mathbb{E}_{s_t \sim d_{\theta}}[(V_{\theta}(s_t) - V_{t}(v^i_t)) \cdot  \psi^i_{t} ] $ is the bias term. \textcolor{black}{The bias captures the sum of the expected linear approximation errors in the reward and value functions. If these approximation errors are small, the  convergence point of the ODE  corresponding to the actor update (as given in Section \ref{sec: Convergence_analysis}) is close to the local optima of $J(\theta)$. %\textcolor{black}{
In fact, in Section \ref{sec: Convergence_analysis}, we show that the actor parameters converge to asymptotically stable equilibrium set of ODEs corresponding to the actor updates, hence possibly nullifying the bias.}
% Therefore, while doing the asymptotic analysis we ignore the above bias term, and hence $\mathbb{E}_{s_t\sim d_{\theta}, a_t \sim \pi_{\theta}}[\tilde{\delta}^i_t \cdot \psi^i_t] \approx \nabla_{\theta^i} J(\theta)$.
To prove the convergence of actor-critic algorithm we require the following conditions on the step sizes $\beta_{v,t},\beta_{\theta,t}$
\begin{equation}
\label{eqn: robbins_monro_conds}
(a)~\sum_{t} \beta_{v,t} = \sum_{t} \beta_{\theta,t} = \infty
; ~~~(b)~\sum_t \left(\beta_{v,t}^2 + \beta_{\theta,t}^2 \right) <\infty,
\end{equation}
moreover, $\beta_{\theta,t} = o(\beta_{v,t})$, and $lim_{t} \frac{\beta_{v,t+1}}{\beta_{v,t}} = 1$, i.e., critic update is made at the faster time scale than the actor update. Condition in (a) ensures that the discrete time steps $\beta_{v,t}, \beta_{\theta,t}$ used in the critic and actor steps do cover the entire time axis while retaining $\beta_{v,t}, \beta_{\theta,t} \rightarrow 0$. We also require the error due to the estimates used in critic and the actor updates are asymptotically negligible almost surely. So, condition in (b) asymptotically suppresses the variance in the estimates \cite{borkar2009book}; see \cite{thoppe2019concentration} for some recent developments that do away with this requirement.

The multi-agent actor-critic scheme given in the MAAC algorithm uses standard (or vanilla) gradients. However, they are most useful for reward functions that have single optima and whose gradients are isotropic in magnitude for any direction away from its optimum \cite{amari1998natural}. None of these properties are valid in typical reinforcement learning environments. Apart from this, the performance of standard gradient-based reinforcement learning algorithms depends on the coordinate system used to define the objective function. It is one of the most significant drawbacks of the standard gradient \cite{kakade2001natural,bagnell2003covariant}.

Moreover, in many applications such as robotics, the state space contains angles, so the state space has manifolds (curvatures). The objective function will then be defined in that curved space, making the policy gradients methods inefficient. We thus require a method that incorporates the knowledge about curvature of the space into the gradient. The natural gradients are the most ``natural" choices in such cases. 
The following section describes the natural gradients and uses them along with the policy gradient theorem in the multi-agent setup. 
% \textcolor{black}{we also  investigate the relations among these algorithms and their effect on the quality of the local optima attained by them}.

\subsection{Natural gradients and the Fisher information matrix}
\label{subsec: natural_gradients_fisher}
Recall, the objective function, $J(\cdot)$ given in Equation (\ref{eqn: objective}) is parameterized by $\theta$. For the single agent actor-critic methods involving natural gradients, $\widetilde{\nabla} J(\theta)$, authors in \cite{peters2003reinforcement,bhatnagar2009natural} have defined it via Fisher information matrix, $G(\theta)$ and standard gradients, $\nabla J(\theta)$ as
\begin{equation}
\label{eqn: natral_gradient_new}
    \widetilde{\nabla}_{\theta} J(\theta) = G(\theta)^{-1}  \nabla_{\theta} J(\theta),
\end{equation}
where  $G(\theta)$ is a positive definite matrix defined as 
\begin{equation}
\label{eqn: fisher_defn}
    G(\theta) := \mathbb{E}_{s \sim d_{\theta}, a \sim \pi_{\theta}} [\nabla_{\theta} \log \pi_{\theta}(s,a)\nabla_{\theta} \log \pi_{\theta}(s,a)^{\top}].
\end{equation}
The above Fisher information matrix is the covariance of the score function. It can also be interpreted  via KL divergence\footnote{\url{https://towardsdatascience.com/natural-gradient-ce454b3dcdfa}} between the policy $\pi(\cdot,\cdot)$ parameterize at $\theta$ and $\theta + \Delta \theta$ as below \cite{martens2020new,ratliff2013information},
\begin{equation}
\label{eqn: approx_kl}
KL(\pi_{\theta}(\cdot,\cdot)|| \pi_{\theta + \Delta \theta}(\cdot,\cdot)) \approx \frac{1}{2} \Delta \theta^{\top} \cdot G(\theta) \cdot \Delta \theta. 
\end{equation}
The above expression is obtained from the second-order Taylor expansion of $\log \pi_{\theta + \Delta \theta}(s, a)$, and using the fact that the sum of the probabilities is one. In above, the right-hand term is a quadratic involving positive definite matrix $G(\theta)$, and hence $G(\theta)$ approximately captures the curvature of the KL divergence between policy distributions at $\theta$ and $\theta + \Delta \theta$.

We now relate the Fisher information matrix, $G(\theta)$ to the objective function, $J(\theta)$. 

\begin{lemma}
\label{lemma: kl_grad_J}
The gradient of the KL divergence between two consecutive policies is approximately proportional to the gradient of the objective function, i.e.,
% up to $\rho_t$ scaling, i.e., a scaling of $- \frac{1}{\rho_t}$
\begin{equation*}
% \label{eqn: grad_kl_relation_grad_j}
\nabla KL(\pi_{\theta_t}(\cdot, \cdot) || \pi_{\theta_t + \Delta \theta_t}(\cdot, \cdot)) \propto  \nabla J(\theta_t).
\end{equation*}
\end{lemma}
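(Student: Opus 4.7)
The plan is to start from the quadratic approximation of the KL divergence in Equation (\ref{eqn: approx_kl}), namely $KL(\pi_{\theta_t}(\cdot,\cdot) \| \pi_{\theta_t + \Delta\theta_t}(\cdot,\cdot)) \approx \tfrac{1}{2} \Delta\theta_t^{\top} G(\theta_t) \Delta\theta_t$, and differentiate this quadratic form with respect to the displacement $\Delta\theta_t$. Since $G(\theta_t)$ is symmetric positive definite (being a covariance of scores as defined in Equation (\ref{eqn: fisher_defn})), the gradient of the right-hand side is simply $G(\theta_t) \Delta\theta_t$ up to the approximation error inherited from the second-order Taylor expansion.

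Next I would invoke the fact that $\Delta\theta_t$ in our setting is not arbitrary: the actor step is taken along the natural gradient direction, so $\Delta\theta_t = \beta_{\theta,t} \widetilde{\nabla}_\theta J(\theta_t)$ for the actor step-size $\beta_{\theta,t}>0$. Substituting the definition of the natural gradient from Equation (\ref{eqn: natral_gradient_new}), $\widetilde{\nabla}_\theta J(\theta_t) = G(\theta_t)^{-1} \nabla_\theta J(\theta_t)$, one obtains
\begin{equation*}
\nabla KL\bigl(\pi_{\theta_t}(\cdot,\cdot) \| \pi_{\theta_t + \Delta\theta_t}(\cdot,\cdot)\bigr) \;\approx\; G(\theta_t)\, \Delta\theta_t \;=\; \beta_{\theta,t}\, G(\theta_t)\, G(\theta_t)^{-1} \nabla_\theta J(\theta_t) \;=\; \beta_{\theta,t}\, \nabla_\theta J(\theta_t),
\end{equation*}
so the proportionality constant is precisely the actor step size $\beta_{\theta,t}$, and the relation $\nabla KL \propto \nabla J(\theta_t)$ follows.

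The delicate point, and the one I would spend the most care on, is clarifying what is meant by ``gradient of the KL divergence'' and in which variable it is taken. The natural reading is the gradient with respect to the perturbation $\Delta\theta_t$ evaluated at the update direction prescribed by the algorithm; this is the only reading in which a clean proportionality to $\nabla_\theta J(\theta_t)$ arises without further curvature terms. I would also briefly justify that the higher-order Taylor remainder in Equation (\ref{eqn: approx_kl}) contributes only $O(\|\Delta\theta_t\|^2)$ to the gradient, which is absorbed in the ``$\approx$'' since $\beta_{\theta,t}\to 0$ under the Robbins--Monro conditions in Equation (\ref{eqn: robbins_monro_conds}). Once this interpretive issue is pinned down, the lemma reduces to one line of linear algebra leveraging the invertibility of $G(\theta_t)$ guaranteed by Assumption X.\ref{ass: regularity}.
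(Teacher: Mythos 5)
Your proof is correct and arrives at the same final computation as the paper --- differentiating the quadratic approximation $\frac{1}{2}\Delta\theta_t^{\top}G(\theta_t)\Delta\theta_t$ with respect to $\Delta\theta_t$ to get $G(\theta_t)\Delta\theta_t$, then cancelling $G(\theta_t)$ against $G(\theta_t)^{-1}$ --- but you reach the key substitution $\Delta\theta_t \propto G(\theta_t)^{-1}\nabla J(\theta_t)$ by a different route. The paper does not assume the update is a natural-gradient step; instead it \emph{derives} the displacement as the solution $\Delta\theta_t^{\star}$ of the trust-region problem $\max_{\Delta\theta_t} J(\theta_t+\Delta\theta_t)$ subject to $KL(\pi_{\theta_t}\|\pi_{\theta_t+\Delta\theta_t})=c$, via a Lagrangian with multiplier $\rho_t$ and a first-order expansion of $J$, obtaining $\Delta\theta_t^{\star}=-\frac{1}{\rho_t}G(\theta_t)^{-1}\nabla J(\theta_t)$ and hence the proportionality constant $-\frac{1}{\rho_t}$ rather than your $\beta_{\theta,t}$. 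What the paper's route buys is that it simultaneously \emph{justifies} the natural-gradient direction as the KL-constrained ascent direction and proves the lemma, so the argument is not predicated on already running a natural-gradient method; what your route buys is brevity and an explicit, interpretable constant (the step size). Your route does make the lemma contingent on the consecutive policies being generated by a natural-gradient update --- as you implicitly note, if $\Delta\theta_t$ were a vanilla-gradient step the gradient of the KL would be $\beta_{\theta,t}G(\theta_t)\nabla J(\theta_t)$, which is not proportional to $\nabla J(\theta_t)$ --- whereas the paper's reading ties $\Delta\theta_t$ to the KL-constrained optimum independently of any particular algorithm. Your explicit remark that the Taylor remainder contributes only $O(\|\Delta\theta_t\|^2)$ to the gradient and is absorbed by the Robbins--Monro step sizes is a point the paper leaves implicit, and is a worthwhile addition.
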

% \textcolor{black}{Lagrange multiplier $- \frac{1}{\rho_t}$ sign??   }
\begin{proof}
From Equation (\ref{eqn: approx_kl}), the KL divergence is a function of the Fisher information matrix and delta change in the policy parameters.
% Consider maximizing  the objective function keeping the KL divergence a constant. To this end, 
We find the optimal step-size $\Delta \theta_t^{\star}$ via the following optimization problem
\begin{equation*}
\begin{aligned}
\Delta \theta_t^{\star} = & ~argmax_{\Delta \theta_t}~ J(\theta_t + \Delta \theta_t)
\\
& s.t. ~~KL(\pi_{\theta_t}(\cdot,\cdot)|| \pi_{\theta_t + \Delta \theta_t}(\cdot,\cdot)) = c.
\end{aligned}
\end{equation*}
% Note that in above optimization problem, the KL divergence between $\theta$ and $\theta + \Delta \theta$ is restricted to a constant $c$. This is motivated by the fact that at each time step, the direction/curvature is fixed in any gradient descent methods and then the optimal step size is obtained to find the next sub-optimal point.
% \textcolor{black}{does `less than equal to' constraint lead to the same result??}
Writing the Lagrange $\mathcal{L}(\theta_t + \Delta \theta_t; \rho_t)$ (where $\rho_t$ is the Lagrangian multiplier)
%\sout{\textcolor{black}{$\gamma_t$ is used in Kushner-Clark lemma in App. may be we can use $\rho_t$ or $a_t$ or $c_t$??}} 
of the above optimization problem and using the first order Taylor approximation along with the approximation of the KL divergence as given in Equation (\ref{eqn: approx_kl}), we have
%\textcolor{black}{need to use another multiplier other than $\mu_t$}
\begin{eqnarray*}
	\Delta \theta_t^{\star}&=& argmax_{\Delta \theta_t}~ J(\theta_t + \Delta \theta_t)  + \rho_t (KL(\pi_{\theta_t}(\cdot,\cdot)|| \pi_{\theta_t + \Delta \theta_t}(\cdot,\cdot)) - c),
	\\
	&\approx& argmax_{\Delta \theta_t}~ J(\theta_t) + \Delta \theta_t^{\top} \nabla_{\theta} J(\theta_t)  + \frac{1}{2} \cdot \rho_t \cdot \Delta \theta_t^{\top} \cdot G(\theta_t) \cdot \Delta \theta_t   - \rho_t c.
\end{eqnarray*}
Setting the derivative (w.r.t. $\Delta \theta_t$) of above Lagrangian to zero, we have
\begin{equation}
\label{eqn: stationary_eqn}
\nabla_{\theta} J(\theta_t)  + \rho_t \cdot \Delta \theta_t^{\star{^\top}} \cdot G(\theta_t)  = 0~\implies~ \Delta \theta_t^{\star} = -\frac{1}{\rho_t} G(\theta_t)^{-1} \nabla J(\theta_t),
\end{equation}
i.e., upto the factor of $-\frac{1}{\rho_t}$, we get an optimal step-size in terms of the standard gradients and the Fisher information matrix at point $\theta_t$. Moreover, from Equations (\ref{eqn: approx_kl}) and (\ref{eqn: stationary_eqn}) we have, 
\begin{equation}
\label{eqn: KL_two_eqns}
\nabla KL(\pi_{\theta_t}(\cdot,\cdot)|| \pi_{\theta_t + \Delta \theta_t}(\cdot,\cdot)) \approx G(\theta_t) \Delta \theta_t^{\star}; ~~~ and ~~~G(\theta_t) \Delta \theta_t^{\star}   =  -\frac{1}{\rho_t} \nabla J(\theta_t).
\end{equation}
and hence, 
\begin{equation}
\label{eqn: grad_kl_relation_grad_j}
\nabla KL(\pi_{\theta_t}(\cdot,\cdot)|| \pi_{\theta_t + \Delta \theta_t}(\cdot,\cdot)) \approx - \frac{1}{\rho_t} \nabla J(\theta_t) .
\end{equation}
This ends the proof.
\end{proof}
The above lemma relates the gradient of the objective function to the gradient of KL divergence between the policies separated by $\Delta \theta_t$. 
%\textcolor{black}{It is easy to see that the gradient of the KL divergence depends on the policy used.} 
%It is approximately the same as the gradient of the objective function up to $\rho_t$ scaling. 
% However, the gradient of the objective function depends on the single-stage reward of the `policy' used. So, gradient of KL divergence and the gradient of objective function both depends on the policy used, and hence the equality is justified.} \textcolor{black}{we can avoid the last 2 sentences, `however ,,,'. it is for the  response sheet.}
\textcolor{black}{The above equation provides a valuable observation because we can adjust the updates (of actor parameter $\theta_t$) just by moving in the prediction space of the parameterized policy distributions. Thus, those MAN algorithms discussed later that rely on Fisher information matrix $G(\cdot)$ \emph{implicitly} use the above representation for $\nabla J(\cdot)$.
%\textcolor{black}{}. 
This again justifies the natural gradients as given in Equation (\ref{eqn: natral_gradient_new}). We recall these aspects in Section \ref{subsec: kl_boltzmann} for the Boltzmann policies.}

\subsection{Multi-agent natural policy gradient theorem and rank-one update of $G_{t+1}^{i^{-1}}$}

In this section, we provide the details of natural policy gradient methods and the Fisher information matrix in the multi-agent setup. Similar to Equation (\ref{eqn: natral_gradient_new}), in multi-agent setup the natural gradient of the objective function is
\begin{equation}
\label{eqn: g_theta_each_agent}
    \widetilde{\nabla}_{\theta^i} J(\theta) = G(\theta^i)^{-1}  \nabla_{\theta^i} J(\theta),~ \forall~ i\in N,
\end{equation}
where $G(\theta^i) := \mathbb{E}_{s \sim d_{\theta}, a \sim \pi_{\theta}} [\nabla_{\theta^i} \log \pi^i_{\theta^i}(s,a^i)\nabla_{\theta^i} \log \pi^i_{\theta^i}(s,a^i)^{\top}]$
is again a positive definite matrix for each agent $i\in N$. We now present the policy gradient theorem for multi-agent setup involving the natural gradients.

\begin{theorem}[Policy gradient theorem for MARL with natural gradients]
\label{thm: pgt_with_natural_gradients}
Under assumption X. \ref{ass: regularity}, the natural gradient of $J(\theta)$ with respect to $\theta^i$ for each $i\in N$ is given by 
\begin{equation*}
\label{eqn: pgt_natural_grad_J}
\begin{aligned}
    \widetilde{\nabla}_{\theta^i}J(\theta) &=& G(\theta^i)^{-1}\mathbb{E}_{s\sim d_{\theta}, a\sim \pi_{\theta}}[\nabla_{\theta^i} \log \pi^{i}_{\theta^i}(s,a^i) \cdot  A_{\theta}(s,a)]
    \\
    &=& G(\theta^i)^{-1} \mathbb{E}_{s\sim d_{\theta}, a\sim \pi_{\theta}}[\nabla_{\theta^i} \log \pi^{i}_{\theta^i}(s,a^i) \cdot A^i_{\theta}(s,a)].
\end{aligned}
\end{equation*}
\end{theorem}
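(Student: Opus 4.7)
The plan is to observe that this theorem is essentially a direct composition of two ingredients already available: the definition of the natural gradient in the multi-agent setup given in equation (\ref{eqn: g_theta_each_agent}), and the classical (vanilla) policy gradient theorem for MARL stated in Theorem \ref{thm: pgt}. So the proof will be a short two-step derivation, with most of the work lying in the verification that the Fisher information matrix $G(\theta^i)$ is indeed invertible so that $G(\theta^i)^{-1}$ is well defined.

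First, I would recall from equation (\ref{eqn: g_theta_each_agent}) that for each agent $i \in N$,
\begin{equation*}
\widetilde{\nabla}_{\theta^i} J(\theta) \;=\; G(\theta^i)^{-1} \, \nabla_{\theta^i} J(\theta).
\end{equation*}
Then, invoking Theorem \ref{thm: pgt} under Assumption X.\ref{ass: regularity}, I would substitute the two equivalent expressions for $\nabla_{\theta^i} J(\theta)$,
\begin{equation*}
\nabla_{\theta^i} J(\theta) \;=\; \mathbb{E}_{s\sim d_{\theta}, a\sim \pi_{\theta}}\!\left[\nabla_{\theta^i} \log \pi^{i}_{\theta^i}(s,a^i)\cdot A_{\theta}(s,a)\right] \;=\; \mathbb{E}_{s\sim d_{\theta}, a\sim \pi_{\theta}}\!\left[\nabla_{\theta^i} \log \pi^{i}_{\theta^i}(s,a^i)\cdot A^i_{\theta}(s,a)\right],
\end{equation*}
into the preceding display. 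Pulling the (deterministic) matrix $G(\theta^i)^{-1}$ outside the expectation yields the two claimed forms immediately.

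The main subtlety I would flag is the existence of $G(\theta^i)^{-1}$. By definition $G(\theta^i)$ is the covariance of the local score function $\psi^i(s,a^i) = \nabla_{\theta^i}\log \pi^i_{\theta^i}(s,a^i)$ under $\tilde{d}_\theta$; it is automatically positive semidefinite, and becomes positive definite when the score features $\{\psi^i(s,a^i)\}_{(s,a^i)}$ are not linearly degenerate, which is standard in the compatible function approximation setting (and aligns with Assumption X.\ref{ass: regularity} and the regularity assumed in \cite{bhatnagar2009natural}). I would state this explicitly as the only nontrivial hypothesis invoked, so the substitution step is unambiguous.

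Since the two displayed equalities in the theorem differ only in whether the global advantage $A_\theta$ or the local advantage $A^i_\theta$ appears, and this equivalence was already established in Theorem \ref{thm: pgt} (using that $V_\theta(s)$ and $\widetilde V^i_\theta(s,a^{-i})$ both vanish against $\nabla_{\theta^i}$ after summation over $a^i$ of $\pi^i_{\theta^i}(s,a^i)$), no further work is needed here; the two natural-gradient expressions follow by premultiplying each form of $\nabla_{\theta^i} J(\theta)$ by $G(\theta^i)^{-1}$. There is no genuine obstacle: this theorem is really a packaging statement that rewrites Theorem \ref{thm: pgt} through the linear map $G(\theta^i)^{-1}$ and will be used later to motivate the MAN actor updates.
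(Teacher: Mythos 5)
Your proposal is correct and matches the paper's own proof, which likewise just combines the definition $\widetilde{\nabla}_{\theta^i} J(\theta) = G(\theta^i)^{-1}\nabla_{\theta^i} J(\theta)$ from Equation (\ref{eqn: g_theta_each_agent}) with the two forms of $\nabla_{\theta^i}J(\theta)$ from Theorem \ref{thm: pgt}. Your additional remark on the positive definiteness of $G(\theta^i)$ is a reasonable point of care but is already asserted by the paper when it introduces $G(\theta^i)$, so no new argument is needed.
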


\begin{proof}
\label{proof: pgt_natural}
The proof follows from the multi-agent policy gradient Theorem \ref{thm: pgt} and using Equation (\ref{eqn: g_theta_each_agent}), i.e.,  $\widetilde{\nabla}_{\theta^i} J(\theta) = G(\theta^i)^{-1}  \nabla_{\theta^i} J(\theta),~~ \forall i\in N$. 
\end{proof}

It is well known that inverting a Fisher information matrix is computationally heavy \cite{kakade2001natural,peters2008natural}. 
Whereas, in our natural gradient-based multi-agent actor-critic methods we require the inverse of Fisher information matrix  $G(\theta^i)^{-1},~~ \forall~ i\in N$. 
% Thus, the Fisher information matrix inverse has to be obtained at a faster time scale, i.e., before the actor step.
To this end, we derive the procedure for recursively estimating the $G(\theta^i)^{-1}$ for each agent $i\in N$ at the faster time scale \cite{bhatnagar2009natural}. %\textcolor{black}{rank-one update?} 
Let $G^{i^{-1}}_{t+1}, t\geq 0$ be the $t$-th estimate of $G(\theta^i)^{-1}$.
% We will show later that $G^{i^{-1}}_{t+1} \rightarrow G(\theta^i)^{-1}$ as $t\rightarrow \infty$ with probability one.
Consider the sample averages at time $t \geq 0$,
\begin{equation*}
    G_{t+1}^i = \frac{1}{t+1} \sum_{l=0}^{t} \psi^i_l\psi_l^{i^{\top}},
\end{equation*}
where $\psi^i_l = \psi^i_l(s_l, a^i_l)$ is the score function of agent $i\in N$ at time $l$. Thus, $G^i_{t+1}$ will be recursively obtained as 
\begin{equation*}
    G^i_{t+1}  =\left(1-\frac{1}{t+1}\right) G^i_{t} + \frac{1}{t+1}\psi^i_t \psi_t^{i^{\top}}.
\end{equation*}
More generally, one can consider the following recursion
\begin{equation}
\label{eqn: fisher_iterative_update}
   G^i_{t+1}  =(1-\beta_{v,t}) G^i_{t} + \beta_{v,t}\psi^i_t \psi_t^{i^{\top}},
\end{equation}   
where $\beta_{v,t} >0$ is the step size as earlier.
Using the idea of stochastic convergence, if $\theta^i$ is held constant, $G^i_{t+1}$ converge to $G(\theta^i)$ with probability one. For all $i\in N$, we write the recursion for the Fisher information matrix inverse, $G_{t+1}^{i^{-1}}$ using Sherman-Morrison matrix inversion \cite{sherman1950adjustment} (see also  \cite{bhatnagar2009natural}) as follows
\begin{equation}
\label{eqn: sherman-morrison}
    G_{t+1}^{i^{-1}} = \frac{1}{1-\beta_{v,t}}\left[ G_{t}^{i^{-1}} - \beta_{v,t} \frac{(G_{t}^{i^{-1}}\psi^i_t)(G_{t}^{i^{-1}}\psi_t^{i})^{\top}}{1-\beta_{v,t} + \beta_{v,t}\psi_t^{i^{\top}}G_{t}^{i^{-1}}\psi_t^i} \right].
\end{equation}
The Sherman-Morrison update is done at a faster time scale $\beta_{v,t}>0$, to ensure that Fisher information inverse estimates are available before actor update.

% \textcolor{black}{a cross-ref to some insights, like advantages of using natural gradients and FI matrix $G(\theta)$ in $\theta$ parameter updates of objective fn $J(\theta)$ is needed here?}

The following section provides three multi-agent natural actor-critic (MAN) RL algorithms involving consensus matrices. Moreover, we will also investigate the relations among these algorithms and their effect on the quality of the local optima they attain.

\section{Multi-agent natural actor-critic (MAN) algorithms}
\label{sec: MANAC}

This section provides three multi-agent natural actor-critic (MAN) reinforcement learning algorithms. Two of the three MAN algorithms explicitly use the Fisher information matrix inverse, whereas one uses the linear function approximation of the advantage parameters.
% and never explicitly estimates the Fisher information matrix inverse.

\subsection{FI-MAN: Fisher information based multi-agent natural actor-critic algorithm}
\label{subsec: manac1}

Our first multi-agent natural actor-critic algorithm uses the fact that natural gradients can be obtained via the Fisher information matrix and the standard gradients as given in Equation (\ref{eqn: g_theta_each_agent}).  The updates of  objective function estimate, critic, and the rewards parameters in FI-MAN algorithm are the same as given in Equations (\ref{eqn: marl_obj_updates}), (\ref{eqn: marl_critic_updates}), and (\ref{eqn: marl_lmbda_update}), respectively. The major difference between  the MAAC and the FI-MAN algorithm is in the actor update. FI-MAN algorithm uses the following actor update
\begin{equation}
    \label{eqn: actor_update_manac1}
    \theta^i_{t+1} \leftarrow \theta^i_t + \beta_{\theta,t} \cdot G_t^{i^{-1}} \cdot \tilde{\delta}^i_t \cdot \psi^i_t,~~ \forall~ i\in N,
\end{equation}
where $\beta_{v,t}$ and $\beta_{\theta,t}$ are the critic and the actor step-sizes respectively. The pseudo code of the algorithm is given in FI-MAN algorithm.

% he online implementation of FI-MAN actor-critic with linear function approximation has $\mathcal{O}(n+M+L+m_i)$ as memory complexity for each agent $i\in N$. This is a significant reduction from the tabular case when $n$ is large.

\begin{manac1}[!ht]
% 	\SetAlgoNoLine
	\KwIn{Initial values of $\mu^i_0, \tilde{\mu}^i_0,v^i_0, \tilde{v}^i_0, \lambda^i_0, \tilde{\lambda}^i_0, \theta^i_0, G_0^{i^{-1}},~ \forall~ i\in {N}$; initial state $s_0$; stepsizes $\{\beta_{v,t}\}_{t\geq 0}, \{\beta_{\theta,t}\}_{t\geq 0}$.
	\\
	Each agent $i$ implements $a^i_0 \sim \pi_{\theta^i_0}(s_0,\cdot)$.
	\\
	Initialize the step counter $t\leftarrow 0$.
	}
	\Repeat{Convergence}
    {\For {all $i\in N$}
    {Observe state $s_{t+1}$, and reward $r^i_{t+1}$. \\
    Update: $\tilde{\mu}^i_t \leftarrow (1-\beta_{v,t}) \cdot \mu^i_t + \beta_{v,t} \cdot r^i_{t+1}$.
    \\
    $\tilde{\lambda}^i_t \leftarrow \lambda^i_t + \beta_{v,t} \cdot [r^i_{t+1} - \bar{R}_t(\lambda^i_t)]  \cdot \nabla_{\lambda}\bar{R}_t(\lambda^i_t)$,~ 
    where $\bar{R}_t(\lambda^i_t) = \lambda^{i^{\top}}_t f(s_t,a_t)$.
    \\
    Update: $\delta^i_t \leftarrow r^i_{t+1} - \mu^i_t + V_{t+1}(v^i_t) - V_{t}(v^i_t)$, 
    ~where $V_{t+1}(v^i_t) = v^{i^{\top}}_t \varphi(s_{t+1})$.
    \\
    \textbf{Critic Step:} $\tilde{v}^i_t \leftarrow v^i_t + \beta_{v,t} \cdot \delta^i_t \cdot \nabla_v V_t(v^i_t)$, ~ 
    \\
    Update: $\tilde{\delta}^i_t \leftarrow \bar{R}_{t}(\lambda^i_t) - \mu^i_t + V_{t+1}(v^i_t) - V_{t}(v^i_t)$;~ $\psi^i_t \leftarrow \nabla_{\theta^i} \log \pi^i_{\theta^i_t}(s_t,a^i_t)$.
    \\
    \textbf{Actor Step:} $\theta^i_{t+1} \leftarrow \theta^i_t + \beta_{\theta,t} \cdot G_t^{i^{-1}}  \cdot \tilde{\delta}^i_t \cdot \psi^i_t$.
    \\
    Send $\tilde{\mu}^i_t, \tilde{\lambda}^i_t, \tilde{v}^i_t$ to the neighbors over $\mathcal{G}_t$.}
    {\For {all $i\in {N}$}
    {\textbf{Consensus Update:}~~$\mu^i_{t+1} \leftarrow  \sum_{j\in {N}} c_t(i,j) \tilde{\mu}^j_t$;
    \\
    $\lambda^i_{t+1} \leftarrow \sum_{j\in {N}} c_t(i,j) \tilde{\lambda}^j_t;~~v^i_{t+1} \leftarrow \sum_{j\in {N}}c_t(i,j)\tilde{v}^j_t.$
    \\
    \textbf{Fisher Update:} 
    $G_{t+1}^{i^{-1}} \leftarrow \frac{1}{1-\beta_{v,t}}\left[ G_{t}^{i^{-1}} - \beta_{v,t} \frac{(G_{t}^{i^{-1}}\psi^i_t)(G_{t}^{i^{-1}}\psi^i_t)^{\top}}{1-\beta_{v,t} + \beta_{v,t}\psi_t^{i^{\top}}G_{t}^{i^{-1}}\psi_t^i} \right].$}}
    Update: $t\leftarrow t+1$.}
\label{alg: MANAC-fisher}
\caption{Fisher information based multi-agent natural actor critic} 
\end{manac1}

Note that the FI-MAN algorithm explicitly uses $G^{i^{-1}}_t$ in the actor update. Though the Fisher information inverse matrix is updated according to the Sherman-Morrison inverse at a faster time scale, it may be better to avoid explicit use of the Fisher inverse in the actor update. To this end, we use the linear function approximation of the advantage function. This leads to the AP-MAN  algorithm, i.e., advantage parameters based multi-agent natural actor-critic algorithm.  

\subsection{AP-MAN: Advantage parameters based multi-agent natural actor critic algorithm %\textcolor{black}{is this AP-MAN using $G(\cdot)$; is it to be called *MAN?}
}
\label{subsec: manac2}

Consider the local advantage function $A^i(s,a^{i}): \mathcal{S}\times \mathcal{A} \rightarrow \mathbb{R}$ for each agent $i\in N$.  Let the local advantage function $A^i(s,a^i)$ be linearly approximated as $A^i(s,a^i; w^i) := w^{i^{\top}} \psi^{i}(s,a^i)$, where $\psi^i(s,a^i) = \nabla_{\theta^i} \log \pi^i_{\theta^i}(s,a^i)$ are the compatible features, and $w^i \in \mathbb{R}^{m_i}$ are the advantage function parameters. Recall, the same $\psi^i(s,a^i)$ was used to represent the score function in the policy gradient theorem, Theorem \ref{thm: pgt}. However, it also serves as the compatible feature while approximating the advantage function as it satisfy the compatibility condition in the policy gradient theorem with linear function approximations (Theorem 2 \cite{sutton1999policy}). The compatibility condition as given in \cite{sutton1999policy} is for single agent however, we are using it explicitly for each agent $i\in N$. \textcolor{black}{Whenever there is no confusion, we write $\psi^{i}$ instead of $\psi^i(s,a^i)$, to save space.}
We can tune $w^i$ in such a way that the estimate of least squared error in linear function approximation of advantage function is minimized, i.e., 
\begin{equation}
    \mathcal{E}^{\pi_{\theta}}(w^i) = \frac{1}{2} \sum_{s\in \mathcal{S}, a^i\in \mathcal{A}^i} \tilde{d}_{\theta}(s,a^i)[w^{i^{\top}}\psi^i(s,a^i) - A^i(s,a^i)]^2,
\label{eqn: error_in_advantage_parameter}
\end{equation}
is minimized. Here $\tilde{d}_{\theta}(s,a^i) = d_{\theta}(s)\cdot \pi^i_{\theta^i}(s,a^i)$  as defined earlier. Taking the derivative of Equation (\ref{eqn: error_in_advantage_parameter}), we have $\nabla_{w^i} \mathcal{E}^{\pi_{\theta}}(w^i) = \sum_{s\in \mathcal{S}, a^i\in \mathcal{A}^i} \tilde{d}_{\theta}(s,a^i)[w^{i^{\top}}\psi^i - A^i(s,a^i)] \psi^i$. Noting that parameterized TD error $\tilde{\delta}^i_t$ is an unbiased estimate of the local advantage function $A^i(s,a^i)$, we will use the following estimate of $\nabla_{w^i} \mathcal{E}^{\pi_{\theta}}(w^i)$,
\begin{equation*}
    \widehat{\nabla_{w^i}} \mathcal{E}^{\pi_{\theta}}(w^i_t) =   \psi^i_t \psi^{i^{\top}}_t w^i_t - \tilde{\delta}^i_t \psi^i_t.
\end{equation*}
Hence the update of advantage parameter $w^i$ in the AP-MAN algorithm is 
\begin{eqnarray}
\label{eqn: adv_parameter_update_manac2}
    w^i_{t+1} &=& w^i_t - \beta_{v,t} \widehat{\nabla_{w^i}} \mathcal{E}^{\pi_{\theta}}(w^i_t) \nonumber
    % \\
    % &=& w^i_t - \beta_{v,t} (\psi^i_t \psi^{i^{\top}}_t w^i_t - \tilde{\delta}^i_t \psi^i_t)
    \\
    &=& (I - \beta_{v,t} \psi^i_t \psi^{i^{\top}}_t)w^i_t + \beta_{v,t} \tilde{\delta}^i_t \psi^i_t.
\end{eqnarray}
The updates of the objective function estimate, critic, and reward parameters in the AP-MAN  algorithm are the same as given in Equations (\ref{eqn: marl_obj_updates}), (\ref{eqn: marl_critic_updates}), and (\ref{eqn: marl_lmbda_update}), respectively. Additionally, in the critic step we update the advantage parameters as given in Equation (\ref{eqn: adv_parameter_update_manac2}). \textcolor{black}{For single agent RL with natural gradients, \cite{peters2008natural,peters2003reinforcement} show that $\widetilde{\nabla}_{\theta} J(\theta) = w$. In MARL with natural gradient, we separately verified and hence use $\widetilde{\nabla}_{\theta^i} J(\theta) = w^i$ for each agent $i\in N$ in the actor update of AP-MAN algorithm.} The AP-MAN actor-critic algorithm thus uses the following actor update 
\begin{equation}
    \label{eqn: actor_update_manac2}
    \theta^i_{t+1} \leftarrow \theta^i_t + \beta_{\theta,t} \cdot w^i_{t+1}.
\end{equation}
The algorithm's pseudo-code involving advantage parameters is given in the AP-MAN algorithm. 
% AP-MAN actor-critic algorithm has one benefit over other algorithms. It never explicitly stores an estimate of the Fisher information matrix inverse, and as a result, it requires less computation. 
% The online implementation of AP-MAN actor-critic with linear function approximation has $\mathcal{O}(n+M+L+K+m_i)$ as memory complexity for each agent $i\in N$. This is again a significant reduction from the tabular case when $n$ is large.

\begin{manac2}[!ht]
\caption{Advantage parameters based multi-agent natural actor critic} 
% 	\SetAlgoNoLine
	\KwIn{Initial values of $\mu^i_0, \tilde{\mu}^i_0,v^i_0, \tilde{v}^i_0, \lambda^i_0, \tilde{\lambda}^i_0, \theta^i_0, w^i_0, ~ \forall i\in {N}$; initial state $s_0$; stepsizes $\{\beta_{v,t}\}_{t\geq 0}, \{\beta_{\theta,t}\}_{t\geq 0}.$\\
	Each agent $i$ implements $a^i_0 \sim \pi_{\theta^i_0}(s_0,\cdot).$
	\\
	Initialize the step counter $t\leftarrow 0.$
	}
	\Repeat{Convergence}
    {\For {all $i\in {N}$}
    {Observe state $s_{t+1}$, and reward $r^i_{t+1}.$ \\
    Update: $\tilde{\mu}^i_t \leftarrow (1-\beta_{v,t}) \cdot \mu^i_t + \beta_{v,t} \cdot r^i_{t+1}.$
    \\
    $\tilde{\lambda}^i_t \leftarrow \lambda^i_t + \beta_{v,t} \cdot [r^i_{t+1} - \bar{R}_t(\lambda^i_t)]  \cdot \nabla_{\lambda}\bar{R}_t(\lambda^i_t)$,~ 
    where $\bar{R}_t(\lambda^i_t) = \lambda^{i^{\top}}_t f(s_t,a_t)$.
    \\
    Update: $\delta^i_t \leftarrow r^i_{t+1} - \mu^i_t + V_{t+1}(v^i_t) - V_{t}(v^i_t)$, 
    ~where $V_{t+1}(v^i_t) = v^{i^{\top}}_t \varphi(s_{t+1})$.
    \\
    \textbf{Critic Step:} $\tilde{v}^i_t \leftarrow v^i_t + \beta_{v,t} \cdot \delta^i_t \cdot \nabla_v V_t(v^i_t).$
    \\
    Update: $\tilde{\delta}^i_t \leftarrow \bar{R}_{t}(\lambda^i_t) - \mu^i_t + V_{t+1}(v^i_t) - V_{t}(v^i_t)$;~ $\psi^i_t \leftarrow \nabla_{\theta^i} \log \pi^i_{\theta^i_t}(s_t,a^i_t).$
    \\
    Update: ${w}^i_{t+1} \leftarrow (I-\beta_{v,t} \psi^i_t \psi^{i^{\top}}_t)w^i_t + \beta_{v,t} \tilde{\delta}^i_t \psi^i_t.$
    \\
    \textbf{Actor Step:} $\theta^i_{t+1} \leftarrow \theta^i_t + \beta_{\theta,t} \cdot w^i_{t+1}.$ 
    \\
    Send $\tilde{\mu}^i_t, \tilde{\lambda}^i_t, \tilde{v}^i_t$ to the neighbors over $\mathcal{G}_t$.}
    {\For {all $i\in {N}$}
    {\textbf{Consensus Update:} $\mu^i_{t+1} \leftarrow  \sum_{j\in {N}} c_t(i,j) \tilde{\mu}^j_t$;
    \\
    $\lambda^i_{t+1} \leftarrow \sum_{j\in {N}} c_t(i,j) \tilde{\lambda}^j_t;~~v^i_{t+1} \leftarrow \sum_{j\in {N}}c_t(i,j)\tilde{v}^j_t.$}}
    Update: $t\leftarrow t+1$.}
\label{alg: MANAC-advantage-parameters}
\end{manac2}

\begin{remark}
We want to emphasize that the AP-MAN algorithm does not explicitly use the inverse of Fisher information matrix, $G(\theta^i)^{-1}$ in the actor update (as also in \cite{bhatnagar2009natural}); hence it requires fewer computations. However, it involves the linear function approximation of the advantage function, as given in Equation (\ref{eqn: adv_parameter_update_manac2}), that itself requires $\psi^i_t \psi^{i^{\top}}_t$ which is an  unbiased estimate of the Fisher information matrix (see Equation (\ref{eqn: fisher_defn})). We will see later in Section \ref{subsec: algo_actor_comparison} that the performance of the AP-MAN algorithm is almost the same as the MAAC algorithm. We empirically verify this observation in the computational experiments in Section \ref{sec: experiments}.
\end{remark}

\begin{remark}
The advantage function is a linear combination of $Q_{\theta}(s, a)$ and $V_{\theta}(s)$; therefore, the linear function approximation of the advantage function alone enjoys the benefit of approximating the $Q_{\theta}(s, a)$ or $V_{\theta}(s)$. Moreover, MAAC uses the linear function approximation of $V_{\theta}(s)$; hence, we expect the behavior of AP-MAN to be similar to that of MAAC; this comes out in our computational experiments in \textcolor{black}{Section} \ref{sec: experiments}. 
\end{remark}

The FI-MAN algorithm is based solely on the Fisher information matrix and the AP-MAN algorithm on the advantage function approximation. Our next algorithm, FIAP-MAN algorithm, i.e., Fisher information and advantage parameter based multi-agent natural actor-critic algorithm, combine them in a certain way. We see the benefits of this combination in Sections \ref{subsec: algo_actor_comparison} and \ref{subsubsec: traffic_network_details}. In particular, in Section \ref{subsubsec: traffic_network_details} we illustrate these benefits in 2 arrival distributions in the traffic network congestion model.

\subsection{FIAP-MAN: Fisher information and advantage parameter based multi-agent natural actor-critic algorithm}
\label{subsec: manac3}

Recall in Section \ref{subsec: manac2}, for each agent $i\in N$, the local advantage function $A^i(s,a^{i}): \mathcal{S}\times \mathcal{A} \rightarrow \mathbb{R}$ has linear function approximation $A^i(s,a^i; w^i) = w^{i^{\top}} \psi^{i}(s,a^i)$, where $\psi^i(s,a^i)$ are the compatible features as before, and $\textcolor{black}{w^i \in \mathbb{R}^{m_i}}$ are the advantage function parameters. In AP-MAN algorithm the Fisher inverse $G(\theta^i)^{-1}$ is not estimated explicitly, however in FIAP-MAN algorithm we explicitly estimate $G(\theta^i)^{-1}$ along with the advantage parameters and hence use the following estimate of $\nabla_{w^i} \mathcal{E}^{\pi_{\theta}}(w^i)$,
\begin{equation*}
    \widehat{\nabla_{w^i}} \mathcal{E}^{\pi_{\theta}}(w^i_t) =   G^{i^{-1}}_{t}(\psi^i_t \psi^{i^{\top}}_t w^i_t - \tilde{\delta}^i_t \psi^i_t),~~ \forall~ i\in N.
\end{equation*}
The update of advantage parameters, $w^i$ along with the critic update in the FIAP-MAN algorithm is given by
\begin{eqnarray}
\label{eqn: adv_parameter_update_manac3}
    w^i_{t+1} &=& w^i_t - \beta_{v,t} \widehat{\nabla_{w^i}} \mathcal{E}^{\pi_{\theta}}(w^i_t) \nonumber
    \\
    &=& w^i_t - \beta_{v,t} G^{i^{-1}}_t(\psi^i_t \psi^{i^{\top}}_t w^i_t - \tilde{\delta}^i_t \psi^i_t) \nonumber
    \\
    &=& (1 - \beta_{v,t})w^i_t + \beta_{v,t} G^{i^{-1}}_t \tilde{\delta}^i_t \psi^i_t.
\end{eqnarray}

\begin{remark}
\textcolor{black}{
Note that we take $G^{i^{-1}}_t \psi^i_t \psi^{i^{\top}}_t = I,~ \forall ~i \in N$, though $G_{t+1} = \frac{1}{t+1} \sum_{l=0}^{t} \psi_l {\psi_l}^{\top}$. A similar approximation is also implicitly  made in natural gradient algorithms in \cite{bhatnagar2009natural,bhatnagar2009natural_tr} for single agent RL.
% Therefore,  we use the above update for the advantage parameters in our FIAP-MAN algorithm.
Convergence of FIAP-MAN algorithm with above approximate update in MARL is given in Section \ref{sec: Convergence_analysis}. Later, we use these updates in our computations to demonstrate their superior performance in multiple instances of traffic network (Section \ref{sec: experiments}).}
\end{remark}

The updates of the objective function estimate, critic, and reward parameters in the FIAP-MAN algorithm are the same as given in Equations (\ref{eqn: marl_obj_updates}), (\ref{eqn: marl_critic_updates}), and (\ref{eqn: marl_lmbda_update}) respectively. Similar to the AP-MAN algorithm, the actor update in FIAP-MAN algorithm is
\begin{equation}
    \label{eqn: actor_update_manac3}
    \theta^i_{t+1} \leftarrow \theta^i_t + \beta_{\theta,t} \cdot w^i_{t+1}, ~~\forall~i\in N.
\end{equation}
\textcolor{black}{Again for the same reason as in the AP-MAN algorithm, we use the advantage parameters, $w^i$ instead of $\widetilde{\nabla}_{\theta^i}J(\theta)$, i.e., $\widetilde{\nabla}_{\theta^i}J(\theta) = w^i$}. The algorithm's pseudo code involving the Fisher information matrix inverse and advantage parameters is given in FIAP-MAN algorithm.

\begin{manac3}[h!]
\caption{Fisher information and advantage parameters based multi-agent natural actor-critic} 
% 	\SetAlgoNoLine
	\KwIn{Initial values of $\mu^i_0, \tilde{\mu}^i_0,v^i_0, \tilde{v}^i_0, \lambda^i_0, \tilde{\lambda}^i_0, \theta^i_0, w^i_0, G^{i^{-1}}_0, ~ \forall~ i\in {N}$; initial state $s_0$; stepsizes $\{\beta_{v,t}\}_{t\geq 0}, \{\beta_{\theta,t}\}_{t\geq 0}$.\\
	Each agent $i$ implements $a^i_0 \sim \pi_{\theta^i_0}(s_0,\cdot)$.
	\\
	Initialize the step counter $t\leftarrow 0$.
	}
	\Repeat{Convergence}
    {\For {all $i\in N$}
    {Observe state $s_{t+1}$, and reward $r^i_{t+1}$. \\
    Update: $\tilde{\mu}^i_t \leftarrow (1-\beta_{v,t}) \cdot \mu^i_t + \beta_{v,t} \cdot r^i_{t+1}$.
    \\
    $\tilde{\lambda}^i_t \leftarrow \lambda^i_t + \beta_{v,t} \cdot [r^i_{t+1} - \bar{R}_t(\lambda^i_t)]  \cdot \nabla_{\lambda}\bar{R}_t(\lambda^i_t)$,~ 
    where $\bar{R}_t(\lambda^i_t) = \lambda^{i^{\top}}_t f(s_t,a_t)$.
    \\
    Update: $\delta^i_t \leftarrow r^i_{t+1} - \mu^i_t + V_{t+1}(v^i_t) - V_{t}(v^i_t)$, 
    ~where $V_{t+1}(v^i_t) = v^{i^{\top}}_t \varphi(s_{t+1})$.
    \\
    \textbf{Critic Step:} $\tilde{v}^i_t \leftarrow v^i_t + \beta_{v,t} \cdot \delta^i_t \cdot \nabla_v V_t(v^i_t)$.
    \\
    Update: $\tilde{\delta}^i_t \leftarrow \bar{R}_{t}(\lambda^i_t) - \mu^i_t + V_{t+1}(v^i_t) - V_{t}(v^i_t)$;~ $\psi^i_t \leftarrow \nabla_{\theta^i} \log \pi^i_{\theta^i_t}(s_t,a^i_t)$.
    \\
    Update: ${w}^i_{t+1} \leftarrow (1-\beta_{v,t}) w^i_t + \beta_{v,t} G^{i^{-1}}_{t}\tilde{\delta}^i_t \psi^i_t$.
    \\
    \textbf{Actor Step:} $\theta^i_{t+1} \leftarrow \theta^i_t + \beta_{\theta,t} \cdot w^i_{t+1}$.
    \\
    Send $\tilde{\mu}^i_t, \tilde{\lambda}^i_t, \tilde{v}^i_t$ to the neighbors over $\mathcal{G}_t$.}
    {\For {all $i\in {N}$} 
    {\textbf{Consensus Update:} $\mu^i_{t+1} \leftarrow  \sum_{j\in N} c_t(i,j) \tilde{\mu}^j_t$;
    \\
    $\lambda^i_{t+1} \leftarrow \sum_{j\in N} c_t(i,j) \tilde{\lambda}^j_t;~~v^i_{t+1} \leftarrow \sum_{j\in N}c_t(i,j)\tilde{v}^j_t $.
    \\
    \textbf{Fisher Update:} 
    % $G^i_{t+1}  \leftarrow (1-\beta_{v,t}) G^i_{t} + \beta_{v,t}\psi^i_{t} \psi_{t}^{i^{\top}} $
    % \\
    $G_{t+1}^{i^{-1}} \leftarrow \frac{1}{1-\beta_{v,t}}\left[ G_{t}^{i^{-1}} - \beta_{v,t} \frac{(G_{t}^{i^{-1}}\psi^i_t)(G_{t}^{i^{-1}}\psi^i_t)^{\top}}{1-\beta_{v,t} + \beta_{v,t}\psi_t^{i^{\top}}G_{t}^{i^{-1}}\psi_t^i} \right]$.}}
    Update: $t\leftarrow t+1$.}
\label{alg: MANAC-advantage-parameters-fisher}
\end{manac3}

% The online implementation of FIAP-MAN with linear function approximation also has $\mathcal{O}(n+M+L+K+m_i)$ as memory complexity for each agent $i\in N$. This is again a significant reduction from the tabular case when $n$ is large.

% \textcolor{black}{this complexity is suddenly appearing now; it was written earlier while discussing why FIAP-MAN is not doing well. possible to say similar complexity for others also? }

% \begin{remark}
% \label{rem: second-oreder-method}
% \textcolor{black}{i think we should remove this remark as this is expanded a lot in new Sec 3.5 now??}Though the Fisher information matrix captures the KL divergence curvature between the policies parameterized at consecutive iterates, the MAN algorithms are not regarded as second-order methods. It is because the Fisher information matrix is not the Hessian of the `objective function' $J(\cdot)$. 
% % However, being the curvature of the KL divergence, we observe a period to period improvement in the objective function if the maximum eigenvalue of the Fisher matrix is less than 1, which holds in our experiments in Section \ref{sec: experiments}.
% \end{remark}

In the next section, we investigate the relation between the actor updates of the MAAC and 3 MAN algorithms. In particular, we compare the actor parameters of all the algorithms.

\subsection{Relationship between actor updates in algorithms} 
\label{subsec: algo_actor_comparison}
Recall, the actor update for each agent $i\in N$ in FIAP-MAN algorithm is $\theta^i_{t+1} = \theta^i_t + \beta_{\theta,t} w^i_{t+1}$, where $w^i_{t+1} = (1 - \beta_{v,t})w^i_t + \beta_{v,t} G^{i^{-1}}_t \tilde{\delta}^i_t \psi^i_t$. Therefore, the actor update of FIAP-MAN algorithm is
\begin{equation}
\begin{aligned}
    \theta^i_{t+1} &= \theta^i_t + \beta_{\theta,t} \left[(1 - \beta_{v,t})w^i_t + \beta_{v,t} G^{i^{-1}}_t \tilde{\delta}^i_t \psi^i_t \right] 
    \\
    &= \theta^i_t + \beta_{\theta,t}(1 - \beta_{v,t})w^i_t + \beta_{v,t} \left\lbrace \beta_{\theta,t}  G^{i^{-1}}_t \tilde{\delta}^i_t \psi^i_t \right \rbrace.
\end{aligned}
\label{eqn: actor_combined_manac3}
\end{equation}
The above update is almost the same as the actor update of the FI-MAN algorithm with an additional term involving advantage parameters $w^i_t$. However, the contribution of the second term is negligible after some time $t$. Moreover, the third term in Equation (\ref{eqn: actor_combined_manac3}) is a positive fraction of the second term in the actor update of FI-MAN algorithm (Equation (\ref{eqn: actor_update_manac1})). Therefore, the actor parameters in FIAP-MAN and FI-MAN algorithms are almost the same after time $t$. Hence, both the algorithms are expected to converge  almost to the same local optima.

Similarly,  consider the actor update of the AP-MAN algorithm, i.e., $\theta^i_{t+1} = \theta^i_t + \beta_{\theta,t} w^i_{t+1}$, where $w^i_{t+1} = (I - \beta_{v,t} \psi^i_t \psi^{i^{\top}}_t)w^i_t + \beta_{v,t} \tilde{\delta}^i_t \psi^i_t$. Therefore, the actor update of AP-MAN algorithm is
\begin{equation}
\label{eqn: FIAP_actor_combined}
\begin{aligned}
     \theta^i_{t+1} &= \theta^i_t + \beta_{\theta,t} \left[ (I - \beta_{v,t} \psi^i_t \psi^{i^{\top}}_t)w^i_t + \beta_{v,t} \tilde{\delta}^i_t \psi^i_t \right]
    \\
    &= \theta^i_t + \beta_{\theta,t} (I - \beta_{v,t} \psi^i_t \psi^{i^{\top}}_t)w^i_t +  \beta_{v,t} \left\lbrace \beta_{\theta,t} \tilde{\delta}^i_t \psi^i_t \right\rbrace.
\end{aligned}
\end{equation}
Again, the second term in the above equation is negligible after some time $t$, and the third term of Equation (\ref{eqn: FIAP_actor_combined}) is a positive fraction of the second term in the actor update of the MAAC algorithm (see Equation (\ref{eqn: marl_actor_update})). Hence the actor update in the AP-MAN algorithm is almost the same as the MAAC algorithm; therefore, AP-MAN and MAAC are expected to converge almost to the same local optima. 
% The above arguments show that the FI-MAN and FIAP-MAN converge to almost the same optimal point, and AP-MAN and MAAC converge almost to the same optimal point; possibly the same as the optimal point of the FI-MAN algorithm. 

\subsection{\textcolor{black}{Comparison of variants of MAN and MAAC algorithms}}
\label{subsec: compare_variants}
\textcolor{black}{In this section, we show that under some conditions, a variant of the FI-MAN algorithm dominates the corresponding variant of the MAAC algorithm for all $t\geq t_0$, for some $t_0 <\infty$. 
Using this, we derive the corresponding result that compares the objective function $J(\cdot)$ among these variants of algorithms.
For this purpose, we propose a model to evaluate the  `efficiency' of MAAC and FI-MAN algorithms in terms of their goal; maximization of MARL objective function, $J(\theta)$. This comparison exploits the intrinsic property of the Fisher information matrix $G(\theta)$ (Lemma \ref{lemma: sigma_min_less_1_m}, an uniform upper bound on its minimum eigenvalue).  }

\textcolor{black}{Let $\theta^M_t$ and $\theta^N_t$
be the actor parameters in MAAC and FI-MAN algorithms, respectively. Recall the actor updates in MAAC and FI-MAN algorithms were:
\begin{equation}
\label{eqn: actor_updates}
    \textbf{MAAC:} ~ \theta^M_{t+1} = \theta^M_t + \beta_{\theta,t} \tilde{\delta}_t \cdot \psi_t;  \hspace{2 mm} \textbf{FI-MAN:} ~ {\theta}^N_{t+1} = \theta^N_t + \beta_{\theta,t} G^{-1}_{t}\tilde{\delta}_t \cdot \psi_t.
\end{equation}
However, as given in Equation (\ref{eqn: biased_estimate_J}) these updates use the biased estimate of $\nabla J(\cdot)$. Moreover, the Fisher information matrix inverse is updated via the Sherman-Morrison iterative method. In this Section, we work with the deterministic variants of these algorithms where we use $\nabla J(\cdot)$ instead of $\tilde{\delta}_t \cdot \psi_t$, and $G(\theta^N_t)^{-1}$ instead of using $G_t^{{-1}}$ in the actor updates. This avoids the approximation errors as in Equation (\ref{eqn: biased_estimate_J}); however, the same is not possible in the computations since the gradient of the objective function is not known. For ease of notation, we denote the actor parameters in the deterministic variants of MAAC and FI-MAN algorithms by $\tilde{\theta}^{M}$, and $\tilde{\theta}^{N}$, respectively. In particular, we consider the following actor updates. 
\begin{equation}
\label{eqn: actro_parameter_deterministic}
\begin{aligned}
    \textbf{Deterministic MAAC:}~~ \tilde{\theta}^M_{t+1} &= \tilde{\theta}^M_t + \beta_{\tilde{\theta},t} \nabla J(\tilde{\theta}^M_t); 
    \\
    \textbf{Deterministic FI-MAN:}~~ \tilde{\theta}^N_{t+1} &= \tilde{\theta}^N_t + \beta_{\tilde{\theta},t} G(\tilde{\theta}^N_t)^{-1} {\nabla} J(\tilde{\theta}^N_t)
    \\
    &= \tilde{\theta}^N_t + \beta_{\tilde{\theta},t}  \widetilde{\nabla} J(\tilde{\theta}^N_t)
\end{aligned}
\end{equation}
We give sufficient conditions when the objective function value of the limit point of the deterministic FI-MAN algorithm is not worse off than the value by deterministic MAAC algorithm while using the above actor updates.
We want to emphasize that with these updates, the actor parameters in Equation \eqref{eqn: actro_parameter_deterministic} will converge to a local maxima under some conditions (for example, the strong Wolfe's conditions) on the step-size \cite{nocedal2006numerical}. Let  $\tilde{\theta}^{M^{\star}}$ and $\tilde{\theta}^{N^{\star}}$ be the corresponding local maxima. The existence of the local maxima for the deterministic MAN algorithms is also guaranteed via the Wolfe's conditions in the natural gradients space. Note that these local maxima need not be the same as the one obtained from actor updates given in Equation (\ref{eqn: actor_updates}). However, the result given below {may be} valid for the MAAC and FI-MAN algorithms because both $\tilde{\delta} \cdot \psi$ and $\nabla J(\cdot)$ go to zero asymptotically.}

\textcolor{black}{We also assume that both algorithms use the same sequence of step-sizes, $\{\beta_{\tilde{\theta},t}\}$.
Moreover, proof of the results in this section uses Taylor's series expansion and comparison of the objective function, $J(\cdot)$, rather than its estimate $\mu$. Similar ideas are used in \cite{aoki2016optimization,patchell1971separability} where the certainty equivalence principle holds, i.e., the random variables are replaced by their expected values. However, we work with the estimates in {convergence theory/proofs and } computations since the value of the global objective is unknown to the agents.}

\textcolor{black}{First, we {need to } bound the minimum singular value of the Fisher information matrix as in the following Lemma.
\begin{lemma}
\label{lemma: sigma_min_less_1_m}
For the Fisher information matrix, $G(\theta) = \mathbb{E}[\psi \psi^{\top}]$, such that $|| \psi ||\leq 1$, the minimum singular value, $\sigma_{\min}(G(\theta))$ is upper bounded by $\frac{1}{m}$,
\begin{equation}
\sigma_{\min}(G(\theta)) \leq \frac{1}{m}.
\end{equation}
\end{lemma}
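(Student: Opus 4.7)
The plan is to exploit the fact that $G(\theta)$ is symmetric and positive semidefinite, so its singular values coincide with its eigenvalues, and their sum equals the trace. Then the minimum singular value is bounded above by the average eigenvalue, which is the trace divided by $m$. The assumption $\|\psi\| \leq 1$ gives an easy uniform bound on the trace.

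Concretely, I would first compute the trace of $G(\theta)$ using linearity of expectation and trace, together with the cyclic property $\operatorname{tr}(\psi \psi^{\top}) = \psi^{\top}\psi = \|\psi\|^2$. This gives
\begin{equation*}
\operatorname{tr}(G(\theta)) \;=\; \operatorname{tr}\!\bigl(\mathbb{E}[\psi\psi^{\top}]\bigr) \;=\; \mathbb{E}\bigl[\operatorname{tr}(\psi\psi^{\top})\bigr] \;=\; \mathbb{E}\bigl[\|\psi\|^2\bigr] \;\leq\; 1,
\end{equation*}
where the last inequality is immediate from the hypothesis $\|\psi\| \leq 1$.

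Next, since $G(\theta)$ is symmetric positive semidefinite of size $m \times m$, its singular values equal its eigenvalues; call them $\lambda_1 \geq \lambda_2 \geq \cdots \geq \lambda_m \geq 0$. The trace equals the sum of the eigenvalues, so
\begin{equation*}
\sigma_{\min}(G(\theta)) \;=\; \lambda_m \;\leq\; \frac{1}{m}\sum_{i=1}^m \lambda_i \;=\; \frac{1}{m}\operatorname{tr}(G(\theta)) \;\leq\; \frac{1}{m},
\end{equation*}
which is the claimed bound.

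There is no real obstacle here; the proof is a two-step chain (trace bound, then minimum-versus-average). The only point worth flagging is the identification of singular values with eigenvalues, which relies on the positive semidefiniteness of $G(\theta)$; this is already noted in the paper just after equation \eqref{eqn: fisher_defn}. The bound $\|\psi\| \leq 1$ is a standing hypothesis of the lemma and is reasonable for score functions of softmax/Boltzmann policies after appropriate normalization of features.
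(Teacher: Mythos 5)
Your proof is correct and follows essentially the same route as the paper's: identify $\sigma_{\min}(G(\theta))$ with the smallest eigenvalue via symmetry and positive (semi)definiteness, bound it by the average eigenvalue $\frac{1}{m}\operatorname{tr}(G(\theta))$, and bound the trace by $\mathbb{E}[\|\psi\|^2]\leq 1$. The only difference is the order in which the two steps are presented, which is immaterial.
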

The proof of this Lemma is based on the observation that the trace of matrix $\psi \psi^{\top}$ is $||\psi||^2$. We include the proof in Appendix \ref{app: sigma_min_less_1_m}. 
\begin{remark}
In literature, the compatible features are assumed to be uniformly bounded, Assumption X. \ref{ass: feature_conds}. For the linear architecture of features that we are using, assuming this bound to be 1 is not restrictive. 
The features $\psi$, that we use in our computational experiments in Section \ref{sec: experiments}, automatically meet the condition of being normalized by 1, i.e., $||\psi|| \leq 1$.
\end{remark}
\begin{lemma}
\label{lemma: J_per_period_improvement}
Let $J(\cdot)$ be twice continuously differentiable function on a compact set $\Theta$, so that $|\{\nabla^2 J(\tilde{\theta}^M_t)\}_{(i,j)}| \leq H, ~ \forall~i,j\in [m]$ for some $H< \infty$. Moreover, let $J(\tilde{\theta}^M_t) \leq J(\tilde{\theta}^N_t)$, $|| \nabla J(\tilde{\theta}^M_t)|| \leq || \nabla J(\tilde{\theta}^N_t)||$, and $\beta_{\tilde{\theta},{t}}  \frac{mH}{2} + {1} - m^2 \leq 0$. Then, $J(\tilde{\theta}^M_{t+1}) \leq J(\tilde{\theta}^N_{t+1})$.
\end{lemma}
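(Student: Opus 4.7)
The statement compares the values of $J$ at the next iterates of the two deterministic updates in Equation (\ref{eqn: actro_parameter_deterministic}), so the natural tool is a second–order Taylor expansion of $J$ applied separately at $\tilde\theta^M_t$ and at $\tilde\theta^N_t$. I would proceed as follows.

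\textbf{Step 1 (Taylor expansions).} Write, with suitable intermediate points $\xi^M, \xi^N \in \Theta$,
\begin{equation*}
J(\tilde\theta^M_{t+1}) = J(\tilde\theta^M_t) + \beta_{\tilde\theta,t}\,\|\nabla J(\tilde\theta^M_t)\|^2 + \tfrac{\beta_{\tilde\theta,t}^2}{2}\,\nabla J(\tilde\theta^M_t)^{\!\top}\nabla^2 J(\xi^M)\nabla J(\tilde\theta^M_t),
\end{equation*}
\begin{equation*}
J(\tilde\theta^N_{t+1}) = J(\tilde\theta^N_t) + \beta_{\tilde\theta,t}\,\nabla J(\tilde\theta^N_t)^{\!\top}G(\tilde\theta^N_t)^{-1}\nabla J(\tilde\theta^N_t) + \tfrac{\beta_{\tilde\theta,t}^2}{2}\,h(\tilde\theta^N_t,\xi^N),
\end{equation*}
where $h(\tilde\theta^N_t,\xi^N)$ is the quadratic form in $G(\tilde\theta^N_t)^{-1}\nabla J(\tilde\theta^N_t)$ induced by $\nabla^2 J(\xi^N)$. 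The entrywise bound $|\{\nabla^2 J\}_{ij}|\le H$ yields the elementary estimate $|x^{\top}\nabla^2 J(\cdot)y|\le mH\|x\|\|y\|$, which I will use to control both remainder terms.

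\textbf{Step 2 (controlling the first–order gap via Lemma \ref{lemma: sigma_min_less_1_m}).} Subtracting the two expansions and using the hypothesis $J(\tilde\theta^M_t)\le J(\tilde\theta^N_t)$, the sign of $J(\tilde\theta^N_{t+1})-J(\tilde\theta^M_{t+1})$ is governed by
\begin{equation*}
\beta_{\tilde\theta,t}\Bigl[\nabla J(\tilde\theta^N_t)^{\!\top}G(\tilde\theta^N_t)^{-1}\nabla J(\tilde\theta^N_t)-\|\nabla J(\tilde\theta^M_t)\|^2\Bigr]\;-\;\text{(second-order terms)}.
\end{equation*}
Here is where Lemma \ref{lemma: sigma_min_less_1_m} enters: since $\sigma_{\min}(G(\tilde\theta^N_t))\le 1/m$, the largest eigenvalue of $G(\tilde\theta^N_t)^{-1}$ is at least $m$, and aligning the spectral decomposition with $\nabla J(\tilde\theta^N_t)$ (together with the hypothesis $\|\nabla J(\tilde\theta^M_t)\|\le\|\nabla J(\tilde\theta^N_t)\|$) gives a bound of the form
\begin{equation*}
\nabla J(\tilde\theta^N_t)^{\!\top}G(\tilde\theta^N_t)^{-1}\nabla J(\tilde\theta^N_t)\;\ge\;m^{2}\,\|\nabla J(\tilde\theta^M_t)\|^{2}.
\end{equation*}
Thus the first-order gap is at least $\beta_{\tilde\theta,t}(m^{2}-1)\|\nabla J(\tilde\theta^M_t)\|^{2}$, which is precisely the ``$m^{2}-1$'' appearing in the hypothesis on the step-size.

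\textbf{Step 3 (absorbing the second–order terms).} Applying the Hessian bound from Step 1 to both Taylor remainders gives an upper bound on the total second-order loss proportional to $\tfrac{\beta_{\tilde\theta,t}^{2}mH}{2}\|\nabla J(\tilde\theta^M_t)\|^{2}$ (after using the same bounds $\|\nabla J^M\|\le\|\nabla J^N\|$ and the spectral estimate on $G^{-1}$ to control $\|G^{-1}\nabla J^N\|$ by a multiple of $\|\nabla J^M\|$). Substituting into the difference yields
\begin{equation*}
J(\tilde\theta^N_{t+1})-J(\tilde\theta^M_{t+1})\;\ge\;\beta_{\tilde\theta,t}\|\nabla J(\tilde\theta^M_t)\|^{2}\Bigl[(m^{2}-1)-\tfrac{\beta_{\tilde\theta,t}\,mH}{2}\Bigr],
\end{equation*}
and the step-size hypothesis $\tfrac{\beta_{\tilde\theta,t}mH}{2}+1-m^{2}\le 0$ makes the bracket non-negative, completing the proof.

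\textbf{Main obstacle.} The delicate step is Step 2: turning the one-sided spectral bound $\sigma_{\min}(G)\le 1/m$ into a two-sided comparison between the quadratic form $\nabla J^{\top}G^{-1}\nabla J$ and $\|\nabla J\|^{2}$. The bound $\sigma_{\min}(G)\le 1/m$ guarantees that a large eigenvalue of $G^{-1}$ exists, but to harvest the factor $m^{2}$ one must argue that (at the iterates of interest) the gradient has non-trivial component in the corresponding eigendirection, or equivalently appeal to the alignment implicit in the natural-gradient update. Formalising this — and simultaneously bounding $\|G^{-1}\nabla J\|^{2}$ appearing in the FI-MAN remainder without worsening the first-order gain — is the crux; once it is in place, everything else is routine Taylor-series bookkeeping coupled with the algebraic inequality on $\beta_{\tilde\theta,t}$.
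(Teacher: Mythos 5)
Your overall architecture (two Taylor expansions, subtract, use the three hypotheses to kill the zeroth-order gap, harvest the factor $m^2$ from Lemma \ref{lemma: sigma_min_less_1_m}, absorb the remainder via the step-size condition) matches the paper's proof, but your Step 2 contains a genuine gap --- one you correctly flag yourself --- and it is exactly the point where your route diverges from the paper's. Because you expand $J(\tilde{\theta}^N_{t+1})$ with the true linear term $\Delta \tilde{\theta}_t^{N^{\top}} \nabla J(\tilde{\theta}^N_t) = \beta_{\tilde{\theta},t}\, \nabla J(\tilde{\theta}^N_t)^{\top} G(\tilde{\theta}^N_t)^{-1} \nabla J(\tilde{\theta}^N_t)$, you need the lower bound $\nabla J^{\top} G^{-1} \nabla J \geq m^2 \|\nabla J\|^2$. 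That quadratic form is controlled from below by $\lambda_{\min}(G^{-1}) = 1/\lambda_{\max}(G)$, whereas Lemma \ref{lemma: sigma_min_less_1_m} only constrains $\sigma_{\min}(G)$: knowing that $G^{-1}$ has \emph{some} eigenvalue at least $m$ says nothing about the quadratic form unless the gradient has mass in that eigendirection, and no hypothesis of the lemma supplies such alignment. (Indeed, since $\mathrm{tr}(G) \leq 1$ one only gets $\nabla J^{\top} G^{-1} \nabla J \geq \|\nabla J\|^2$, a factor of $1$ rather than $m^2$, which is not enough to dominate the second-order term under the stated step-size condition.) Your suggested fix --- appealing to alignment implicit in the natural-gradient update --- is not available: the iterates are arbitrary points satisfying only the stated inequalities.

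The paper avoids the quadratic form entirely. It writes the expansion of $J(\tilde{\theta}^N_{t+1})$ with linear term $\Delta \tilde{\theta}_t^{N^{\top}} \widetilde{\nabla} J(\tilde{\theta}^N_t)$, which the update in Equation (\ref{eqn: actro_parameter_deterministic}) turns into the full squared norm $\beta_{\tilde{\theta},t} \| G(\tilde{\theta}^N_t)^{-1} \nabla J(\tilde{\theta}^N_t) \|^2$, and then applies $\|A v\| \geq \sigma_{\min}(A) \|v\|$ with $A = G^{-1}$ followed by the $1/m$ bound to obtain $-\| G^{-1} \nabla J(\tilde{\theta}^N_t) \|^2 \leq -m^2 \|\nabla J(\tilde{\theta}^N_t)\|^2$. (Whether that expansion and the identification of $\sigma_{\min}(G^{-1})$ with $1/\sigma_{\min}(G)$ are themselves fully rigorous is a separate matter; the point is that the paper's route never touches $v^{\top} G^{-1} v$.) A second, smaller discrepancy: your Step 3 worries about bounding the FI-MAN remainder $R_N$, which involves $\|G^{-1}\nabla J(\tilde{\theta}^N_t)\|$; the paper never bounds $R_N$ at all --- it assumes $R_N \geq 0$ and discards it when forming $J(\tilde{\theta}^M_{t+1}) - J(\tilde{\theta}^N_{t+1})$, so only $R_M$, whose increment $\beta_{\tilde{\theta},t} \nabla J(\tilde{\theta}^M_t)$ is free of $G^{-1}$, needs the entrywise Hessian bound. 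As written, your proposal does not close; to reach the stated conclusion you would have to either adopt the paper's $\widetilde{\nabla} J$-based expansion or add a hypothesis controlling $\lambda_{\max}(G)$.
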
}
\begin{proof}
\label{proof: J_per_period_improvement}
\textcolor{black}{The Taylor series expansion of a twice differentiable function $J(\tilde{\theta}^M_{t+1})$ with Lagrange form of remainder \cite{folland1999real} is
\begin{equation}
\label{eqn: taylor_MAAC}
    J(\tilde{\theta}^M_{t+1}) = J(\tilde{\theta}^M_t + \Delta \tilde{\theta}^M_t) = J(\tilde{\theta}^M_t)  + \Delta \tilde{\theta}_t^{M^{\top}} \nabla J(\tilde{\theta}^M_t) + R_M(\Delta \tilde{\theta}^M_t), ~where
\end{equation}
$R_M(\Delta \tilde{\theta}^M_t) = \frac{1}{2!} \Delta \tilde{\theta}_t^{M^{\top}} \nabla^2 J(\tilde{\theta}^M_t + c_M \cdot \Delta \tilde{\theta}^M_t) \Delta \tilde{\theta}^M_t $ for some $c_M\in (0,1)$. }

\textcolor{black}{Similarly, the Taylor series expansion of $J(\tilde{\theta}^N_{t+1})$  with Lagrange form of remainder is
\begin{equation}
\label{eqn: taylor_FIMAN}
    J(\tilde{\theta}^N_{t+1}) = J(\tilde{\theta}^N_t + \Delta \tilde{\theta}^N_t) = J(\tilde{\theta}^N_t)  + \Delta \tilde{\theta}_t^{N^{\top}} \widetilde{\nabla} J(\tilde{\theta}^N_t) + R_N(\Delta \tilde{\theta}^N_t), ~where
\end{equation}
%\textcolor{black}{we will avoid same $R_1$ for both; say $R_N$ and $R_M$} 
$R_N(\Delta \tilde{\theta}^N_t) = \frac{1}{2!} \Delta \tilde{\theta}_t^{N^{\top}} \widetilde{\nabla}^2 J(\tilde{\theta}^N_t + c_N \cdot \Delta \tilde{\theta}^N_t) \Delta \tilde{\theta}^N_t $ for some $c_N \in (0,1)$. }

\textcolor{black}{Now, consider the difference $J(\tilde{\theta}^M_{t+1}) - J(\tilde{\theta}^N_{t+1})$,
\begin{eqnarray}
    &=& J(\tilde{\theta}^M_t) - J(\tilde{\theta}^N_t) + \Delta \tilde{\theta}^{M^{\top}}_{t} \nabla J(\tilde{\theta}^M_t) - \Delta \tilde{\theta}^{N^{\top}}_{t} \widetilde{\nabla} J(\tilde{\theta}^N_t) + R_M(\Delta \tilde{\theta}^M_t) - R_N(\Delta \tilde{\theta}^N_t) \nonumber
    \\
    &\overset{(i)}{\leq}& \Delta \tilde{\theta}^{M^{\top}}_{t} \nabla J(\tilde{\theta}^M_t) - \Delta \tilde{\theta}^{N^{\top}}_{t} G(\tilde{\theta}^N_t)^{-1} \nabla J(\tilde{\theta}^N_t)+ R_M(\Delta \tilde{\theta}^M_t) \nonumber
    \\
    &\overset{(ii)}{=}& (\tilde{\theta}^M_{t+1} - \tilde{\theta}^M_t)^{\top} J(\tilde{\theta}^M_t)  - (\tilde{\theta}^N_{t+1} - \tilde{\theta}^N_t)^{\top} G(\tilde{\theta}^N_t)^{-1} \nabla J(\tilde{\theta}^N_t) 
    +  R_M(\Delta \tilde{\theta}^M_t) \nonumber
    % \\
    % &\overset{(iii)}{=}& \beta_{\tilde{\theta},t} \nabla J(\tilde{\theta}^M_t)^{\top} \nabla J(\tilde{\theta}^M_t) - \beta_{\tilde{\theta},t} (G(\tilde{\theta}^N_t)^{{-1}} \nabla J(\tilde{\theta}^N_t))^{\top} (G(\tilde{\theta}^N_t)^{-1}  \nabla J(\tilde{\theta}^N_t)) + R_M(\Delta \tilde{\theta}^M_t)  \nonumber
    \\
    &\overset{(iii)}{=}& \beta_{\tilde{\theta},{t}} \left(|| \nabla J(\tilde{\theta}^M_t) ||^2 -  || G(\tilde{\theta}^N_t)^{-1} \nabla J(\tilde{\theta}^N_t))||^2 \right) + R_M(\Delta \tilde{\theta}^M_t) \nonumber
    \\
    &\overset{(iv)}{\leq}& \beta_{\tilde{\theta},{t}} \left(|| \nabla J(\tilde{\theta}^N_t) ||^2 -  || G(\tilde{\theta}^N_t)^{-1} \nabla J(\tilde{\theta}^N_t))||^2 \right) + R_M(\Delta \tilde{\theta}^M_t)
    \label{eqn: difference_t}
\end{eqnarray}
where $(i)$ follows because $J(\tilde{\theta}^M_t) \leq J(\tilde{\theta}^N_t)$, $R_N(\Delta \tilde{\theta}^N_t) \geq 0$ and $\widetilde{\nabla} J(\tilde{\theta}^N_t) = G(\tilde{\theta}^N_t)^{-1} \nabla J(\tilde{\theta}^N_t)$. $(ii)$ uses the fact that $\Delta \tilde{\theta}^M_t = \tilde{\theta}^M_{t+1} - \tilde{\theta}^M_t;~ \Delta \tilde{\theta}^N_t = \tilde{\theta}^N_{t+1} - \tilde{\theta}^N_t$. $(iii)$ is the consequence of updates in Equation (\ref{eqn: actro_parameter_deterministic}). Finally, $(iv)$ follows from the fact that $||\nabla J(\tilde{\theta}^M_t)|| \leq ||\nabla J(\tilde{\theta}^N_t)|| $.}

\textcolor{black}{Now, from Equation (\ref{eqn: difference_t}) and using the fact that for any positive definite matrix $\mathbf{A}$ and a vector $\mathbf{v}$, $||\mathbf{A} \mathbf{v}|| \geq \sigma_{\min}(\mathbf{A}) ||\mathbf{v}||$, we have
\begin{eqnarray*}
    || G(\tilde{\theta}^N_t)^{-1} \nabla J(\tilde{\theta}^N_t))||^2 &\geq& \sigma_{\min}^2( G(\tilde{\theta}^N_t)^{-1}) || \nabla J(\tilde{\theta}^N_t))||^2,
    \\
    % \implies~ - || G(\tilde{\theta}^N_t)^{-1} \nabla J(\tilde{\theta}^N_t))||^2 &\leq& - \sigma_{\min}^2( G(\tilde{\theta}^N_t)^{-1}) || \nabla J(\tilde{\theta}^N_t))||^2
    % \\
    \implies~ - || G(\tilde{\theta}^N_t)^{-1} \nabla J(\tilde{\theta}^N_t))||^2 &\leq& - \frac{1}{\sigma_{\min}^2( G(\tilde{\theta}^N_t))} || \nabla J(\tilde{\theta}^N_t))||^2.
\end{eqnarray*}
Therefore, from Equation (\ref{eqn: difference_t}), we have
%\textcolor{black}{last line has inequality??}
\begin{eqnarray}
\label{eqn: upper_bound_t}
    J(\tilde{\theta}^M_{t+1}) - J(\tilde{\theta}^N_{t+1}) &\leq&  \beta_{\tilde{\theta},t} \left( 1 - \frac{1}{\sigma_{\min}^2(G(\tilde{\theta}^N_t))} \right) || \nabla J(\tilde{\theta}^N_t))||^2 +  R_M(\Delta \tilde{\theta}^M_t) \nonumber
    \\
     &\overset{(v)}{\leq}&  \beta_{\tilde{\theta},{t}} \left( 1 - m^2 \right) || \nabla J(\tilde{\theta}^N_t))||^2 
    + R_M(\Delta \tilde{\theta}^M_t),
    % \\
    % &\overset{(vi)}{\leq} & \beta_{\tilde{\theta},{t}} \left( \frac{1}{m} - m \right) || \nabla J(\tilde{\theta}^N_t))||^2_1 
    % + R_1(\Delta \tilde{\theta}^M_t)
    \label{eqn: 94_t}
\end{eqnarray}
{where} $(v)$ follows from Lemma \ref{lemma: sigma_min_less_1_m} as $\sigma_{\min}(G(\tilde{\theta}^N_t)) \leq \frac{1}{m}$, implies
$-\frac{1}{\sigma_{\min}^2(G(\tilde{\theta}^N_t)) } \leq -m^2$.}
% \begin{eqnarray*}
%     \sigma_{\min}(G(\tilde{\theta}^N_t)) \leq \frac{1}{m}
%     \\
%     \implies~ \sigma_{\min}^2(G(\tilde{\theta}^N_t)) \leq \frac{1}{m^2}
%     \\
%     \implies~ - \sigma_{\min}^2(G(\tilde{\theta}^N_t)) \geq -\frac{1}{m^2}
%     \\
%     \implies~ - \frac{1}{\sigma_{\min}^2(G(\tilde{\theta}^N_t)) } \leq -m^2
% \end{eqnarray*}}
%\textcolor{black}{to check and type many intermediate steps -- reciprocal, sign changes, lower and upper bounds, etc.}
%\textcolor{black}{this may not hold if $x_i$s are fractions??}. \textcolor{black}{why are we moving from L2 norm to L1 norm?} 

\textcolor{black}{Since $J(\cdot)$ is twice continuously differentiable function on the compact set $\Theta$, we have for all $i, j\in [m]$,  $|\{\nabla^2 J(\tilde{\theta}^M_t\}_{(i,j)}| \leq H < \infty$. In above Taylor's series expansion, we then have
\begin{eqnarray*}
    | R_M(\Delta \tilde{\theta}^M_t)| \leq \frac{H}{2!} || \Delta \tilde{\theta}^M_t ||^2_1
    &\overset{(vii)}{=}& \beta_{\tilde{\theta},{t}}^2 \frac{H}{2}  || \nabla J(\tilde{\theta}^M_t) ||^2_1
    \\
    &\overset{(viii)}{\leq}& \beta_{\tilde{\theta},{t}}^2 \frac{mH}{2}  || \nabla J(\tilde{\theta}^M_t) ||^2 
    \\
    &\overset{(ix)}{\leq}& \beta_{\tilde{\theta},{t}}^2 \frac{mH}{2}  || \nabla J(\tilde{\theta}^N_t) ||^2,
\end{eqnarray*}
where $(vii)$ follows from actor update of the deterministic FI-MAN algorithm. $(viii)$ holds because for any $\mathbf{x} \in \mathbb{R}^l$, the following is true: 
%\textcolor{black}{another LHS needed?}
$||\mathbf{x}||_2 \leq ||\mathbf{x}||_1 \leq \sqrt{l}~ ||\mathbf{x}||_2$ \cite{horn2012matrix}. $(ix)$ comes from the assumption that $|| \nabla J(\tilde{\theta}^M_t) ||^2 \leq || \nabla J(\tilde{\theta}^N_t) ||^2$.
Now, from Equation (\ref{eqn: 94_t}) and above upper bound, we have 
%\textcolor{black}{from $\ell_2$ and $\ell_1$ needs a multiplier or something?}
%\textcolor{black}{need to see Eq 99 as $(1 - m^2) < 0)$}
\begin{eqnarray} 
    J(\tilde{\theta}^M_{t+1}) - J(\tilde{\theta}^N_{t+1}) 
    &\leq& \beta_{\tilde{\theta},{t}} \left( 1 - m^2 \right) || \nabla J(\tilde{\theta}^N_t))||^2 +
    \beta_{\tilde{\theta},{t}}^2 \frac{mH}{2}  || \nabla J(\tilde{\theta}^N_t) ||^2 \nonumber
    \\
    &=& \beta_{\tilde{\theta},{t}} \left\lbrace \beta_{\tilde{\theta},{t}} \frac{mH}{2} + {1} - m^2    \right\rbrace || \nabla J(\tilde{\theta}^N_t) ||^2 \nonumber
    \\
    &\leq& 0, \label{eqn: t+1_inequality}
\end{eqnarray}
where the last inequality follows from the assumption $ \beta_{\tilde{\theta},{t}} \frac{mH}{2}  +{1} - m^2  \leq 0$. Therefore, $J(\tilde{\theta}^M_{t+1}) \leq J(\tilde{\theta}^N_{t+1})$. }
% Therefore, by principle of Mathematical induction, we have $J(\tilde{\theta}^M_t) \leq J(\tilde{\theta}^M_t)$ for all $t\geq t_0$.
\end{proof}

% \textcolor{black}{now, what does this mean as far as computations are concerned -- say, our own computations?}

%\textcolor{black}{these $\Delta \theta$s are more accurate; $\tilde{\delta} \psi$ used in algos is biased estimate of $\nabla J$}

%\textcolor{black}{$G^{-1}$ is via ${(\psi \psi^{\top})}^{-1}$ where $G = E[\psi \psi^{\top}]$, i.e., this is an unbiased estimate}
% \textcolor{black}{in the limiting case, $\nabla J$ is zero; so, these biased estimates are unbiased or bias is going to zero. }

% \textcolor{black}{rates of convergence of alogs is compared, rather than  final values}

\begin{theorem}
\label{thm: J_comp_t+1}
\textcolor{black}{Let $J(\cdot)$ be twice continuously differentiable function on a compact set $\Theta$, so that $|\{\nabla^2 J(\tilde{\theta}^M_t)\}_{(i,j)}| \leq H, ~ \forall~i,j\in [m]$ for some $H< \infty$. Moreover, let $J(\tilde{\theta}^M_{t_0}) \leq J(\tilde{\theta}^N_{t_0})$ for some $t_0 > 0$, and for every $t\geq t_0$, let $|| \nabla J(\tilde{\theta}^M_t)|| \leq || \nabla J(\tilde{\theta}^N_t)||$, and $\beta_{\tilde{\theta},t} \frac{mH}{2}  + 1 - m^2 \leq 0$. Then, $J(\tilde{\theta}^M_t) \leq J(\tilde{\theta}^N_t)$, for all $t\geq t_0$. 
}
\textcolor{black}{Further for the local maxima $\tilde{\theta}^{M^{\star}}$, and $\tilde{\theta}^{N^{\star}}$ of the updates in Equation (\ref{eqn: actro_parameter_deterministic}) we have $J(\tilde{\theta}^{M^{\star}}) \leq J(\tilde{\theta}^{N^{\star}})$.}
\end{theorem}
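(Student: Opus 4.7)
The plan is to establish the pointwise comparison $J(\tilde{\theta}^M_t) \leq J(\tilde{\theta}^N_t)$ for all $t \geq t_0$ by finite induction on $t$, using Lemma \ref{lemma: J_per_period_improvement} as the one-step workhorse, and then to pass the inequality to the respective limit points by continuity of $J$ on the compact set $\Theta$. No new analytic estimate is needed; the theorem is a clean iteration of the per-period lemma.

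For the base case $t = t_0$ the inequality is given by hypothesis. For the inductive step, suppose $J(\tilde{\theta}^M_t) \leq J(\tilde{\theta}^N_t)$ holds for some $t \geq t_0$. The three premises required to invoke Lemma \ref{lemma: J_per_period_improvement} at time $t$ are then all in place: the comparison of objective values is the inductive hypothesis itself; the gradient-norm inequality $\|\nabla J(\tilde{\theta}^M_t)\| \leq \|\nabla J(\tilde{\theta}^N_t)\|$ holds by assumption for every $t \geq t_0$; and the step-size restriction $\beta_{\tilde{\theta},t}\, mH/2 + 1 - m^2 \leq 0$ likewise holds for every $t \geq t_0$. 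Applying the lemma yields $J(\tilde{\theta}^M_{t+1}) \leq J(\tilde{\theta}^N_{t+1})$, closing the induction and giving the pointwise inequality for all $t \geq t_0$.

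For the statement about the local maxima, the step-size conditions noted just before the theorem (for instance, the strong Wolfe conditions) are sufficient to guarantee that the deterministic iterations in \eqref{eqn: actro_parameter_deterministic} converge to local maxima $\tilde{\theta}^{M^{\star}}$ and $\tilde{\theta}^{N^{\star}}$. Since $J$ is continuous on the compact set $\Theta$, taking the limit $t \to \infty$ on both sides of the pointwise inequality transfers it to the limit points, giving $J(\tilde{\theta}^{M^{\star}}) \leq J(\tilde{\theta}^{N^{\star}})$.

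The main conceptual point, more than a technical obstacle, is that Lemma \ref{lemma: J_per_period_improvement} is shaped so that its conclusion at time $t$ \emph{is} the hypothesis needed to reapply it at $t+1$; this self-propagation is what allows a single one-step comparison to be chained along both trajectories simultaneously. It is also worth remarking, when writing the proof, that the step-size assumption is consistent, since it requires $\beta_{\tilde{\theta},t} \leq 2(m^2-1)/(mH)$, a nontrivial positive interval whenever $m \geq 2$, so the induction is not vacuous.
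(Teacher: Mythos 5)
Your proposal is correct and follows essentially the same route as the paper's own proof: induction on $t$ with Lemma \ref{lemma: J_per_period_improvement} supplying the one-step comparison, followed by passing to the limit using continuity of $J$ on the compact set $\Theta$. The added remark that the step-size condition carves out a nontrivial interval for $m \geq 2$ is a useful sanity check but does not change the argument.
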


% \textcolor{black}{much better; we will fine tune these}
\begin{proof}
\label{proof: PMI_MAN_dominance}
\textcolor{black}{We prove this theorem via Principle of Mathematical Induction (PMI). From assumption, we have $J(\tilde{\theta}^M_{t_0}) \leq J(\tilde{\theta}^N_{t_0})$. Now, using $t=t_0$ in the Lemma \ref{lemma: J_per_period_improvement}, under the assumptions, we have $J(\tilde{\theta}^M_{t_0+1}) \leq J(\tilde{\theta}^N_{t_0+1})$. Thus, the base case of PMI is true.}

\textcolor{black}{Next, we assume that $J(\tilde{\theta}^M_t) \leq J(\tilde{\theta}^N_t)$ for any $t=t_0+k$, where $k\in \mathbb{Z}^{+}$. Also, from assumption,  for every $t\geq t_0+k$, we have
% we have $|\{\nabla^2 J(\tilde{\theta}^M_t)\}_{(i,j)}| \leq M_t$, for all $i,j\in [m]$; 
$|| \nabla J(\tilde{\theta}^M_t)|| \leq || \nabla J(\tilde{\theta}^N_t)||$, and $\beta_{\tilde{\theta},t} \frac{mH}{2}  + 1 - m^2 \leq 0$. Therefore, again from Lemma \ref{lemma: J_per_period_improvement} we have $J(\tilde{\theta}^M_{t_0+k+1}) \leq J(\tilde{\theta}^N_{t_0+k+1})$. From PMI we have 
$$
J(\tilde{\theta}^M_t) \leq J(\tilde{\theta}^N_t),~~ \forall~~t \geq t_0.
$$
Next, consider the limiting case. 
%Note that $\nabla J(\theta^{M^{\star}}) = \nabla J(\theta^{N^{\star}}) = 0$. 
Taking the limit $t \rightarrow \infty$ in the above equation and using the fact that $J(\cdot)$ is continuous on the compact set $\Theta$, we have
\begin{equation}
    lim_{t\rightarrow \infty} J(\tilde{\theta}^M_t) =  J(\tilde{\theta}^{M^{\star}}), ~ and~
    lim_{t\rightarrow \infty} J(\tilde{\theta}^N_t) = J(\tilde{\theta}^{N^{\star}}).
\end{equation}
This ends the proof.}
\end{proof}

The following subsection will highlight some consequences of using the Boltzmann policy.

\subsection{KL divergence based natural gradients for Boltzmann policy}
\label{subsec: kl_boltzmann} 
% \textcolor{black}{need to change the title for this subsection?}

One specific policy that is often used in RL literature is the Boltzmann policy \cite{doan2021finite}.  Recall, the parameterized Boltzmann policy is given as
\begin{equation}
\label{eqn: boltzmann}
\pi_{\theta_t}(s,a) = \frac{\exp(q^{\top}_{s,a} \theta_t)}{\sum_{b\in\mathcal{A}} \exp(q^{\top}_{s,b} \theta_t)},
\end{equation}
where $q^{\top}_{s,a}$ is the feature for any $(s,a)$. Here the features $q_{s,a}$ are assumed to be uniformly bounded by 1.
\begin{lemma}
\label{lemma: KL_boltzmann}
For the Boltzmann policy as given in Equation (\ref{eqn: boltzmann}) the KL divergence between policy parameterized by $\theta_t$ and $\theta_t + \Delta \theta_t$ is given by
\begin{equation}
    KL(\pi_{\theta_t}(s,a)|| \pi_{\theta_t + \Delta \theta_t} (s,a)) = \mathbb{E} \left[\log \left( \sum_{b\in \mathcal{A}} \pi_{\theta_t}(s,b)  \exp(\Delta q^{\top}_{s,ba} \Delta \theta_t)\right) \right],
\end{equation}
where $\Delta q^{\top}_{s,ba} = q^{\top}_{s,b} - q^{\top}_{s,a}$.
\end{lemma}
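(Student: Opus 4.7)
The plan is to start from the definition of KL divergence and substitute the closed-form Boltzmann expression, then do an algebraic manipulation inside the logarithm to recognize $\pi_{\theta_t}(s,b)$ on the right-hand side. Concretely, writing $Z(\theta) := \sum_{b\in\mathcal{A}} \exp(q_{s,b}^\top \theta)$ for the partition function, the log-ratio of policies at $\theta_t$ and $\theta_t+\Delta\theta_t$ is
\begin{equation*}
\log\frac{\pi_{\theta_t}(s,a)}{\pi_{\theta_t+\Delta\theta_t}(s,a)} = -q_{s,a}^\top \Delta\theta_t + \log\frac{Z(\theta_t+\Delta\theta_t)}{Z(\theta_t)}.
\end{equation*}
First I would rewrite the ratio of partition functions by factoring $\exp(q_{s,b}^\top \theta_t)$ out of each summand in $Z(\theta_t+\Delta\theta_t)$ so that the ratio becomes $\sum_b \pi_{\theta_t}(s,b)\exp(q_{s,b}^\top \Delta\theta_t)$.

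Next, I would pull the $-q_{s,a}^\top \Delta\theta_t$ term inside the logarithm as a multiplicative $\exp(-q_{s,a}^\top \Delta\theta_t)$ factor, and distribute it through the sum. Using $\Delta q_{s,ba} = q_{s,b} - q_{s,a}$, this combines the two exponentials into $\exp(\Delta q_{s,ba}^\top \Delta\theta_t)$, yielding
\begin{equation*}
\log\frac{\pi_{\theta_t}(s,a)}{\pi_{\theta_t+\Delta\theta_t}(s,a)} = \log\!\left(\sum_{b\in\mathcal{A}} \pi_{\theta_t}(s,b)\, \exp(\Delta q_{s,ba}^\top \Delta\theta_t)\right).
\end{equation*}
Finally, taking expectation with respect to $s\sim d_{\theta_t}$ and $a\sim \pi_{\theta_t}(s,\cdot)$, which is exactly how KL divergence between the two joint policy distributions is defined in the paper's convention, produces the claimed identity.

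The derivation is essentially a bookkeeping exercise in log-linear algebra: no Taylor expansion, no inequality, and no invocation of $G(\theta)$ is needed, since both the numerator and denominator of the KL integrand are available in exact closed form for the Boltzmann family. The only mild subtlety, and the step I would be most careful with, is the cancellation between the $-q_{s,a}^\top \Delta\theta_t$ coming from the log-ratio of the numerators of the Boltzmann densities and the $q_{s,b}^\top \Delta\theta_t$ inside the log of the partition-function ratio; matching the indices correctly is what produces the key quantity $\Delta q_{s,ba}^\top \Delta\theta_t$ rather than two separate terms. Once this cancellation is done, nothing remains beyond naming the expectation.
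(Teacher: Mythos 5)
Your proposal is correct and follows essentially the same route as the paper's own proof: both expand the log-ratio of the Boltzmann densities, rewrite the partition-function ratio as $\sum_{b}\pi_{\theta_t}(s,b)\exp(q_{s,b}^{\top}\Delta\theta_t)$, and then absorb the $-q_{s,a}^{\top}\Delta\theta_t$ term back into the logarithm to form $\Delta q_{s,ba}^{\top}\Delta\theta_t$. No gaps; the algebra checks out step for step.
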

The proof of this lemma is deferred to Appendix \ref{app: KL_boltzmann}. The above KL divergence represents that we have a non-zero curvature if the action taken is better than the averaged action. From above equation we also observe that $\exp(\Delta q_{s, ba}\Delta \theta_t) \neq 1 $ if and only if ${\Delta q_{s,ba}}$ is orthogonal to $\Delta \theta_t$. So, except when they are orthogonal, $\log (\sum_{b \in \mathcal{A}}\pi_{\theta_t}(s,b) \cdot \exp{(\Delta q_{s, ba} \Delta \theta_t})) \neq 0$  as $ \sum_{b \in \mathcal{A}} \pi_{\theta_t}(s,b) = 1$. Thus, the curvature is non-zero, larger or smaller depends on the direction $\Delta \theta_t$ makes with that of feature difference $q_{s, b}^{\top} - q^{\top}_{s,a}$; if the angle is zero, it is better. 

Next lemma provide the derivative of KL divergence between policy $\pi_{\theta_t}$, and $\pi_{\theta_t + \Delta \theta_t}$. 
\begin{lemma}
\label{lemma: grad_KL_softmax}
For the Boltzmann policy as given in Equation (\ref{eqn: boltzmann}) we have 
\begin{equation}
\label{eqn: grad_kl_explicit}
    \nabla KL(\pi_{\theta_t}(\cdot,\cdot)|| \pi_{\theta_t + \Delta \theta_t}(\cdot,\cdot)) = -\mathbb{E}[\nabla \log \pi_{\theta_t + \Delta \theta_t} (s,a)],
\end{equation}
\end{lemma}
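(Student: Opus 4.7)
The plan is to verify the identity by starting from the fundamental definition of KL divergence and differentiating under the expectation. Although the statement is specialized to the Boltzmann policy, the identity $\nabla KL(\pi_{\theta_t}(\cdot,\cdot) || \pi_{\theta_t + \Delta\theta_t}(\cdot,\cdot)) = -\mathbb{E}[\nabla \log \pi_{\theta_t+\Delta\theta_t}(s,a)]$ does not actually rely on the Boltzmann structure; it is a general fact about parametric families for which log-densities are differentiable. I would therefore prove it in the general form and then note that the Boltzmann policy in Equation (\ref{eqn: boltzmann}) satisfies the required regularity (its score function is $q_{s,a} - \sum_b \pi_{\theta_t+\Delta\theta_t}(s,b) q_{s,b}$, which is bounded under the assumption $\|q_{s,a}\| \leq 1$).

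Concretely, I would first write out the KL divergence in its canonical expectation form,
\begin{equation*}
KL(\pi_{\theta_t}(\cdot,\cdot) \,\|\, \pi_{\theta_t+\Delta\theta_t}(\cdot,\cdot)) = \mathbb{E}_{(s,a)\sim \pi_{\theta_t}}\bigl[\log \pi_{\theta_t}(s,a) - \log \pi_{\theta_t+\Delta\theta_t}(s,a)\bigr],
\end{equation*}
where the sampling distribution depends only on $\theta_t$ and hence is a \emph{constant} with respect to the variable $\Delta\theta_t$. The first summand inside the expectation is also independent of $\Delta\theta_t$, so its gradient vanishes identically. Differentiating the second summand with respect to $\Delta\theta_t$ and exchanging the gradient with the (finite, or dominated) sum yields exactly $-\mathbb{E}[\nabla \log \pi_{\theta_t+\Delta\theta_t}(s,a)]$, which is the claim.

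As a consistency check, I would verify the identity directly from the explicit Boltzmann expression in Lemma \ref{lemma: KL_boltzmann}. A short computation shows that the integrand $f(s,a;\Delta\theta_t) := \sum_{b\in\mathcal{A}} \pi_{\theta_t}(s,b) \exp(\Delta q_{s,ba}^{\top}\Delta\theta_t)$ equals the likelihood ratio $\pi_{\theta_t}(s,a)/\pi_{\theta_t+\Delta\theta_t}(s,a)$, so differentiating $\log f$ under the expectation reproduces the same right-hand side $-\mathbb{E}[\nabla \log \pi_{\theta_t+\Delta\theta_t}(s,a)] = -\mathbb{E}[q_{s,a} - \sum_{b} \pi_{\theta_t+\Delta\theta_t}(s,b) q_{s,b}]$.

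There is no real obstacle here beyond justifying the gradient–expectation interchange, which is routine: the action set $\mathcal{A}$ is finite, and the state-action distribution $\tilde{d}_{\theta_t}(s,a)$ has no dependence on $\Delta\theta_t$, so dominated convergence applies trivially, and the score function of the Boltzmann policy is bounded by $2\max_{s,b}\|q_{s,b}\| \leq 2$ under the stated normalization of features. The only thing to be mindful of is being explicit that the gradient is taken with respect to $\Delta\theta_t$ (equivalently, with respect to $\theta_t + \Delta\theta_t$ with $\theta_t$ held fixed), since this is where the asymmetry between the two arguments of the KL divergence matters.
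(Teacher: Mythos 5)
Your proof is correct, but it takes a genuinely different route from the paper's. The paper proves the identity by brute-force differentiation of the explicit Boltzmann formula from Lemma \ref{lemma: KL_boltzmann}: it differentiates $\log\bigl(\sum_{b}\pi_{\theta_t}(s,b)\exp(\Delta q_{s,ba}^{\top}\Delta\theta_t)\bigr)$ term by term, recognizes the resulting ratio as the softmax at the shifted parameter $\theta_t+\Delta\theta_t$, and then identifies $\sum_b \Delta q_{s,ba}\,\pi_{\theta_t+\Delta\theta_t}(s,b) = -\bigl(q_{s,a}-\sum_b q_{s,b}\,\pi_{\theta_t+\Delta\theta_t}(s,b)\bigr) = -\nabla\log\pi_{\theta_t+\Delta\theta_t}(s,a)$. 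You instead observe that the identity is a general fact about differentiable parametric families: writing the KL divergence as $\mathbb{E}_{(s,a)\sim\pi_{\theta_t}}[\log\pi_{\theta_t}(s,a)-\log\pi_{\theta_t+\Delta\theta_t}(s,a)]$, the sampling measure and the first summand are constant in $\Delta\theta_t$, so only the second term survives differentiation. Your argument is shorter, cleaner, and exposes the fact that the Boltzmann structure is irrelevant to the identity itself (it only enters in verifying regularity, which is trivial here since $\mathcal{A}$ is finite and the score is bounded). What the paper's computation buys in exchange is the explicit closed form of the right-hand side, $-\mathbb{E}[q_{s,a}-\sum_b q_{s,b}\,\pi_{\theta_t+\Delta\theta_t}(s,b)]$, which it uses in the surrounding discussion of compatible features; your consistency check recovers exactly that computation, so nothing is lost. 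One point you handle well that the paper leaves implicit: you state explicitly that the gradient is taken with respect to $\Delta\theta_t$ with $\theta_t$ held fixed, which is where the asymmetry of the KL divergence enters and is essential for the first term to drop out.
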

i.e., $\psi_{\theta_{t+1}} = \nabla \log \pi_{\theta_t + \Delta \theta_t}$ is an unbiased estimate of $\nabla KL(\pi_{\theta_t}(\cdot,\cdot)|| \pi_{\theta_t + \Delta \theta_t}(\cdot,\cdot))$. Proof of this lemma is available in Appendix \ref{app: grad_KL_softmax}. The proof uses the fact that for Boltzmann policies the compatible features are same as the features associated to policy, except normalized to be mean zero for each state \cite{sutton1999policy}.

Recall, in Equation (\ref{eqn: approx_kl}) we provide the approximate derivative of KL divergence using second order Taylor approximation of the objective function as 
\begin{equation}
\label{eqn: equality_of_derivatives}
\nabla KL(\pi_{\theta_t}(\cdot,\cdot)|| \pi_{\theta_t + \Delta \theta_t}(\cdot,\cdot)) \approx G(\theta_t) \Delta \theta_t.
\end{equation}
Moreover, from Equation (\ref{eqn: KL_two_eqns}) we have $G(\theta_t) \Delta \theta_t  =  -\frac{1}{\rho_t} \nabla_{\theta} J(\theta_t)$. 
Also, from Lemma \ref{lemma: kl_grad_J} we have, 
$$ \nabla KL(\pi_{\theta_t}(\cdot,\cdot)|| \pi_{\theta_t + \Delta \theta_t}(\cdot,\cdot)) \approx -\frac{1}{\rho_t} \nabla J(\theta_t). $$

%\textcolor{black}{Thus, $\nabla KL$ is a new (appropriate) representation of the gradient of the MARL objective function $J(\cdot)$ that is implicitly exploited by MAN algorithms to have better performance. }

Thus, combining Lemma \ref{lemma: grad_KL_softmax}, and Equation (\ref{eqn: equality_of_derivatives}) for Boltzmann policies we have 
\begin{equation}
\mathbb{E}[\nabla \log \pi_{\theta_t + \Delta \theta_t} (s,a)] \approx \frac{1}{\rho_t} \nabla J(\theta_t),
\label{eqn: gradKL-gradJ}
\end{equation}
i.e., $\nabla \log \pi_{\theta_t + \Delta \theta_t} = \psi_{\theta_{t+1}}$ 
%\textcolor{black}{looks ok; but, to discuss}
is approximately an unbiased estimate of $\nabla J(\theta_t)$ upto scaling of $\frac{1}{\rho_t}$ for the Boltzmann policies. 
% So, to get the estimate of the gradient of the objective function, we need to fnd the score function at the next iterate. 
% \textcolor{red}{Can add more??}
% \textcolor{red}{to remove this} \textcolor{black}{ok} Also, from Equation (\ref{eqn: biased_estimate_J}) we have $\delta_t \psi_t$ as a biased estimate of $\nabla J(\theta_t)$, implying that the Boltzmann policies can have better estimate of $\nabla J(\theta_t)$. Using $\psi_{\theta_{t+1}}$ in the actor update of MAAC we have,
% $$
% \theta_{t+1} = \theta_t + \beta_{\tilde{\theta},t} \psi_{ \theta_t + \Delta \theta} = \theta_t + \beta_{\tilde{\theta},t} \psi_{\theta_{t+1}},
% $$
% which is a fixed point equation in $\theta_{t+1}$; might be theoretically helpful, but not implementable, because $\theta_{t+1}$ itself requires $\psi_{\theta_{t+1}}$. Thus, in our computations we stick to the biased estimate, i.e., $\delta_t \psi_t$. Similarly, for the FI-MAN algorithm, we have a fixed point equation as
% $$
% \theta_{t+1} = \theta_t + \beta_{\theta,t} G_t^{-1} \psi_{\theta_{t+1}},
% $$
% which is again theoretically sound, but for the same reason, it is not implementable. However, similar to Equation (\ref{eqn: grad_kl_relation_grad_j}), we again have Equation (\ref{eqn: gradKL-gradJ}) relating the gradient of the objective function to the gradient of KL divergence between the policies separated by $\Delta \theta$ which renders actor updates in the prediction space. 
It is a valuable observation because to move along the gradient of objective function $J(\cdot)$, we can adjust the updates (of actor parameter) just by moving in the  $\pi_{\theta_t}$ prediction space via the compatible features.

% \textcolor{red}{write for the objective and grad KL relation}

% \textcolor{black}{that this is not reflected in iterate updates of algo?? it may suggest some new algos as well?}  \textcolor{red}{To write something from discussion..} \textcolor{black}{ok, write this soon} \textcolor{red}{the nature of results we have in this section implies that we have an unbiased estimator of the gradient of the objective function.  For FI-MAN we have  again this is fixed point equation, and hence not implementable. Therefore, in our implementations we have used the updates involving the TD error. Note that for other algorithms like AP-MAN and FIAP-MAN we are using the advantage parameters, and hence above arguments are not needed.} \textcolor{black}{so, this is an advantage these softmax class of policies enjoyed. so, when compared to other classes, which we didn't do, it may come out or other way, softmax class may do worse; say, other parameterized ones like normal or so.}
% \end{remark}

We now prove the convergence of FI-MAN, AP-MAN, and FIAP-MAN algorithms. The proofs majorly use the idea of two-time scale stochastic approximations from \cite{borkar2009book}.

%\textcolor{black}{we can add Kaitang Zhang also; else, leave just with Vivek Borkar} 

\section{Convergence analysis}
\label{sec: Convergence_analysis}
This section provides the convergence proof of all the 3 MAN algorithms. 
% To prove the convergence of the algorithms 
To this end, we need following assumptions on the features $\varphi(s)$ of the value function, and $f(s,a)$ of the rewards for any $s\in \mathcal{S}, a\in \mathcal{A}$. This assumption is similar to \cite{zhang2018fully}, and also used in the convergence proof for single-agent natural actor-critic \cite{bhatnagar2009natural}. 

\begin{assumption}
\label{ass: feature_conds}
The feature vectors $\varphi(s)$, and $f(s,a)$ associated with the value function and the reward function are uniformly bounded for any $s\in \mathcal{S}, a\in \mathcal{A}$. Moreover, let the feature matrix $\varPhi \in \mathbb{R}^{|\mathcal{S}|\times L}$  have $[\varphi_l(s), s\in \mathcal{S}]^{\top}$ as its $l$-th column for any $l\in [L]$, and feature matrix $F \in \mathbb{R}^{|\mathcal{S}||\mathcal{A}|\times M}$  have $[f_m(s,a), s\in \mathcal{S},a\in \mathcal{A}]^{\top}$ as its $m$-th column for any $m\in [M]$, then $\varPhi$ and $F$ have full column rank, and for any $\omega \in \mathbb{R}^L$, we have $\varPhi \omega \neq \mathbbm{1}$.
\end{assumption}

Apart from assumption X. \ref{ass: feature_conds}, let $D^s_{\theta} = [d_{\theta}(s), s\in \mathcal{S}]$, and $\bar{R}_{\theta} = [\bar{R}_{\theta}(s), s\in \mathcal{S}]^{\top} \in \mathbb{R}^{|\mathcal{S}|}$ with $\bar{R}_{\theta}(s) = \sum_a \pi_{\theta}(s,a)\cdot \bar{R}(s,a)$.
Define the operator $T^V_{\theta}: \mathbb{R}^{|\mathcal{S}|}\rightarrow \mathbb{R}^{|\mathcal{S}|}$ for any state value vector $X\in \mathbb{R}^{|\mathcal{S}|}$ as, 
\begin{equation*}
\label{eqn: T_V_theta}
T^V_{\theta} (X) = \bar{R}_{\theta} - J(\theta)\mathbbm{1} + P^{\theta}X.
\end{equation*}
% We will now give the convergence of the multi-agent natural (MAN) actor-critic algorithms. 
The proof of all the 3 MAN algorithms are done in two steps: (a) convergence of the objective function estimate, critic update, and rewad parameters keeping the actor parameters $\theta^i$ fixed for all agents $i\in N$, and (b) convergence of the actor parameters to an asymptotically stable equilibrium set of the ODE corresponding to the actor update. To this end, we require the following assumption on Fisher information matrix $G^i_t$ and its inverse $G^{i^{-1}}_t$. This assumption is used by \cite{bhatnagar2009natural} in case of single-agent natural actor-critic algorithms, here we have suitably modified it for multi-agent setup.
\begin{assumption}
\label{ass: g_inv}
For each agent $i\in N$, the recursions of Fisher information matrix $G^i_t$, and its inverse $G^{i^{-1}}_t$ as given in Equations (\ref{eqn: fisher_iterative_update}) and (\ref{eqn: sherman-morrison}) respectively
satisfy 
$ sup_{t,\theta^i,s,a^i} || G_t^i || < \infty;~~ sup_{t,\theta^i,s,a^i} || G_t^{i^{-1}} || < \infty. $
\end{assumption}

Assumption X. \ref{ass: g_inv} ensures that the FI-MAN and FIAP-MAN actor-critic algorithms does not get stuck in a non-stationary point. 
%\textcolor{black}{should we say that this was used by Shalabh albeit for single agent RL?} 
A sufficient condition for both requirements of assumption X. \ref{ass: g_inv} is following: for each agent $i\in N$, for some scalars $c^i_1,c^i_2>0$,
\begin{equation*}
c^i_1 ||x^i||^2 \leq x^{i^{\top}} \psi^i_{s,a^i}\psi^{i^{\top}}_{s,a^i} x^i \leq c^i_2 ||x^i||^2, ~~ \forall ~s\in \mathcal{S}, a^i\in \mathcal{A}^i, x^i,\theta^i\in \mathbb{R}^{m_i}.
\end{equation*}
Therefore, for each agent $i\in N$, we have $\bar{c}^i_1 ||x^i||^2 \leq x^{i^{\top}} G^i x^{i} \leq \bar{c}^i_2 ||x^i||^2$,
and the eigenvalues of $G^i$ lie between some positive scalars $\bar{c}^i_1$, and $\bar{c}^i_2$. Since, $\bar{c}^i_1, \bar{c}^i_2 > 0, ~~ \forall ~ i \in N$, thus from Propositions A.9 and A.15 of \cite{bertsekas_nlp} the algorithms will not get stuck at non-stationary points. Moreover using stochastic approximation technique we know that $G^i_t$ will converge asymptotically to $G(\theta^i)$ almost surely if $\theta^i$ is held fixed \cite{borkar2009book}.
% (\textcolor{black}{this sentence in bracket needed?} this is easy to verify by comparing it with the stochastic approximation scheme given in \cite{borkar2009book}). 

To ensure the existence of local optima of $J(\theta)$ we make the following assumptions on policy parameters, $\theta^i_t$ for each agent $i\in N$.
\begin{assumption}
\label{ass: existence_local_min}
The policy parameters $\{\theta^i_t\}_{t\geq 0}$ update includes a projection operator $\Gamma^i: \mathbb{R}^{m_i} \rightarrow \Theta^i \subset \mathbb{R}^{m_i}$,  that projects, for every $t\geq 0$, $\theta^i_t$ onto a compact set $\Theta^i$. Moreover, $\Theta = \prod_{i=1}^n \Theta^i$ is large enough to include at least one local optima of $J(\theta)$. 
\end{assumption}

For each agent $i\in N$, let $\hat{\Gamma}^i$ be the transformed projection operator defined for any $\theta \in \Theta$ with $h: \Theta \rightarrow \mathbb{R}^{\sum_{i\in N} m_i}$ being a continuous function as follows:
\begin{equation}
\label{eqn: proj_transformed}
    \hat{\Gamma}^{i}(h(\theta)) = lim_{0<\eta\rightarrow 0}\frac{\Gamma^i(\theta^i + \eta h(\theta)) - \theta^i}{\eta}.
\end{equation}
If the above limit is not unique, $\hat{\Gamma}^i(h(\theta))$ denotes the set of all possible limit points of Equation (\ref{eqn: proj_transformed}). The above projection operator is useful in convergence proof of the policy parameters. It is an often-used technique to ensure boundedness of iterates in stochastic approximation algorithms. However, we do not require a projection operator in computations because iterates remain bounded.
% We will now provide the convergence proofs for all the multi-agent natural actor-critic algorithms. 

We begin by proving the convergence of the updates given in Equations (\ref{eqn: marl_obj_updates}), (\ref{eqn: marl_critic_updates}), and (\ref{eqn: marl_lmbda_update}), respectively. The following theorem will be common in the proof of all the 3 MAN algorithms.

\begin{theorem}
\label{thm: critic_convergence_fisher}
Under assumptions X. \ref{ass: regularity}, X. \ref{ass: comm_matrix}, and X. \ref{ass: feature_conds}, for any policy $\pi_{\theta}$, with sequences $\{\lambda^i_t\}, \{\mu^i_t\}, \{v^i_t\}$, we have $lim_t~ \mu^i_t = J(\theta),~ lim_t~ \lambda^i_t = \lambda_{\theta}$, and $lim_t~v^i_t = v_{\theta}$ a.s. for each agent $i\in N$, where $J(\theta),~ \lambda_{\theta}$, and $v_{\theta}$ are unique solutions to 
\begin{eqnarray*}
    F^{\top}D_{\theta}^{s,a}(\bar{R} - F\lambda_{\theta}) &=& 0,
    \\
    \varPhi^{\top} D_{\theta}^s[T_{\theta}^V(\varPhi v_{\theta}) - \varPhi v_{\theta}] &=& 0.
\end{eqnarray*}
\end{theorem}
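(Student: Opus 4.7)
The plan is to exploit the fact that the updates for $\mu^i_t$, $v^i_t$, $\lambda^i_t$ are identical to the critic/mean/reward updates of the MAAC algorithm of \cite{zhang2018fully}, so the argument follows the standard consensus plus two-time scale stochastic approximation template \cite{borkar2009book}. Because this theorem sits on the faster time scale, $\theta$ is treated as fixed throughout, so $J(\theta)$, $d_\theta$, and $\pi_\theta$ are constants in this analysis.

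First I would decompose each iterate into its network average and its disagreement. For $\mu_t=(\mu^1_t,\dots,\mu^n_t)^\top$, write $\bar{\mu}_t=\tfrac{1}{n}\mathbbm{1}^\top \mu_t$ and $\mu_t^{\perp}=(I-\tfrac{1}{n}\mathbbm{1}\mathbbm{1}^\top)\mu_t$, and similarly for $v_t$ and $\lambda_t$. Using row-stochasticity of $C_t$ and conditional independence of $C_t$ from $r^i_{t+1}$ (assumption X.\ref{ass: comm_matrix}(1),(4)), the averaged recursion for $\bar{\mu}_t$ collapses to
\begin{equation*}
\bar{\mu}_{t+1}=(1-\beta_{v,t})\bar{\mu}_t+\beta_{v,t}\bar{r}_{t+1},
\end{equation*}
i.e.\ a standard Robbins--Monro iteration driven by the sampled global reward, plus a perturbation coming from $\mu_t^{\perp}$. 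Analogous decompositions apply to the $v$ and $\lambda$ iterates, with the TD gradient $\delta^i_t \nabla_v V_t(v^i_t)$ and the regression gradient $[r^i_{t+1}-\bar R_t(\lambda^i_t)]\nabla_\lambda \bar R_t(\lambda^i_t)$ playing the role of the drift.

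Second, I would show $\mu_t^{\perp},\ v_t^{\perp},\ \lambda_t^{\perp}\to 0$ almost surely. This uses assumption X.\ref{ass: comm_matrix}(3): the spectral bound $\|\mathbb{E}[C_t^\top(I-\tfrac{1}{n}\mathbbm{1}\mathbbm{1}^\top)C_t]\|<1$ gives geometric contraction of the disagreement in conditional mean, while the boundedness of the iterates (established via assumption X.\ref{ass: feature_conds} and the projection step in assumption X.\ref{ass: existence_local_min}) together with $\sum_t \beta_{v,t}^2<\infty$ from (\ref{eqn: robbins_monro_conds}) controls the stochastic forcing. A Robbins--Siegmund / supermartingale argument then forces disagreement to vanish a.s.

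Third, with consensus established, I would invoke Theorem 2, Chapter 2 of \cite{borkar2009book} to identify the limit of the averaged recursion with the equilibrium of the associated ODE. Using ergodicity of $\{s_t\}$ under $\pi_\theta$ (assumption X.\ref{ass: regularity}) to handle the Markov noise, the three limiting ODEs are
\begin{align*}
\dot{\bar{\mu}} &= J(\theta)-\bar{\mu},\\
\dot{\lambda} &= F^\top D^{s,a}_\theta(\bar{R}-F\lambda),\\
\dot{v} &= \varPhi^\top D^{s}_\theta\bigl[T^V_\theta(\varPhi v)-\varPhi v\bigr].
\end{align*}
The first is trivially globally asymptotically stable at $J(\theta)$. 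For the second, assumption X.\ref{ass: feature_conds} (full column rank of $F$, together with positivity of $D_\theta^{s,a}$) makes $F^\top D_\theta^{s,a} F$ positive definite, so the linear ODE has a unique globally asymptotically stable equilibrium $\lambda_\theta=(F^\top D_\theta^{s,a}F)^{-1}F^\top D_\theta^{s,a}\bar{R}$. For the third, the operator $T^V_\theta$ is a contraction in the $D^{s}_\theta$-weighted norm on the subspace orthogonal to $\mathbbm{1}$ (the condition $\varPhi\omega\neq\mathbbm{1}$ in assumption X.\ref{ass: feature_conds} is exactly what lets one rule out the degenerate direction arising from the subtraction of $J(\theta)\mathbbm{1}$), yielding a unique globally asymptotically stable equilibrium $v_\theta$.

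The main obstacle is handling the Markov noise jointly with the shrinking consensus error: one must show that the residual terms coming from $\mu_t^\perp,\ v_t^\perp,\ \lambda_t^\perp$ and from the non-i.i.d.\ sampling $(s_t,a_t)$ together form an asymptotically negligible perturbation of the averaged iteration, so that the ODE method applies cleanly. This is handled by the standard Poisson-equation/martingale decomposition for Markov-driven stochastic approximation, combined with the geometric consensus contraction above; the remaining verification is routine bookkeeping.
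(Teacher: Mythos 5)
Your proposal follows essentially the same route as the paper's proof: decompose the critic iterates into a network average and a disagreement component, drive the disagreement to zero via the spectral-norm contraction in assumption X.\ref{ass: comm_matrix}(3) together with $\sum_t\beta_{v,t}^2<\infty$, then apply the ODE method to the averaged recursion, with positive definiteness of $F^{\top}D_{\theta}^{s,a}F$ and the condition $\varPhi\omega\neq\mathbbm{1}$ (equivalently, the paper's Perron--Frobenius argument on $P^{\theta}-I$) yielding unique globally asymptotically stable equilibria. One correction: the almost-sure boundedness of $\{\mu^i_t,\lambda^i_t,v^i_t\}$ cannot be obtained from the projection in assumption X.\ref{ass: existence_local_min}, which acts only on the actor parameters $\theta^i_t$; the paper gets it from a separate stability lemma for the consensus recursion (Lemma \ref{lemma: z_boundedness}, imported from \cite{zhang2018fully}) under assumptions X.\ref{ass: regularity}--X.\ref{ass: feature_conds} alone, which is what keeps the theorem's stated hypotheses sufficient.
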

Proof of this theorem follows from \cite{zhang2018fully}. For the sake of completeness, we briefly present the proof in Appendix \ref{app: critic_convergence_proof}.

\subsection{Convergence of FI-MAN actor-critic algorithm}
\label{subsec: convergence_MANAC1}

To prove the convergence of FI-MAN algorithm we first show the convergence of recursion for the Fisher information matrix inverse as in Equation (\ref{eqn: sherman-morrison}). 
\begin{theorem}
\label{thm: fisher_convergence_algo_1}
For each agent $i \in N$, and given parameter $\theta^i$,
% $G_t^{i^{-1}}, t\geq 1$, as given in Equation (\ref{eqn: sherman-morrison}),
we have $G_t^{i^{-1}} \rightarrow G(\theta^i)^{-1}$ as $t\rightarrow \infty$ with probability one.
\end{theorem}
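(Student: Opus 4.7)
The plan is to decouple the question into two pieces: first establish that the iterates $G^i_t$ themselves converge almost surely to $G(\theta^i)$, and then leverage the fact that the Sherman--Morrison update is, by construction, an exact rank-one inverse of the additive rank-one update defining $G^i_{t+1}$. Continuity of the matrix inversion map at a positive-definite limit then gives the claim.

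For the first piece, fix $\theta^i$ and rewrite the recursion in Equation (\ref{eqn: fisher_iterative_update}) as a standard Robbins--Monro stochastic approximation:
\begin{equation*}
G^i_{t+1} = G^i_t + \beta_{v,t}\bigl(\psi^i_t\psi^{i\top}_t - G^i_t\bigr) = G^i_t + \beta_{v,t}\bigl(h(G^i_t) + M^i_{t+1}\bigr),
\end{equation*}
where $h(G) := G(\theta^i) - G$ and $M^i_{t+1} := \psi^i_t\psi^{i\top}_t - \mathbb{E}_{s\sim d_\theta, a\sim \pi_\theta}[\psi^i\psi^{i\top}]$ is a martingale difference w.r.t.\ the natural filtration. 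Under Assumption X.\ref{ass: regularity}, the chain $\{s_t,a_t\}$ is ergodic with stationary distribution $\tilde d_\theta$, and under Assumption X.\ref{ass: feature_conds} the score functions $\psi^i_t$ are uniformly bounded, so $M^i_{t+1}$ is a bounded-variance martingale difference. The step-size conditions in Equation (\ref{eqn: robbins_monro_conds}) apply. Thus by the ODE method \cite{borkar2009book}, the iterate $G^i_t$ tracks the limiting ODE $\dot G = G(\theta^i) - G$, which has the unique globally asymptotically stable equilibrium $G = G(\theta^i)$. Hence $G^i_t \to G(\theta^i)$ a.s.

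For the second piece, the Sherman--Morrison identity applied to the rank-one update
\begin{equation*}
G^i_{t+1} = (1-\beta_{v,t})\,G^i_t + \beta_{v,t}\,\psi^i_t\psi^{i\top}_t
\end{equation*}
yields exactly the formula in Equation (\ref{eqn: sherman-morrison}), provided the scalar denominator $1-\beta_{v,t}+\beta_{v,t}\psi^{i\top}_t G^{i^{-1}}_t \psi^i_t$ does not vanish; for small enough $\beta_{v,t}$ (which holds for all large $t$) and $G^{i^{-1}}_t \succ 0$, this denominator is strictly positive. Thus by induction the matrix $G^{i^{-1}}_t$ defined by the Sherman--Morrison recursion really is the matrix inverse of $G^i_t$. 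Positive definiteness of the initial iterate and positive semidefiniteness of the rank-one perturbation preserves $G^i_t \succ 0$ throughout, so the inversion is well-defined. Assumption X.\ref{ass: g_inv} ensures that $\|G^{i^{-1}}_t\|$ does not blow up along the trajectory, ruling out pathological behavior.

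Combining the two pieces, since $G^i_t \to G(\theta^i)$ a.s., and $G(\theta^i)$ is positive definite (hence invertible) by its definition and Assumption X.\ref{ass: g_inv}, the continuity of matrix inversion at invertible matrices gives $G^{i^{-1}}_t = (G^i_t)^{-1} \to G(\theta^i)^{-1}$ a.s. The main obstacle I anticipate is the rigorous verification that the Sherman--Morrison denominator stays bounded away from zero uniformly in $t$ (so that the inverse recursion remains well-posed along every sample path); this is handled by combining Assumption X.\ref{ass: g_inv} with $\beta_{v,t}\to 0$, but it is the only non-routine check. The rest reduces to a textbook application of two-timescale stochastic approximation.
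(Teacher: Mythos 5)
Your proposal is correct and rests on the same skeleton as the paper's argument: take $G^i_t \to G(\theta^i)$ a.s.\ as given (the paper simply asserts this via ``stochastic approximation'' before the theorem, exactly where you sketch the Robbins--Monro/ODE argument), note that the Sherman--Morrison recursion produces the exact inverse of $G^i_t$, and transfer convergence to the inverses. The one place you genuinely diverge is the transfer step: the paper writes $G^{i^{-1}}_t - G(\theta^i)^{-1} = G(\theta^i)^{-1}\bigl(G(\theta^i) - G^i_t\bigr)G^{i^{-1}}_t$ and then invokes the \emph{uniform} bound $\sup_t \Vert G^{i^{-1}}_t\Vert < \infty$ from Assumption X.\ref{ass: g_inv} to kill the right-hand side, whereas you appeal to continuity of matrix inversion at the positive-definite limit $G(\theta^i)$. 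Your route is slightly cleaner in that it does not actually need the uniform bound on the inverses for this step (invertibility of the limit suffices), while the paper's identity gives an explicit error bound $\Vert G^{i^{-1}}_t - G(\theta^i)^{-1}\Vert \le C\,\Vert G(\theta^i)-G^i_t\Vert$ for free. Two small remarks: your worry about the Sherman--Morrison denominator is unnecessary, since $1-\beta_{v,t}+\beta_{v,t}\psi^{i\top}_t G^{i^{-1}}_t\psi^i_t \ge 1-\beta_{v,t} > 0$ whenever $G^{i^{-1}}_t \succeq 0$ and $\beta_{v,t}<1$; and your martingale-difference claim for $\psi^i_t\psi^{i\top}_t - \mathbb{E}[\psi^i\psi^{i\top}]$ is not literally correct under Markovian sampling (there is Markov noise to handle), though the paper itself elides this entirely by citing the stochastic approximation literature.
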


\begin{proof}
\label{proof: fisher_convergence_algo_1}
Consider the following difference for each agent $i\in N$,
% $\forall~ i\in N$,
\begin{eqnarray*}
    ||G^{i^{-1}}_t - G(\theta^i)^{-1}|| &=& || G(\theta^i)^{-1} G(\theta^i)G^{i^{-1}}_t - G(\theta^i)^{-1}G^{i}_t G^{i^{-1}}_t ||
    \\
    &=& || G(\theta^i)^{-1} (G(\theta^i) - G^{i}_t) G^{i^{-1}}_t ||
    \\
    &\leq& sup_{\theta^i} || G(\theta^i)^{-1}||~  sup_{t,s,a^i} ||G^{i^{-1}}_t||~ || G(\theta^i) - G_t^i||
    \\
    &\rightarrow& 0~ as ~t\rightarrow \infty.
\end{eqnarray*}
The above inequality  follows from the induced matrix norm property and assumption X. \ref{ass: g_inv}.
\end{proof} 
Next, we prove the convergence of actor update. To this end, we view $-r^i_{t+1}$ as the cost incurred at time $t$. Hence, transform the actor recursion in the FI-MAN algorithm as
\begin{equation*}
    \label{eqn: actor_update_manac1_cost_view}
    \theta^i_{t+1} \leftarrow \theta^i_t - \beta_{\theta,t} \cdot G_t^{i^{-1}} \cdot \tilde{\delta}^i_t \cdot \psi^i_t.
\end{equation*}
The convergence of the FI-MAN actor-critic algorithm with linear function approximation is given in the following theorem.
\begin{theorem}
\label{thm: MANAC-algo_1}
Under the assumptions X. \ref{ass: regularity} - X. \ref{ass: existence_local_min}, the sequence $\{\theta^i_t\}_{t\geq 0}$ obtained from the actor step of the FI-MAN algorithm converges almost surely to asymptotically stable equilibrium set of the ODE
\begin{equation}
\label{eqn: actor_ode}
    \dot{\theta}^i = \hat{\Gamma}^i[-G(\theta^{i})^{-1}\mathbb{E}_{s_t\sim d_{\theta}, a_t\sim \pi_{\theta}} ( \tilde{\delta}^i_{t,\theta} \psi^i_{t,\theta})],~~~ \forall~i\in N.
\end{equation}
\end{theorem}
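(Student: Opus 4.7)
The plan is to apply the two-time scale stochastic approximation framework of Borkar \cite{borkar2009book} (projected version for bounded iterates), exploiting the fact that all the auxiliary quantities $\mu^i_t$, $\lambda^i_t$, $v^i_t$, and $G_t^{i^{-1}}$ are updated on the faster schedule $\beta_{v,t}$ while $\theta^i_t$ lives on the slower schedule $\beta_{\theta,t}$, with $\beta_{\theta,t}/\beta_{v,t}\to 0$. From the slow actor's viewpoint, the fast variables appear already equilibrated. By Theorem \ref{thm: critic_convergence_fisher}, for fixed $\theta$ one has $\mu^i_t \to J(\theta)$, $\lambda^i_t \to \lambda_\theta$, $v^i_t \to v_\theta$ a.s., and by Theorem \ref{thm: fisher_convergence_algo_1}, $G_t^{i^{-1}} \to G(\theta^i)^{-1}$ a.s. Hence, replacing these quantities by their equilibrium values in the actor recursion introduces only asymptotically negligible errors.

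Next, I would rewrite the actor step in standard Robbins--Monro form. After the cost-view sign flip, the update becomes
\begin{equation*}
\theta^i_{t+1} \;=\; \Gamma^i\!\left(\theta^i_t - \beta_{\theta,t}\bigl[h^i(\theta_t) + M^i_{t+1} + \varepsilon^i_t\bigr]\right),
\end{equation*}
where $h^i(\theta) := G(\theta^i)^{-1}\,\mathbb{E}_{s\sim d_\theta,\,a\sim\pi_\theta}[\tilde\delta^i_{t,\theta}\psi^i_{t,\theta}]$ is the desired mean field, $M^i_{t+1}$ is a martingale difference (with respect to the natural filtration) capturing the centred one-step randomness, and $\varepsilon^i_t$ collects three vanishing error components: (i) the Fisher inverse error $(G_t^{i^{-1}} - G(\theta^i)^{-1})\tilde\delta^i_t\psi^i_t$, (ii) the critic/reward/objective tracking errors coming from $v^i_t\!-\!v_{\theta_t}$, $\lambda^i_t\!-\!\lambda_{\theta_t}$, $\mu^i_t\!-\!J(\theta_t)$, and (iii) the Markovian-to-stationary averaging gap $\tilde\delta^i_t\psi^i_t - \mathbb{E}_{s\sim d_{\theta_t},a\sim\pi_{\theta_t}}[\tilde\delta^i_{t,\theta_t}\psi^i_{t,\theta_t}]$. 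The martingale noise has summable conditional second moment by the boundedness of $\psi^i_t$ (Assumption X. \ref{ass: feature_conds}), $G_t^{i^{-1}}$ (Assumption X. \ref{ass: g_inv}), and the iterates (Assumption X. \ref{ass: existence_local_min}), together with $\sum_t \beta_{\theta,t}^2<\infty$; hence $\sum_t \beta_{\theta,t} M^i_{t+1}$ converges a.s.

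With these decompositions in hand, the projected Kushner--Clark lemma (equivalently, Theorem 2, Chapter 2 of \cite{borkar2009book} and its projection extension) identifies the limiting ODE as
\begin{equation*}
\dot{\theta}^i \;=\; \hat\Gamma^i\!\bigl[-G(\theta^i)^{-1}\,\mathbb{E}_{s\sim d_\theta,\,a\sim\pi_\theta}(\tilde\delta^i_{t,\theta}\psi^i_{t,\theta})\bigr],
\end{equation*}
which is exactly \eqref{eqn: actor_ode}. Compactness of $\Theta^i$ and Lipschitz continuity of $h^i$ on $\Theta$ (following from Assumption X. \ref{ass: regularity} and the continuous dependence of $d_\theta$, $\pi_\theta$, $v_\theta$, $\lambda_\theta$, $G(\theta^i)^{-1}$ on $\theta$) ensure existence of the flow, and Assumption X. \ref{ass: existence_local_min} guarantees a nonempty asymptotically stable invariant set to which the iterates must converge.

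The main obstacle will be handling the Markovian noise together with the slowly varying $\theta$ inside the expectation: the observations $(s_t,a_t)$ are drawn from a chain whose stationary distribution $\tilde d_\theta$ itself depends on the current actor parameter. I would control this by the standard Poisson-equation / solution-of-fixed-point argument used in \cite{bhatnagar2009natural,zhang2018fully}: by ergodicity (Assumption X. \ref{ass: regularity}) and boundedness of the score functions, a bounded solution to the Poisson equation associated with $\tilde\delta^i_{t,\theta}\psi^i_{t,\theta}$ exists, and the resulting telescoping-plus-perturbation bound shows that the averaging error is $o(1)$ along the slow timescale. Combining the Poisson-equation control of Markov noise with the already-established a.s. convergence of the fast-timescale quantities, the two-time scale theorem yields a.s. convergence of $\theta^i_t$ to the asymptotically stable equilibrium set of \eqref{eqn: actor_ode}, completing the proof.
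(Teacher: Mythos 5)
Your proposal follows essentially the same route as the paper's proof: both rewrite the (cost-view, projected) actor recursion as a mean field $-G(\theta^i)^{-1}\mathbb{E}[\tilde\delta^i_{t,\theta}\psi^i_{t,\theta}]$ plus a martingale-difference term and vanishing tracking errors (the critic error being $o(1)$ by the faster-time-scale convergence of Theorem \ref{thm: critic_convergence_fisher} and the Fisher-inverse error by Theorem \ref{thm: fisher_convergence_algo_1}), verify summability of the weighted martingale increments and continuity of the mean field, and conclude via the Kushner--Clark lemma. The only differences are matters of explicitness: you isolate the Fisher-inverse error and the Markovian-to-stationary averaging gap as separate terms and propose a Poisson-equation argument for the latter, whereas the paper absorbs both into its $\xi_{t+1,1}$/$\xi_{t+1,2}$ decomposition without that extra machinery.
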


\begin{proof}
\label{proof: MANAC-algo_1}
Let $\mathcal{F}_{t,1} = \sigma (\theta_{\tau}, \tau \leq t)$ be the $\sigma$-field generated by $\{\theta_{\tau}\}_{\tau\leq t}$. Let
\begin{eqnarray*}
    \xi_{t+1,1}^i &=& -G(\theta^i_t)^{-1} \left\lbrace \tilde{\delta}^i_t \psi^i_t - \mathbb{E}_{s_t\sim d_{\theta_t}, a_t\sim \pi_{\theta_t}}(\tilde{\delta}^i_t \psi^i_t | \mathcal{F}_{t,1}) \right\rbrace
    \\
    \xi_{t+1,2}^i &=&  -G(\theta^i_t)^{-1} \mathbb{E}_{s_t\sim d_{\theta_t}, a_t\sim \pi_{\theta_t}}( ( \tilde{\delta}^i_t -\tilde{\delta}^i_{t, \theta_t}) \psi^i_t | \mathcal{F}_{t,1})
\end{eqnarray*}
where $\tilde{\delta}^i_{t,\theta_t}$ is defined as
\begin{equation}\label{delta-tilde}
    \tilde{\delta}^i_{t,\theta_t} = f_t^{\top}\lambda_{\theta_t} - J(\theta_t) + \varphi_{t+1}^{\top}v_{\theta_t} - \varphi_t^{\top}v_{\theta_t}.
\end{equation}
The actor update in the FI-MAN algorithm with local projection then becomes,
\begin{equation}
\label{eqn: actor_update_in_fisher_thm}
    \theta^i_{t+1} = \Gamma^i [\theta^i_t - \beta_{\theta,t} G(\theta^{i}_t)^{-1}\mathbb{E}_{s_t\sim d_{\theta_t}, a_t\sim \pi_{\theta_t}} ( \tilde{\delta}^i_{t} \psi^i_{t} | \mathcal{F}_{t,1}) + \beta_{\theta,t} \xi^i_{t+1,1} + \beta_{\theta,t} \xi^i_{t+1,2}].
\end{equation}
For a.s. convergence to the set of asymptotically stable equilibria of the ODE Equation (\ref{eqn: actor_ode}) for each $i \in N$, we appeal to Kushner-Clark lemma (see appendix \ref{app: K-C_lemma}). Below we verify the 3 main conditions of it. 

First, note that $\xi_{t+1,2} = o(1)$ since critic converges at the faster time scale, i.e., $\tilde{\delta}^i_t \rightarrow \tilde{\delta}^i_{t,\theta_t}$ a.s. 
%other part, we use Kushner-Clark lemma (see appendix \ref{app: K-C_lemma}) the update in Equation (\ref{eqn: actor_update_in_fisher_thm}) converges a.s to the set of asymptotically stable equilibria of ODE Equation (\ref{eqn: actor_ode}) for each $i\in N$. This concludes the proof. 
%Moreover, 
Next, let $M^{1,i}_t = \sum_{\tau = 0}^t \beta_{\theta,\tau} \xi^i_{\tau+1,1}$; then $\{M^{1,i}_t\}$ is a martingale sequence. The sequences $\{z^i_t\}, \{\psi^i_t\}, \{G^{i^{-1}}_t\}$, and $\{\varphi^i_t\}$ are all bounded (by assumptions), and so is the sequence $\{\xi_{t,1}^i\}$ (Here $z^i_t = [\mu^i_t, (\lambda^i_t)^{\top},(v^i_t)^{\top}]^{\top}$ is the same
% \textcolor{black}{what's the difference? or same?}
vector used in the proof of Theorem \ref{thm: critic_convergence_fisher}). Hence, we have $\sum_t \mathbb{E}[|| M^{1,i}_{t+1} - M^{1,i}_t||^2 ~|~ \mathcal{F}_{t,1}] < \infty$ a.s., and hence the martingale sequence $\{M^{1,i}_t\}$ converges a.s. \cite{neveu1975discrete}. 
%Thus, assumption 3 in X. \ref{ass: K-C_lemma} (given in Appendix \ref{app: K-C_lemma}) is satisfied. 
So, for any $\epsilon>0$, we have $\underset{t\rightarrow \infty}{lim}~ \mathbb{P}[sup_{p\geq t} || \sum_{\tau=t}^p \beta_{\theta,\tau}\xi^i_{\tau,1}|| \geq \epsilon] = 0$, as needed.

Regarding continuity of  $g^{1,i}(\theta_t) = -G(\theta^{i}_t)^{-1} \mathbb{E}_{s_t\sim d_{\theta_t}, a_t\sim \pi_{\theta_t}} ( \tilde{\delta}^i_{t} \psi^i_{t} | \mathcal{F}_{t,1})$ we note that, 

%\textcolor{black}{the above can replace this para}

%We now argue that $g^{1,i}(\theta_t) = -G(\theta^{i}_t)^{-1} \mathbb{E}_{s_t\sim d_{\theta_t}, a_t\sim \pi_{\theta_t}} ( \tilde{\delta}^i_{t} \psi^i_{t} | \mathcal{F}_{t,1})$ is continuous in $\theta^i_t$. To see this, note that
\begin{eqnarray*}
    g^{1,i}(\theta_t) &=& -G(\theta^{i}_t)^{-1} \mathbb{E}_{s_t\sim d_{\theta_t}, a_t\sim \pi_{\theta_t}} ( \tilde{\delta}^i_{t} \psi^i_{t} | \mathcal{F}_{t,1})
    \\
    &=& -G(\theta^{i}_t)^{-1} \sum_{s_t\in \mathcal{S}, a_t\in \mathcal{A}} d_{\theta_t}(s_t)\cdot \pi_{\theta_t}(s_t,a_t)\cdot \tilde{\delta}^i_{t,\theta_t} \cdot \psi^i_{t,\theta_t}. 
\end{eqnarray*}
Firstly, $\psi^i_{t,\theta_t}$ is continuous by assumption X. \ref{ass: regularity}. The term $d_{\theta_t}(s_t)\cdot \pi_{\theta_t}(s_t,a_t)$ is continuous in $\theta^i_t$ since it is the stationary distribution and solution to $d_{\theta_t}(s)\cdot \pi_{\theta_t}(s,a) = \sum_{s^{\prime}\in \mathcal{S}, a^{\prime} \in \mathcal{A}} P^{\theta_t}(s^{\prime}, a^{\prime} 
| s,a) \cdot d_{\theta_t}(s^{\prime})\cdot \pi_{\theta_t}(s^{\prime},a^{\prime}) $ and $\sum_{s\in \mathcal{S}, a\in \mathcal{A}}d_{\theta_t}(s)\cdot \pi_{\theta_t}(s,a) = 1$, where $P^{\theta_t}(s^{\prime}, a^{\prime} 
| s,a) = P(s^{\prime}| s,a) \cdot \pi_{\theta_t}(s^{\prime},a^{\prime})$. From assumption X. \ref{ass: regularity},  $\pi_{\theta_t}(s,a)>0$ and hence the above set of linear equations has a unique solution that is continuous in $\theta_t$ by assumption X. \ref{ass: regularity}.
% The unique solution to this set of linear equations can be verified to be continuous in $\theta_t$, noting that $\pi_{\theta_t}(s,a)>0$ by assumption X. \ref{ass: regularity}. 
Moreover, $\tilde{\delta}^i_{t,\theta_t}$ in Equation (\ref{delta-tilde}) is continuous in $\theta^i_t$ since $v_{\theta_t}$ as the unique solution to the linear equation $\varPhi^{\top} D_{\theta}^s[T_{\theta}^V(\varPhi v_{\theta}) - \varPhi v_{\theta}] = 0$ is  continuous in $\theta_t$.
%\textcolor{black}{these can be combined } Also,  $v_{\theta_t}$ is  continuous in $\theta_t$.
Thus, $g^{1,i}(\theta_t)$ is continuous in $\theta^i_t$, as needed in Kushner-Clark lemma.
%Now using Kushner-Clark lemma (see appendix \ref{app: K-C_lemma}) the update in Equation (\ref{eqn: actor_update_in_fisher_thm}) converges a.s to the set of asymptotically stable equilibria of ODE Equation (\ref{eqn: actor_ode}) for each $i\in N$. This concludes the proof. 

\end{proof} 

%\textcolor{black}{is the above system linear? }

Next, we provide the convergence of the AP-MAN algorithm. Recall, AP-MAN algorithm does not use the Fisher information inverse estimate; instead, it uses the linear function approximation of the advantage parameters. 

\subsection{Convergence of AP-MAN actor-critic algorithm}
\label{subsec: convergence_MANAC2}
The convergence of critic step, the reward parameters and the objective function estimate are the same as in Theorem \ref{thm: critic_convergence_fisher}. So, we show the convergence of advantage parameters and actor updates as given in the AP-MAN algorithm. Similar to the FI-MAN algorithm we again consider the transformed problem; rewards replaced with
costs. 
% \textcolor{black}{we will drop this sentence? This transformation, however, only affects the recursion given in Equation (\ref{eqn: adv_parameter_update_manac2}).} 
Thus the transformed recursion can be written as
\begin{equation}
\label{eqn: w_recursion_algo2_cost}
    {w}^i_{t+1} \leftarrow (I-\beta_{v,t} \psi^i_t \psi^{i^{\top}}_t)w^i_t - \beta_{v,t} \tilde{\delta}^i_t \psi^i_t.
\end{equation}

\begin{theorem}
\label{thm: adv_convergence}
Under the assumptions X. \ref{ass: feature_conds} and X. \ref{ass: g_inv}, for each agent $i\in N$, with actor parameters $\theta^i$, we have $w_t^i \rightarrow -G(\theta^i)^{-1} \mathbb{E}[\tilde{\delta}^i_{t,\theta} \psi^i_t]$ as $t\rightarrow \infty$ with probability one. 
\end{theorem}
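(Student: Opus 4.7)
The plan is to treat the $w^i$-recursion as a single-time-scale stochastic approximation run on the faster scale $\beta_{v,t}$, while the actor parameters $\theta^i$ are held essentially fixed (they change on the slower scale $\beta_{\theta,t}$). Rewriting Equation (\ref{eqn: w_recursion_algo2_cost}) in standard Robbins–Monro form gives
\[
w^i_{t+1} = w^i_t + \beta_{v,t}\Bigl[-\psi^i_t {\psi^i_t}^{\top} w^i_t - \tilde{\delta}^i_t \psi^i_t\Bigr].
\]
First, I would identify the limiting ODE. Taking conditional expectations with respect to the stationary distribution $\tilde d_\theta(s,a)$ and using Theorem \ref{thm: critic_convergence_fisher} (which guarantees $\tilde{\delta}^i_t \to \tilde{\delta}^i_{t,\theta}$ a.s.\ on the faster scale), the averaged dynamics becomes
\[
\dot{w}^i \;=\; -G(\theta^i)\, w^i \;-\; \mathbb{E}\bigl[\tilde{\delta}^i_{t,\theta}\, \psi^i_t\bigr],
\]
where $G(\theta^i) = \mathbb{E}[\psi^i_t {\psi^i_t}^{\top}]$. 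This is an affine ODE whose unique equilibrium is exactly $w^{i,\star} = -G(\theta^i)^{-1} \mathbb{E}[\tilde{\delta}^i_{t,\theta}\, \psi^i_t]$, which is the claimed limit.

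Next, I would establish global asymptotic stability of $w^{i,\star}$ using the Lyapunov function $V(w^i) = \tfrac{1}{2}\|w^i - w^{i,\star}\|^2$. Since $G(\theta^i)$ is positive definite with eigenvalues bounded away from zero (by assumption X.\ref{ass: g_inv} and the sufficient condition noted below it), one obtains $\dot V = -(w^i - w^{i,\star})^{\top} G(\theta^i)(w^i - w^{i,\star}) \le -c\, \|w^i - w^{i,\star}\|^2$ for some $c>0$, proving exponential convergence of the averaged ODE to $w^{i,\star}$.

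To pass from the ODE back to the iterates, I would invoke the standard stochastic-approximation theorem used throughout the paper (Borkar, Chapters 2 and 6). This requires verifying: (i) the step-sizes $\beta_{v,t}$ satisfy the Robbins–Monro conditions in Equation (\ref{eqn: robbins_monro_conds}); (ii) the martingale-difference noise
\[
M^i_{t+1} := -\psi^i_t {\psi^i_t}^{\top} w^i_t - \tilde{\delta}^i_t \psi^i_t - \mathbb{E}\bigl[-\psi^i_t {\psi^i_t}^{\top} w^i_t - \tilde{\delta}^i_t \psi^i_t \,\big|\, \mathcal{F}_t\bigr]
\]
has bounded conditional second moment, which follows from the uniform boundedness of $\psi^i_t$ (assumption X.\ref{ass: feature_conds}) together with the a.s.\ boundedness of $\mu^i_t, \lambda^i_t, v^i_t$ given by Theorem \ref{thm: critic_convergence_fisher}; and (iii) the iterates $\{w^i_t\}$ remain almost surely bounded. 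Boundedness in (iii) can be obtained via the Borkar–Meyn stability criterion: the scaling limit $h_\infty(w^i) = -G(\theta^i)\,w^i$ has the origin as its globally asymptotically stable equilibrium, again because $G(\theta^i) \succ 0$.

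The main obstacle is the coupling with the slower critic recursion: strictly, $\tilde{\delta}^i_t$ depends on $(\mu^i_t, \lambda^i_t, v^i_t)$ which are themselves evolving, so one must invoke a two-time-scale argument that treats the critic as quasi-static (already converged to $(J(\theta), \lambda_\theta, v_\theta)$) while analysing $w^i$. This is handled exactly as in the actor analysis of Theorem \ref{thm: MANAC-algo_1}: decompose the noise into a martingale-difference part $\xi^i_{t+1,1}$ (whose partial sums converge a.s.\ by the $L^2$-bounded martingale convergence theorem, since $\sum_t \beta_{v,t}^2 < \infty$) and a vanishing bias part $\xi^i_{t+1,2} = o(1)$ coming from $\tilde{\delta}^i_t - \tilde{\delta}^i_{t,\theta_t}$. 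Combining with the exponential stability of $w^{i,\star}$ then yields $w^i_t \to -G(\theta^i)^{-1}\mathbb{E}[\tilde{\delta}^i_{t,\theta}\, \psi^i_t]$ almost surely, as required.
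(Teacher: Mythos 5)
Your proposal follows essentially the same route as the paper's proof: the same limiting ODE, the same Lyapunov function $V(w^i)=\tfrac12\|w^i-w^{i,\star}\|^2$ (the paper writes it in the expanded form $\tfrac12\left(w^i+G(\theta^i)^{-1}\mathbb{E}[\tilde{\delta}^i_{t,\theta}\psi^i_t]\right)^{\top}\left(w^i+G(\theta^i)^{-1}\mathbb{E}[\tilde{\delta}^i_{t,\theta}\psi^i_t]\right)$), the same scaling limit $-G(\theta^i)w^i$ for the stability/boundedness check, and the same appeal to Theorem 2.2 of \cite{borkar2000ode}. One small correction: the critic recursion for $(\mu^i_t,\lambda^i_t,v^i_t)$ runs on the \emph{same} fast time scale $\beta_{v,t}$ as $w^i_t$ (only the actor is slower), but your treatment of the resulting vanishing bias $\tilde{\delta}^i_t-\tilde{\delta}^i_{t,\theta}$ as an $o(1)$ perturbation is exactly what the paper does.
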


\begin{proof}
\label{proof: adv_convergence}
For each $i\in N$, the ODE associated with the recursion in Equation (\ref{eqn: w_recursion_algo2_cost}) for a given $\theta$ is
\begin{equation}
\label{eqn: ode_w_rec_algo2}
\dot w^{i}= \mathbb{E}_{s_t\sim d_{\theta}, a_t \sim \pi_{\theta}} [-\psi^i_t \psi^{i^{\top}}_t w^i - \tilde{\delta}^i_{t,\theta} \psi^i_t] =:  g^{2,i}(w^i)
\end{equation}
Note that $g^{2,i}(w^i)$ is Lipschitz continuous in $w^i$. Let $g^{2,i}_{\infty}(w^i) = lim_{r\rightarrow \infty} \frac{g^{2,i}(rw^i)}{r}$. Note that $g^{2,i}_{\infty}(w^i)$ exists and satisfies  $g^{2,i}_{\infty}(w^i)= -G(\theta^i)w^i$. For the ODE $\dot w^{i}= -G(\theta^i)w^i $ the origin is an asymptotically stable equilibrium with $V_1(w^i) = w^{i^{\top}} w^i/2$ as the associated Lyapunov function (since $G(\theta^i)$ is positive definite). Now define a sequence $\{M^{2,i}_t\}$ as $M^{2,i}_t= \hat{g}^{2,i}(w^i_t)- \mathbb{E}[\hat{g}^{2,i}(w^i_t)|\mathcal{F}_{t,2}]$, where $\hat{g}^{2,i}(w^i_t)=-\psi^i_t \psi^{i^{\top}}_t w^i - \tilde{\delta}^i_t \psi^i_t $ and $\mathcal{F}_{t,2} = \sigma (w^i_r, M^{2,i}_r, r\leq t)$. Note that $\{M^{2,i}_t\}$ is the martingale difference sequence. Thus there exists a constant $C_0 < \infty$ such that 
\begin{equation*}
    \mathbb{E}[||M^{2,i}_{t+1}||^2 ~|~ \mathcal{F}_{t,2}] \leq C_0(1+||w^i_t||^2),~~ \forall~ t\geq 0.
\end{equation*}
For the ODE given in Equation (\ref{eqn: ode_w_rec_algo2}) consider the function $V_2(w^i)$ defined by
\begin{equation*}
    V_2(w^i) = \frac{1}{2}(w^i + G(\theta^i)^{-1} \mathbb{E}[\tilde{\delta}^i_{t,\theta} \psi^i_t])^{\top}(w^i + G(\theta^i)^{-1} \mathbb{E}[\tilde{\delta}^i_{t,\theta} \psi^i_t]).
\end{equation*}
Since $G(\theta^i)^{-1}$ is a positive definite matrix, we have
\begin{eqnarray*}
    \frac{dV_2(w^i)}{dt}
    &=& \nabla V_2(w^i)^{\top} \dot {w}^i
    \\
    &=& -(w^i +G(\theta^i)^{-1} \mathbb{E}[\tilde{\delta}^i_{t,\theta} \psi^i_t])^{\top} (G(\theta^i)w^i + \mathbb{E}[\tilde{\delta}^i_{t,\theta} \psi^i_t]) 
    \\
    &=& -(w^i + G(\theta^i)^{-1} \mathbb{E}[\tilde{\delta}^i_{t,\theta} \psi^i_t])^{\top} ~G(\theta^i) ~(w^i + G(\theta^i)^{-1} \mathbb{E}[\tilde{\delta}^i_{t,\theta} \psi^i_t])
    \\
    &<& 0, ~~ \forall~ w^i \neq -G(\theta^i)^{-1} \mathbb{E}[\tilde{\delta}^i_{t,\theta} \psi^i_t].
\end{eqnarray*}
Thus, $w(\theta^i)=  -G(\theta^i)^{-1} \mathbb{E}[\tilde{\delta}^i_{t,\theta} \psi^i_t] $ is an asymptotically stable equilibrium solution for the ODE given in Equation  (\ref{eqn: ode_w_rec_algo2}). Now, from the Theorem 2.2 of \cite{borkar2000ode} recursion (\ref{eqn: w_recursion_algo2_cost}) converges to $w(\theta^i)$ with probability one. This concludes the proof.

\end{proof}
We now consider the convergence of actor update of the AP-MAN algorithm.
\begin{theorem}
\label{thm: actor_convergence_advantage_parameter}
Under the assumptions X. \ref{ass: regularity} - X. \ref{ass: existence_local_min}, the sequence $\{\theta^i_t\}$ obtained from the actor step of the AP-MAN algorithm converges a.s. to  asymptotically stable equilibrium set of
\begin{equation}
\label{eqn: actor_ode_advantage_parameter}
    \dot{\theta}^i = \hat{\Gamma}^i[-G(\theta^{i})^{-1} \mathbb{E}_{s_t\sim d_{\theta}, a_t\sim \pi_{\theta}} ( \tilde{\delta}^i_{t,\theta} \psi^i_{t,\theta})],~~~ \forall~i\in N.
\end{equation}
\end{theorem}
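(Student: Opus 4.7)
The plan is to mimic the two–time–scale argument used for Theorem~\ref{thm: MANAC-algo_1}, but with the extra twist that the natural gradient representation is now realized implicitly through the advantage parameter $w^i_t$ rather than through an explicit Sherman–Morrison estimate of $G^{i^{-1}}_t$. After again transforming $r^i_{t+1}\mapsto -r^i_{t+1}$ so that the actor step reads $\theta^i_{t+1}\leftarrow \Gamma^i[\theta^i_t-\beta_{\theta,t}\,w^i_{t+1}]$, the key observation is that Theorem~\ref{thm: adv_convergence} already guarantees, on the faster time-scale, that $w^i_t\to -G(\theta^i)^{-1}\,\mathbb{E}_{s\sim d_\theta,a\sim\pi_\theta}[\tilde\delta^i_{t,\theta}\,\psi^i_{t,\theta}]$ when the actor parameters are frozen. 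Coupling this with the critic convergence in Theorem~\ref{thm: critic_convergence_fisher} lets us treat the right-hand side of the actor recursion as an estimate of the drift appearing in~\eqref{eqn: actor_ode_advantage_parameter}.

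The concrete steps I would carry out are as follows. First, let $\mathcal{F}_{t,3}=\sigma(\theta_\tau,w_\tau,\tau\le t)$ and write
\begin{equation*}
-w^i_{t+1}= -G(\theta^i_t)^{-1}\mathbb{E}_{s_t\sim d_{\theta_t},a_t\sim \pi_{\theta_t}}(\tilde\delta^i_{t,\theta_t}\psi^i_{t,\theta_t}\mid \mathcal{F}_{t,3})+\eta^i_{t+1,1}+\eta^i_{t+1,2},
\end{equation*}
where $\eta^i_{t+1,1}$ is the martingale difference $-w^i_{t+1}+\mathbb{E}[-w^i_{t+1}\mid\mathcal{F}_{t,3}]$, and $\eta^i_{t+1,2}$ is the ``two-time-scale gap'' between $-w^i_{t+1}$ and the asymptotic value $-G(\theta^i_t)^{-1}\mathbb{E}[\tilde\delta^i_{t,\theta_t}\psi^i_{t,\theta_t}]$ that Theorem~\ref{thm: adv_convergence} identifies. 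Because $\beta_{\theta,t}=o(\beta_{v,t})$ and the critic, reward, and advantage recursions all converge on the faster scale (Theorems~\ref{thm: critic_convergence_fisher} and~\ref{thm: adv_convergence}), we have $\eta^i_{t+1,2}=o(1)$ almost surely. Using assumptions X.\ref{ass: feature_conds}, X.\ref{ass: g_inv}, and X.\ref{ass: existence_local_min}, the sequences $\{z^i_t\},\{\psi^i_t\},\{w^i_t\}$ remain bounded; hence $M^{3,i}_t:=\sum_{\tau\le t}\beta_{\theta,\tau}\eta^i_{\tau+1,1}$ is an $L^2$-bounded martingale, converges a.s., and yields $\lim_t \mathbb{P}[\sup_{p\ge t}\|\sum_{\tau=t}^p\beta_{\theta,\tau}\eta^i_{\tau+1,1}\|\ge\epsilon]=0$ for every $\epsilon>0$.

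Next I would verify the continuity hypothesis of the Kushner–Clark lemma (Appendix~\ref{app: K-C_lemma}) for
\begin{equation*}
g^{3,i}(\theta)= -G(\theta^i)^{-1}\sum_{s,a} d_\theta(s)\,\pi_\theta(s,a)\,\tilde\delta^i_{t,\theta}\,\psi^i_{t,\theta}.
\end{equation*}
Each factor is continuous in $\theta$ by the same argument as in the proof of Theorem~\ref{thm: MANAC-algo_1}: $\pi_\theta(s,a)$ and $\psi^i_{t,\theta}$ are continuous by X.\ref{ass: regularity}; the stationary distribution $d_\theta$ and the critic solution $v_\theta$ depend continuously on $\theta$ as unique solutions to linear systems whose coefficients depend continuously on $\theta$ (X.\ref{ass: regularity}, X.\ref{ass: feature_conds}); and $G(\theta^i)^{-1}$ is continuous by X.\ref{ass: g_inv}. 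With these pieces in place, Kushner–Clark gives convergence of the projected iterates to the set of asymptotically stable equilibria of~\eqref{eqn: actor_ode_advantage_parameter}.

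The main obstacle I expect is disentangling the two layers of faster-scale behaviour that both sit inside $w^i_t$: the critic/reward parameters $(v^i_t,\lambda^i_t,\mu^i_t)$ enter through $\tilde\delta^i_t$, while the advantage parameter $w^i_t$ itself is driven by its own stochastic recursion. To make the bias term $\eta^i_{t+1,2}$ genuinely vanish almost surely, I would combine the standard two-time-scale lemma (Chapter~6 of \cite{borkar2009book}) in a nested fashion: first freeze $\theta$ and use Theorem~\ref{thm: critic_convergence_fisher} to replace $\tilde\delta^i_t$ by $\tilde\delta^i_{t,\theta}$; then invoke Theorem~\ref{thm: adv_convergence} to replace $w^i_{t+1}$ by its limit; only after both substitutions does the actor recursion track the ODE in~\eqref{eqn: actor_ode_advantage_parameter}. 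All other steps are routine verifications analogous to those in the proof of Theorem~\ref{thm: MANAC-algo_1}.
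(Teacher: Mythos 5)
Your proposal follows essentially the same route as the paper's proof: transform rewards to costs, treat the critic, reward, and advantage-parameter recursions as converged on the faster time scale (Theorems \ref{thm: critic_convergence_fisher} and \ref{thm: adv_convergence}), decompose the actor increment into the drift $-G(\theta^i)^{-1}\mathbb{E}[\tilde{\delta}^i_{t,\theta}\psi^i_{t,\theta}]$ plus a convergent martingale term and an $o(1)$ bias, verify continuity of the drift, and invoke the Kushner--Clark lemma. If anything, you are slightly more explicit than the paper in separating the gap between $w^i_{t+1}$ and its fast-time-scale limit from the critic-approximation error, which the paper's terse argument folds into a single $o(1)$ term by reference to Theorem \ref{thm: MANAC-algo_1}.
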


\begin{proof}
\label{proof: MANAC-algo_2}
The proof is similar to that of Theorem \ref{thm: MANAC-algo_1}. So, we avoid writing many details here.
% to avoid the repetition.
Let $\mathcal{F}_{t,3} = \sigma (\theta_{\tau}, \tau \leq t)$ be the sigma field generated by $\{\theta_{\tau}\}_{\tau\leq t}$. The actor update in the AP-MAN algorithm with local projection is
\begin{equation}
\label{eqn: actor_update_in_AP_thm}
    \theta^i_{t+1} = \Gamma^i [\theta^i_t - \beta_{\theta,t}  \mathbb{E}_{s_t\sim d_{\theta_t}, a_t\sim \pi_{\theta_t}} (G(\theta^{i}_t)^{-1} \tilde{\delta}^i_{t} \psi^i_{t} | \theta^i_t) + \beta_{\theta,t} \xi_{t+1,3}^i],
\end{equation}
where $\xi_{t+1,3}^i = G(\theta^i_t)^{-1} \mathbb{E}_{s_t\sim d_{\theta_t}, a_t\sim \pi_{\theta_t}}( ( \tilde{\delta}^i_t -\tilde{\delta}^i_{t, \theta_t}) \cdot \psi^i_{t, \theta} ~|~ \mathcal{F}_{t,3}) = o(1)$, since critic converges at the faster time scale. Similar to Theorem \ref{thm: MANAC-algo_1}, we have  $g^{2,i}(\theta_t) = -G(\theta^{i}_t)^{-1} \mathbb{E}_{s_t\sim d_{\theta_t}, a_t\sim \pi_{\theta_t}} ( \tilde{\delta}^i_{t} \psi^i_{t} | \mathcal{F}_{t,3})$ that is continuous in $\theta^i_t$. The result follows by appealing to Kushner-Clark lemma.
% And all the conditions of Kushner-Clark lemma are satisfied in similar way as in Theorem \ref{thm: MANAC-algo_1}. So, the update in Equation (\ref{eqn: actor_update_in_fisher_thm}) converges a.s to the set of asymptotically stable equilibria of ODE given in Equation (\ref{eqn: actor_ode}) for each $i\in N$. This concludes the proof. 

\end{proof}

\subsection{Convergence of FIAP-MAN actor-critic algorithm}
\label{subsec: convergence_MANAC3}

The critic convergence, the convergence of reward parameters, and objective function estimate are the same as in Theorem \ref{thm: critic_convergence_fisher}. Thus, we provide proof for actor convergence only.
Like FI-MAN and AP-MAN algorithms, we again consider the transformed problem; rewards replaced with costs. 
%This transformation, however, only affects the recursion given in Equation (\ref{eqn: adv_parameter_update_manac3}).
Therefore, we consider the following recursion 
\begin{equation}
\label{eqn: adv_parameter_update_manac3_cost}
    w^i_{t+1} = (1 - \beta_{v,t})w^i_t - \beta_{v,t} G^{i^{-1}}_t \tilde{\delta}^i_t \psi^i_t.
\end{equation}

\begin{theorem}
\label{thm: adv_fisher_convergence}
Under the assumptions X. \ref{ass: feature_conds} and X. \ref{ass: g_inv}, for each agent $i\in N$, with actor parameters $\theta^i$, we have
% the $w_t^i, ~ t\geq 0$ given in Equation (\ref{eqn: adv_parameter_update_manac3_cost}) satisfy
$w_t^i \rightarrow -G(\theta^i)^{-1} \mathbb{E}[\tilde{\delta}^i_{t,\theta} \psi^i_t]$ as $t\rightarrow \infty$ with probability one. 
\end{theorem}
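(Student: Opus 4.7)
The plan is to mimic the ODE/stochastic‑approximation argument of Theorem~\ref{thm: adv_convergence}, but with a different driving linear map (the scalar $-I$ instead of $-G(\theta^i)$) and with an additional perturbation coming from the fact that $G^{i^{-1}}_t$ is itself a time‑varying estimate. Because the updates of $v^i_t,\lambda^i_t,\mu^i_t$ and of $G^{i^{-1}}_t$ do not involve $w^i_t$, I would first invoke Theorem~\ref{thm: critic_convergence_fisher} and Theorem~\ref{thm: fisher_convergence_algo_1} to assert that on the $\beta_{v,t}$ time‑scale, with $\theta^i$ held fixed, $\tilde{\delta}^i_t\to\tilde{\delta}^i_{t,\theta}$ and $G^{i^{-1}}_t\to G(\theta^i)^{-1}$ almost surely. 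This reduces the study of (\ref{eqn: adv_parameter_update_manac3_cost}) to a standard Robbins–Monro recursion with a vanishing perturbation.

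Next I would rewrite the recursion in the canonical form
\begin{equation*}
w^i_{t+1} = w^i_t + \beta_{v,t}\,\bigl[h^i(w^i_t)+M^{3,i}_{t+1}+\varepsilon^i_t\bigr],
\end{equation*}
where $h^i(w^i)=-w^i - G(\theta^i)^{-1}\mathbb{E}_{s_t\sim d_\theta,a_t\sim\pi_\theta}[\tilde{\delta}^i_{t,\theta}\psi^i_t]$ is the mean field, $M^{3,i}_{t+1}:=-G(\theta^i)^{-1}\tilde{\delta}^i_{t,\theta}\psi^i_t+G(\theta^i)^{-1}\mathbb{E}[\tilde{\delta}^i_{t,\theta}\psi^i_t\mid\mathcal{F}^{\,3}_{t}]$ is a martingale difference with respect to $\mathcal{F}^{\,3}_t=\sigma(w^i_\tau,\tau\le t)$, and $\varepsilon^i_t:=(G(\theta^i)^{-1}-G^{i^{-1}}_t)\tilde{\delta}^i_t\psi^i_t+G^{i^{-1}}_t(\tilde{\delta}^i_{t,\theta}-\tilde{\delta}^i_t)\psi^i_t=o(1)$ a.s.\ by the faster‑scale convergences just noted. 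Assumptions X.\ref{ass: feature_conds} and X.\ref{ass: g_inv} together with boundedness of $\tilde{\delta}^i_t$ give the standard growth bound $\mathbb{E}[\|M^{3,i}_{t+1}\|^{2}\mid\mathcal{F}^{\,3}_t]\le C(1+\|w^i_t\|^2)$, as required by Borkar's ODE method.

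The associated ODE is the affine linear system $\dot w^i=h^i(w^i)=-w^i-G(\theta^i)^{-1}\mathbb{E}[\tilde{\delta}^i_{t,\theta}\psi^i_t]$, whose unique equilibrium is $w^{i,\star}:=-G(\theta^i)^{-1}\mathbb{E}[\tilde{\delta}^i_{t,\theta}\psi^i_t]$. Asymptotic stability of $w^{i,\star}$ follows from the quadratic Lyapunov function $V(w^i)=\tfrac12\|w^i-w^{i,\star}\|^2$, which satisfies $\dot V = -\|w^i-w^{i,\star}\|^2 < 0$ for all $w^i\neq w^{i,\star}$. For the scaled‐limit system needed to check non‑explosion, $h^i_\infty(w^i):=\lim_{r\to\infty}h^i(rw^i)/r=-w^i$, so the origin is globally asymptotically stable for the limit ODE; this yields almost sure boundedness of $\{w^i_t\}$ by the Borkar–Meyn stability criterion. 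Combining these two facts with Theorem 2.2 of \cite{borkar2000ode} gives $w^i_t\to w^{i,\star}$ a.s., as claimed.

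The only nontrivial step is the decoupling of the three co‑evolving fast iterates $(v^i_t,\lambda^i_t,\mu^i_t)$, $G^{i^{-1}}_t$ and $w^i_t$, all driven by the same step‑size $\beta_{v,t}$. The clean way to handle this is to observe that the former two updates do not depend on $w^i_t$, so their limits can be treated as exogenous; the residual errors are then absorbed into the asymptotically negligible term $\varepsilon^i_t$. If this decoupling causes trouble, an alternative is to stack $(G^{i^{-1}}_t,w^i_t)$ into one joint iterate and read off a block‑triangular ODE whose lower block is exactly the affine ODE above, and then invoke the joint Kushner–Clark / Borkar analysis used in Theorems \ref{thm: MANAC-algo_1} and \ref{thm: adv_convergence}.
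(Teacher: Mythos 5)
Your proposal is correct and follows essentially the same route as the paper: the same decomposition into mean field $-w^i - G(\theta^i)^{-1}\mathbb{E}[\tilde{\delta}^i_{t,\theta}\psi^i_t]$, a martingale-difference term, and $o(1)$ perturbations absorbed via Theorems \ref{thm: critic_convergence_fisher} and \ref{thm: fisher_convergence_algo_1}, followed by the same ODE, quadratic Lyapunov function, scaled-limit map $h^i_\infty(w)=-w$, and appeal to Theorem 2.2 of \cite{borkar2000ode}. The only cosmetic difference is that the paper additionally gets almost-sure boundedness of $\{w^i_t\}$ directly from the observation that the update is eventually a convex combination of $w^i_t$ and a uniformly bounded quantity, whereas you rely solely on the Borkar--Meyn criterion; both are valid.
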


\begin{proof}
\label{proof: adv_fisher_convergence}
To prove this, we first note that for each $i\in N$,  $sup_{t,\theta^i, s_t,a^i_t} ||G_t^{i^{-1}} \tilde{\delta}^i_t \psi^i_t||< \infty$ with probability one. For the conditions given in Equation (\ref{eqn: robbins_monro_conds}) there exists $N_0$ such that for all $t \geq N_0$, $w^i_{t+1}$ is a convex combination of $w^i_t$ and a uniformly bounded quantity, $G^{i^{-1}}_t \tilde{\delta}^i_t \psi^i_t$. The overall sequence $w^i_t$ of iterates, for any $w^i_0 \in \mathbb{R}^{m_i}$ remains bounded with probability one. One can re-write the Equation (\ref{eqn: adv_parameter_update_manac3_cost}) as
\begin{equation*}
    {w}^i_{t+1} \leftarrow (1-\beta_{v,t}) w^i_t - \beta_{v,t} G(\theta^i)^{-1} \mathbb{E}[\tilde{\delta}^i_{t,\theta} \psi^i_t|\theta^i] - M^{3,i}_t + \beta_{v,t} \xi^i_{t+1,4}+ \beta_{v,t}\xi^i_{t+1,5},
\end{equation*}
where 
$M^{3,i}_t= \beta_{v,t}G(\theta^i)^{-1}(\tilde{\delta}^i_t\psi^i_t - \mathbb{E}[\tilde{\delta}^i_t\psi^i_t | \theta^i])$, $\xi^i_{t+1,4} = (G(\theta^i)^{-1} - G^{i^{-1}}_t )\tilde{\delta}^i_t\psi^i_t$, and $\xi^i_{t+1,5} = G(\theta^i)^{-1}\mathbb{E}[(\tilde{\delta}^i_{t,\theta} - \tilde{\delta}^i_t)\psi_{t}^{i} | \theta^i]$ respectively. From Theorem \ref{thm: critic_convergence_fisher} and \ref{thm: fisher_convergence_algo_1} both $\xi^i_{t+1,4}$ and $\xi^i_{t+1,5}$ are $o(1)$.  Note that  the sequence $\{\sum_{r=0}^{t-1} M^{3,i}_r\}$ is a convergent martingale sequence. Let $m_T = min \{m\geq p | \sum_{r=p}^m \beta_{v,r} \geq T \}$, then $\sum_{r=p}^{m_T} \beta_{v,r} G(\theta^i)^{-1}(\tilde{\delta}_r^i\psi_r^i - \mathbb{E}[\tilde{\delta}_r^i\psi_r^i | \theta^{i}_r]) \rightarrow 0$ almost surely as $p\rightarrow \infty$. Next, consider the ODE associated with Equation (\ref{eqn: adv_parameter_update_manac3_cost}),
\begin{equation}
\label{eqn: ode_w_rec_algo3}
    \dot w^{i}= - w^i - G(\theta^i)^{-1} \mathbb{E}[\tilde{\delta}^i_{t,\theta}\psi^i_t] =: g^{3,i}(w^i).
\end{equation}
Note that $g^{3,i}(w^i)$ is Lipschitz continuous in $w^i$, and hence the ODE (\ref{eqn: ode_w_rec_algo3}) is well-posed. Let $g^3_{\infty}(w^i) = lim_{r\rightarrow \infty} \frac{g^3(rw^i)}{r} =  - w^i$. 
For the ODE $\dot w^{i}= -w^i$ the origin is unique globally asymptotically stable equilibrium with $V_3(w^i) = w^{i^{\top}} w^i/2 $ as the associated Lyapunov function.  Similar to Theorem \ref{thm: adv_convergence} one can show that $w(\theta^i) = -G(\theta^i)^{-1} \mathbb{E}[\tilde{\delta}^i_{t,\theta} \psi^i_t] $ is an asymptotically stable attractor for the ODE given in Equation (\ref{eqn: ode_w_rec_algo3}). The proof follows from Theorem 2.2 of \cite{borkar2000ode}.

\end{proof}

\begin{theorem}
\label{thm: actor_convergence_advantage_parameter_fisher}
Under the assumptions X. \ref{ass: regularity} - X. \ref{ass: existence_local_min}  the sequence $\{\theta^i_t\}$ obtained from the actor step of the FIAP-MAN algorithm converges a.s. to asymptotically stable equilibrium set of the ODE
\begin{equation}
\label{eqn: actor_ode_advantage_parameter_fisher}
    \dot{\theta}^i = \hat{\Gamma}^i[-G(\theta^{i})^{-1} \mathbb{E}_{s_t\sim d_{\theta}, a_t\sim \pi_{\theta}} ( \tilde{\delta}^i_{t,\theta} \psi^i_{t,\theta})],~~~ \forall~i\in N.
\end{equation}
\end{theorem}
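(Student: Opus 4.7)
The plan is to follow the two time-scale structure already developed in Theorems \ref{thm: MANAC-algo_1} and \ref{thm: actor_convergence_advantage_parameter}, using the faster-scale convergence of the advantage parameters established in Theorem \ref{thm: adv_fisher_convergence}. Writing the actor step of FIAP-MAN in cost form and inserting the projection operator gives
\begin{equation*}
\theta^i_{t+1} = \Gamma^i\!\left[\theta^i_t - \beta_{\theta,t}\, w^i_{t+1}\right],
\end{equation*}
where the advantage parameter recursion runs at the faster step-size $\beta_{v,t}$. Let $\mathcal{F}_{t,6} = \sigma(\theta_\tau, \tau\leq t)$ and set
\begin{equation*}
g^{3,i}(\theta_t) \;=\; -\,G(\theta^i_t)^{-1}\,\mathbb{E}_{s_t\sim d_{\theta_t},\,a_t\sim \pi_{\theta_t}}\bigl[\tilde{\delta}^i_{t,\theta_t}\,\psi^i_{t,\theta_t}\,\big|\,\mathcal{F}_{t,6}\bigr],
\end{equation*}
which is exactly the right-hand side driving the target ODE.

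The first step is to decompose the effective drift $-w^i_{t+1}$ into the desired mean $g^{3,i}(\theta_t)$ plus error terms. Define
\begin{align*}
\xi^i_{t+1,6} &= -\bigl(w^i_{t+1} + G(\theta^i_t)^{-1}\mathbb{E}[\tilde{\delta}^i_t\psi^i_t\,|\,\mathcal{F}_{t,6}]\bigr), \\
\xi^i_{t+1,7} &= -\,G(\theta^i_t)^{-1}\,\mathbb{E}\bigl[(\tilde{\delta}^i_t-\tilde{\delta}^i_{t,\theta_t})\psi^i_t\,\big|\,\mathcal{F}_{t,6}\bigr],
\end{align*}
so that $-w^i_{t+1} = g^{3,i}(\theta_t) + \xi^i_{t+1,6} + \xi^i_{t+1,7}$. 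The term $\xi^i_{t+1,7}$ is $o(1)$ by Theorem \ref{thm: critic_convergence_fisher}, since $\tilde{\delta}^i_t \to \tilde{\delta}^i_{t,\theta_t}$ almost surely on the faster time-scale. For $\xi^i_{t+1,6}$, Theorem \ref{thm: adv_fisher_convergence} gives $w^i_{t+1} \to -G(\theta^i_t)^{-1}\mathbb{E}[\tilde{\delta}^i_{t,\theta_t}\psi^i_t\,|\,\theta^i_t]$ a.s.\ on the faster scale, and combining with the two-time-scale separation $\beta_{\theta,t} = o(\beta_{v,t})$ of (\ref{eqn: robbins_monro_conds}) yields $\xi^i_{t+1,6}\to 0$ a.s.\ as well (cf.\ the standard extrapolation lemma in Chapter 6 of \cite{borkar2009book}).

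The second step is to verify the hypotheses of the Kushner--Clark lemma for the recursion
\begin{equation*}
\theta^i_{t+1} = \Gamma^i\!\left[\theta^i_t + \beta_{\theta,t}\bigl(g^{3,i}(\theta_t) + \xi^i_{t+1,6} + \xi^i_{t+1,7}\bigr)\right].
\end{equation*}
Continuity of $g^{3,i}(\cdot)$ in $\theta^i_t$ follows exactly as in Theorem \ref{thm: MANAC-algo_1}: assumption X.\ref{ass: regularity} gives continuity of $\psi^i_{t,\theta}$ and of the stationary distribution $d_{\theta}\pi_{\theta}$ (through the unique solvability of the Poisson-type system), continuity of $\tilde{\delta}^i_{t,\theta}$ follows from the continuity of $v_\theta$ and $\lambda_\theta$ provided by Theorem \ref{thm: critic_convergence_fisher}, and continuity of $G(\theta^i)^{-1}$ is ensured by assumption X.\ref{ass: g_inv}. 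The vanishing-noise condition $\sup_{p\geq t}\bigl\|\sum_{\tau=t}^{p}\beta_{\theta,\tau}(\xi^i_{\tau+1,6}+\xi^i_{\tau+1,7})\bigr\|\to 0$ in probability follows because both error sequences are $o(1)$ a.s.\ and $\sum_t\beta_{\theta,t}^2 < \infty$. The projection $\Gamma^i$ keeps iterates in the compact set $\Theta^i$ (assumption X.\ref{ass: existence_local_min}), ensuring boundedness.

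The main obstacle, and the only point that distinguishes this proof from Theorem \ref{thm: actor_convergence_advantage_parameter}, is controlling $\xi^i_{t+1,6}$: the iterate $w^i_{t+1}$ here is driven by $G^{i^{-1}}_t$ (which is itself a Sherman--Morrison recursion), not by a closed-form quantity. This is handled by invoking Theorem \ref{thm: fisher_convergence_algo_1} in tandem with Theorem \ref{thm: adv_fisher_convergence} and the step-size ratio $\beta_{\theta,t}/\beta_{v,t}\to 0$: on the slower clock seen by the actor, both $G^{i^{-1}}_t$ and $w^i_t$ appear to have already tracked their targets. Once this is established, the Kushner-Clark lemma delivers almost sure convergence of $\{\theta^i_t\}$ to the internally chain transitive invariant set of the projected ODE (\ref{eqn: actor_ode_advantage_parameter_fisher}), which is contained in its asymptotically stable equilibrium set, completing the proof.
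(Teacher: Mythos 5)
Your proposal takes exactly the paper's approach: the paper gives no separate proof for this theorem, stating only that it is ``similar to Theorem \ref{thm: actor_convergence_advantage_parameter},'' and your argument is precisely that proof carried over — decompose the actor drift into $g^{3,i}(\theta_t)$ plus two $o(1)$ tracking errors (one from the fast-scale convergence of $w^i_t$ in Theorem \ref{thm: adv_fisher_convergence}, one from the critic convergence in Theorem \ref{thm: critic_convergence_fisher}), check continuity of $g^{3,i}$, and invoke the Kushner--Clark lemma. The only blemish is a sign bookkeeping slip: with $w^i_{t+1}$ from the cost-form recursion (\ref{eqn: adv_parameter_update_manac3_cost}), which by Theorem \ref{thm: adv_fisher_convergence} tracks $-G(\theta^i)^{-1}\mathbb{E}[\tilde{\delta}^i_{t,\theta}\psi^i_t]=g^{3,i}(\theta_t)$ itself, the projected actor step should be $\theta^i_{t+1}=\Gamma^i[\theta^i_t+\beta_{\theta,t}w^i_{t+1}]$, and your claimed identity $-w^i_{t+1}=g^{3,i}(\theta_t)+\xi^i_{t+1,6}+\xi^i_{t+1,7}$ does not hold with $\xi^i_{t+1,6}$ as defined (the right-hand side evaluates to $-w^i_{t+1}-2G(\theta^i_t)^{-1}\mathbb{E}[\tilde{\delta}^i_t\psi^i_t\,|\,\mathcal{F}_{t,6}]$); once the signs are made consistent the decomposition closes and the rest of the argument is sound.
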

The proof of the above theorem is similar to the Theorem \ref{thm: actor_convergence_advantage_parameter}. We avoid writing it because of repetition.

\begin{remark}
\textcolor{black}{Though the ODEs corresponding to actor update in all MAN algorithms seem similar, we emphasize that they come from three different algorithms, each with a different critic update implicitly. Moreover, all the 3 MAN algorithms have their way of updating the Fisher information matrix or/and advantage parameters. Also, the objective function, $J(\theta)$, can have multiple stationary points and local optima. Thus all the three algorithms \textit{can potentially} attain different optima, and this was clearly illustrated in our comprehensive computational experiments in Section \ref{subsubsec: traffic_network_details}. Moreover, we also note that the optimal value obtained from FI-MAN and FIAP-MAN are close to each other, and AP-MAN and MAAC are also close. See also the discussion in  Sections \ref{subsec: algo_actor_comparison} and \ref{subsec: compare_variants}}. 
\end{remark}

To validate the usefulness of our proposed MAN algorithms, we implement them on a bi-lane traffic network and an abstract multi-agent RL model. The detailed computational experiments follow in the next section.

\section{Performance of algorithms in traffic network and abstract MARL models}
\label{sec: experiments}
This section provides comparative and comprehensive experiments in two different setups. Firstly, we model traffic network control as a multi-agent reinforcement learning problem. A similar model is available in \cite{shobhit} in a related but different context. Here, each traffic light is an agent, and the collective objective of these traffic lights is to minimize the average \emph{network congestion}. Another setup that we consider is an abstract multi-agent RL model with $15$ agents, $15$ states, and $2$ actions in each state. The model, algorithm parameters, rewards, and the features used in the linear function approximations are the same as given in \cite{dann2014policy,zhang2018fully}.

All the computations are done in python 3.8 on a machine equipped with 8 GB RAM and an Intel i5 processor. For the traffic network control, we use TraCI, i.e., ``Traffic Control Interface". TraCI uses a TCP-based client/server architecture to provide access to sumo-gui\footnote{The sumo-gui can be installed using: \url{https://www.eclipse.org/sumo/}} in python, thereby sumo act as a server \cite{krajzewicz2012recent}. For detailed documentation on TraCI, we refer to \url{https://sumo.dlr.de/docs/TraCI.html}. We now describe the traffic network and the arrival processes that we consider.

\subsection{Performance of algorithms for traffic network controls}
\label{subsubsec: traffic_network_details}
Consider the bi-lane traffic network as shown in Figure \ref{fig: traffic_network}. The network consists of $N_1, N_2, S_1,S_2, E_1, E_2, W_1$, and $W_2$ that act as the source and the destination nodes. $T_1, T_2, T_3$, and $T_4$ represents the traffic lights and act as agents. All the edges in the network are assumed to be of equal length. The agent's objective is to find a traffic signaling plan to minimize the overall network congestion. Note that the congestion to each traffic light is private information and hence not available to other traffic lights.
\begin{figure}[!ht]
	\centering
	\includegraphics[scale=0.65]{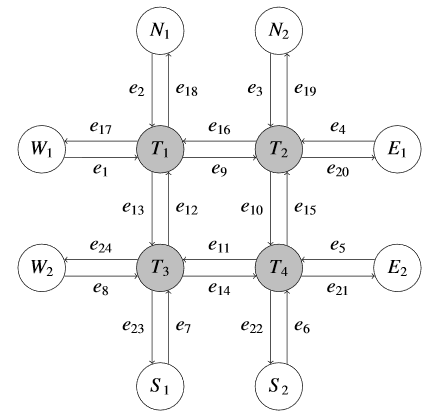}
	\caption{A bi-lane traffic network with four traffic lights $T_1,T_2,T_3,T_4$. All the other nodes $N_1, N_2, S_1,S_2, E_1, E_2, W_1$, and $W_2$ act as the source and destination nodes.}
	\label{fig: traffic_network}
\end{figure}

The sumo-gui requires the user to provide $T$, the total number of time steps the simulation needs to be performed, and $N_v$, the number of vehicles used in each simulation. As per the architecture of sumo-gui, vehicles arrive uniformly from the interval $\{1,2, \dots, T\}$. Once a vehicle arrives, it has to be assigned a source and a destination node. 
%At any time $t$, each node can observe high or low traffic. 
We assign the source node to each incoming vehicle according to various distributions. Different arrival patterns (ap) can be incorporated by considering different source-destination node assignment distributions. We first describe the assignment of the source node.
We use two different arrival patterns to capture high or low number of vehicles assigned to the source nodes in the network.  Let $p_{s,ap}$ be the probability that a vehicle is assigned a source node $s$ if arrival pattern is $ap$. Table \ref{table: prob_source node} gives probabilities, $p_{s,ap}$ for 2 arrival patterns ($ap \in \{1, 2\}$) that we consider. 
The destination node is sampled uniformly from the nodes except the source node. We assume that vehicles follow the shortest path from the source node to the destination node. However, if there are multiple paths with the same path length, then any one of them can be chosen with uniform probability. 

% Please add the following required packages to your document preamble:
% \usepackage{multirow}
\begin{table}[h!]
\centering
\renewcommand{\arraystretch}{1.25}
\begin{tabular}{|c|c|c|}
\hline
\multirow{2}{*}{\textbf{Source node $(s)$}} & \multicolumn{2}{c|}{\textbf{Arrival pattern $(ap)$}} \\ \cline{2-3} 
 & \textbf{$ap=1$} & \textbf{$ap=2$} 
%  & \textbf{$m=2^{\prime}$}
\\ \hline \hline
$W_1$ & $\frac{3}{16}$ & $\frac{1}{28}$ 
% & $\frac{3}{14}$ 
\\ \hline
$W_2$ & $\frac{1}{16}$ & $\frac{1}{28}$ 
% & $\frac{3}{14}$
\\ \hline
$N_1$ & $\frac{1}{16}$ & $\frac{3}{14}$ 
% & $\frac{1}{28}$
\\ \hline
$N_2$ & $\frac{3}{16}$ & $\frac{3}{14}$ 
% & $\frac{1}{28}$ 
\\ \hline
$E_1$ & $\frac{3}{16}$ & $\frac{1}{28}$ 
% & $\frac{3}{14}$
\\ \hline
$E_2$ & $\frac{1}{16}$ & $\frac{1}{28}$ 
% & $\frac{3}{14}$
\\ \hline
$S_1$ & $\frac{1}{16}$ & $\frac{3}{14}$ 
% & $\frac{1}{28}$ 
\\ \hline
$S_2$ & $\frac{3}{16}$ & $\frac{3}{14}$ 
% & $\frac{1}{28}$ 
\\ \hline
\end{tabular}
\caption{Probability $p_{s,ap}$ for all the nodes and arrival patterns. $ap=1$ assigns high probabilities to the nodes, $N_2,S_2$ and $E_1,W_1$. $ap=2$ assigns high probabilities to all north and south nodes, $N_1,S_1,N_2,S_2$ and low probability to all east-west nodes, $E_1,E_2,W_1,W_2$.}
\label{table: prob_source node}
\end{table}

% one North-South, i.e., $N_2, S_2$ and one East-West, i.e., $E_1, W_1$ nodes with higher probabilities, thus putting highest congestion on traffic light $T_2$, almost equal congestion on $T_1,T_4$ and the least congestion on $T_3$. However, for $m=2$ all North-South nodes have high probability and all East-West nodes have low probability. Thus we expect that all the traffic lights will have almost the same congestion. $m=2^{\prime}$ is symmetric to $m=2$ and hence more vehicles will be assigned to East-West nodes.

In arrival pattern 1, i.e., $ap=1$, we have $p_{s,ap}$ higher for North-South nodes $N_2, S_2$, and East-West nodes $E_1,W_1$. Thus, we expect to see heavy congestion for traffic light $T_2$; almost same congestion for traffic lights $T_1$ and $T_4$; and the least congestion for traffic light $T_3$. For the arrival pattern 2, i.e., $ap=2$ more vehicles are assigned to all the North-South nodes.
% the arrival pattern $2^\prime$, i.e., $m=2^{\prime}$, which is symmetric to arrival pattern 2, has more vehicles assigned to all East and West nodes. 
So we expect that all the traffic lights will be equally congested.
% in both of these arrival patterns $(m=2,2^{\prime})$. 
For completeness, we now provide the distribution of the number of vehicles assigned to a source node $s$ at time $t$ for a given arrival pattern $ap$.

Let $N_t$ be the number of vehicles arrived at time $t$, and $N^s_t$ be the number of vehicles assigned to source node $s$ at time $t$. Thus, $N_t  = \sum_{s} N^s_t$.  Note that the arrivals are uniform in $\{0,1,\dots, T\}$, so $N_t$ is a binomial random variable with parameters $\left(N_v,\frac{1}{T} \right)$. Therefore, we have
\begin{align*}
\mathbb{P}(N_t = r) &= \binom{N_v}{r} \left(\frac{1}{T}\right)^r \left( 1-\frac{1}{T}\right)^{N_v-r},\quad \forall ~r=0,1,\cdots, N_v.
\end{align*}
Moreover, using the law of total probability, for all $ap~\in \{1,2\}$, we obtain
\begin{align}
\label{eqn: binomial}
\mathbb{P}(N^s_t = k ~|~ ap) &= \binom{N_v}{k} \left(\frac{p_{s,ap}}{T}\right)^k \left( 1-\frac{p_{s,ap}}{T}\right)^{N_v-k}, ~ \forall ~k=0,1,\cdots, N_v,
\end{align}
i.e., the distribution of $N_t^s$ for a given arrival pattern $ap$ is also binomial with parameters $\left(N_v,  \frac{p_{s,ap}}{T}\right)$. For more details see Appendix \ref{app: justification_eqn_40}.

In our experiments we take simulation time $T= 180000$ seconds which is divided into simulation cycle (called decision epoch) of $T_c = 120$ seconds each. Thus, there are 1500 decision epochs. The number of vehicles are taken as $N_v = 50000$. 
% (we also report our experimental results for lower number of vehicles, $N_v = 40000$. The details are available in \textcolor{black}{Appendix **}). 
We now describe the decentralized MARL framework for the traffic network shown in Figure \ref{fig: traffic_network}.

\subsubsection{Decentralized framework for traffic network control} 
% \textcolor{black}{apart from features, this subsection has many other details like signaling plans, etc. another title is better?}
%\textcolor{black}{Feature engineering for  MARL traffic network model??}
\label{subsubsub: marl_features}
In this section, we model the above traffic network control as a fully decentralized MARL problem with traffic lights as agents, $N = \{T_1,T_2, T_3, T_4\}$. Let $E_{in}$ denote the set of edges directed towards the traffic lights,
\begin{equation*}
E_{in} = \{e_1, e_2, e_{12}, e_{16}, e_3, e_4, e_9, e_{15}, e_8, e_7, e_{11}, e_{13}, e_5, e_6, e_{10}, e_{14}\}.
\end{equation*}

We assume that the maximum capacity of each lane in the network is $C = 50$. The state-space of the system consists of the number of vehicles in the lanes belonging to $E_{in}$.  Hence, the size of the state space is $50^{16}$.
% The total simulation time $T=180000 $ seconds is divided into simulation cycles of $T_c = 120$ seconds each. Thus, there are $1500$ decision epochs. 
At every decision epoch each traffic light follows one of the following traffic signal plans for the next $T_c = 120$ simulation steps.
\begin{enumerate}
	\item Equal green time of $\frac{T_c}{2}$ for both North-South and East-West lanes
	\item $\frac{3T_c}{4}$ green time for North-South and $\frac{T_c}{4}$ green time for East-West lanes
	\item $\frac{T_c}{4}$ green time for North-South and $\frac{3T_c}{4}$ green time for East-West lanes.
\end{enumerate}
Thus the total number of actions available at each traffic light is $3^4 = 81$. The rewards given to each agent is equal to the negative of the average number of vehicles stopped at its corresponding traffic light. Note that the rewards are privately available to each traffic light only. We aim to maximize the expected time average of the globally averaged rewards, which is equivalent to minimize the (time average of) number of stopped vehicles in the system. Since the state space is huge $(50^{16})$, we use the linear function approximation for the state value function and the reward function. The approximate state value for state $s$ is $V(s;v) = v^{\top} \varphi(s)$, where $\varphi(s) \in \mathbb{R}^{L},~ L << |\mathcal{S}|$, is the feature vector for the state $s$. Moreover, the reward function is approximated as ${R}(s,a;\lambda) = \lambda^{\top} f(s,a)$ where $f(s,a)\in \mathbb{R}^M,~ M<< |\mathcal{S}||\mathcal{A}|$ are the features associated with each state-action pair $(s,a)$. Next, we describe the features we are using in the experiments \cite{shobhit}. %\textcolor{black}{shobhit thesis, etc.}

Let $x_t^i$ denote the number of vehicles in lane $e_i \in E_{in}$ at time $t$. We normalize $x_t^i$ via maximum capacity of a lane, $C$ to obtain $z_t^i = x_t^i / C$. We define $\xi(s)$ as a vector having components containing $z_t^i$, as well as components with products of two or three $z_t^i$'s. The product terms are of the form $z_t^i z_t^j$ and $z_t^i z_t^j z_t^l$ where all terms in the product correspond to the same traffic light. 
\begin{align*}
\xi(s) &= \left(\begin{matrix}z_t^1, & z_t^2, & \cdots, & z_t^{16}, & z_t^1 z_t^2, & \cdots, & z_t^6 z_t^5, & z_t^1 z_t^2 z_t^{12}, & \cdots, & z_t^5 z_t^6 z_t^{10} \end{matrix} \right)
\end{align*}
The feature vector $\varphi(s)$ is defined as having all the components of $\xi(s)$ along with an additional bias component, $1$. Thus, $\varphi(s) = \left(\begin{matrix} 1, & \xi(s)\end{matrix} \right)^{\top}$. The length of $\xi(s)$ is $16+4\times (4^2 + 4^3) = 336$. Hence, the length of the feature vector, $\varphi(s)$ is $L=1+336=337$. For each agent $i\in N$, we parameterize the local policy $\pi^i(s,a^i)$ using the Boltzmann distribution as follows: 
\begin{equation}
\label{eqn: gibbs_policy}
\pi^i_{\theta^i}(s,a^i) = \frac{\exp(q^{\top}_{s,a^i} \cdot \theta^i)}{\sum_{b^i\in \mathcal{A}^i} \exp(q^{\top}_{s,b^i}\cdot\theta^i)},
\end{equation}
where $q_{s,b^i}\in \mathbb{R}^{m_i}$ is the feature vector of dimension same as $\theta^i$, for any $s \in \mathcal{S}$ and $b^i\in \mathcal{A}^i$, for all $i\in N$. 
% Define the binary variables $a^i_{c,d}$ corresponding to action $a^i$ as:
% \begin{align*}
% a^i_{c,d} &= \begin{cases}1 \quad \text{signal plan $d$ is selected for traffic light $c$ in action $a^i$}\\ 0 \quad \text{otherwise}\end{cases} \quad \forall c=1,2,3,4 \text{ and } d = 1,2,3
% \end{align*}
% \textcolor{red}{To write the features $q_{s,a^i}$. Eligibility traces?}
The feature vector $q_{s,a^i}$ for each $i\in N$ is given by:
\begin{align*}
q_{s,a^i} &= \left(\begin{matrix}1, & a^{i,1} \xi(s), & a^{i,2} \xi(s), & a^{i,3} \xi(s) \end{matrix}\right)^{\top},
\end{align*}
where $\xi(s)$ is defined as earlier, and $a^{i,j}$ are the binary numbers defined as:
\begin{align*}
a^{i,j} &= \begin{cases}1 \quad \text{if signal plan $j$ is selected in action $a^i$} \\ 0 \quad \text{otherwise},\end{cases} ~~ \forall~ i\in N.
\end{align*}
The length of $q_{s,a^i}$, i.e., $m_i = 3\times 336+1 = 1009$. For the Boltzmann policy function $\pi^i_{\theta^i}(s,a^i)$ we have
% have\cite{sutton1999policy} 
% \textcolor{black}{we can recall the  policy gradient theorem ** \ref{thm: pgt}, write  the mean adjusted interpretation and cite Rick Sutton et al paper? also, write that this $\psi(s, a) = \nabla_{\theta_i} \log \pi^i_{\theta_i}(s, a^i) = ... $ is compatible feature, etc.??} 
%\cite{bhatnagar2009natural},
\begin{equation*}
\nabla_{\theta^i} \log \pi^i_{\theta^i} (s, a^i)  = q_{s,a^i} - \sum_{b^i\in \mathcal{A}^i} \pi^i_{\theta^i} (s, b^i) q_{s,b^i},
\end{equation*}
where $\nabla_{\theta_i} \log \pi^i_{\theta_i}(s, a^i)$ are the compatible features as in the policy gradient Theorem \cite{sutton1999policy}. Note that the compatible features are same as the features associated to policy, except normalized to be mean zero for each state.
% \textcolor{black}{to add more? features are zero mean centered?}. 
The features $f(s,a)$ for the rewards function are taken in a  similar way as $q_{s,a^i}$ for each $i\in N$, thus $M = 4\times3\times336 + 1 = 4033$. 

We implement all the 3 MAN algorithms and compared the average network congestion with the MAAC algorithm. For all $i\in {N}$, the initial value of parameters $\mu^i_0$, 
$\tilde{\mu}^i_0$, $v^i_0$, $\tilde{v}^i_0$, $\lambda^i_0$, $\tilde{\lambda}^i_0$, $\theta^i_0$, $w^i_0$ are taken as zero vectors of appropriate dimensions. The Fisher information matrix inverse,  $G_0^{i^{-1}}$ is initialized to $I, ~ \forall~ i\in N$. The critic and actor step sizes are taken as $\beta_{v,t} = \frac{1}{(t+1)^{0.65}}$, and $\beta_{\theta,t} = \frac{1}{(t+1)^{0.85}}$ respectively. Note that these step sizes satisfy the conditions given in Equation (\ref{eqn: robbins_monro_conds}). 
We assume that the communication graph $\mathcal{G}_t$ is a complete graph at all time instances $t$ and the consensus matrix is constant with $c_t(i,j)=\frac{1}{4}$ for all pairs $i,j$ of agents. Although we do not use the eligibility traces in the convergence analysis, we use them ($\lambda = 0.25$ for TD($\lambda$) \cite{sutton2018reinforcement}) to provide better performance in case of function approximations.
We believe that the convergence of MAN  algorithms while incorporating eligibility traces is easy to follow, so we avoid them here. We now provide the performance of all the algorithms in both the arrival patterns.

\subsubsection{Performance of traffic network for arrival pattern 1}
\label{subsubsec: diss_com_results}

% We implement all the 3 MAN and MAAC algorithms for both the arrival patterns 
% as given in Table \ref{table: prob_source node}.
% We will first consider the arrival pattern 1 for assigning the source node to incoming vehicles.

% \subsubsection*{Arrival pattern 1; $T$ = 180000; and $N_v$ = 50000}

% For last 200 decision epochs
Recall, for arrival pattern 1 we assign high probability $p_{s,ap}$ to the source nodes $N_2,S_2$ and $E_1,E_2$ and low probability to other source nodes.  Table \ref{table: network_50K_pattern_1} provides the average network congestion (averaged over 10 runs, and round off to 5 decimal places), standard deviation and 95\% confidence interval. 

\begin{table}[h!]
\centering
\setlength{\tabcolsep}{2.5pt}
\begin{tabular}{|c|c|c|c|}
\hline
\textbf{Algorithms} & \multicolumn{1}{c|}{\textbf{\begin{tabular}[c]{@{}c@{}}Avg network \\ congestion\end{tabular}}} & \multicolumn{1}{c|}{\textbf{Standard Deviation}} & \multicolumn{1}{c|}{\textbf{Confidence Interval (95\%)}} \\ \hline \hline
\textbf{MAAC} & 14.01687 & 0.08405 & (13.96478, 14.06896) \\ \hline
\textbf{FI-MAN} & 12.02819 & 1.48071 & (11.11045, 12.94593) \\ \hline
\textbf{AP-MAN} & 14.07899 & 0.08266 & (14.02776, 14.13022) \\ \hline
\textbf{FIAP-MAN} & 11.28657 & 1.04137 & (10.64113, 11.93201) \\ \hline
\end{tabular}
\caption{Average network congestion, standard deviation and 95\% confidence interval for arrival pattern 1. FIAP-MAN has $\approx $ 18\%, and FI-MAN has $\approx$ 14\% less congestion than MAAC algorithm. The congestion for the AP-MAN algorithm is almost the same as the MAAC algorithm with high confidence.
% performance of AP-MAN is almost similar to MAAC 
%\textcolor{blue}{can end the sentence here? }
% because it never uses the Fisher information matrix.
% \textcolor{blue}{we should try saying something like `congestion values within the top $15 \%$ are in bold'. also, this $15 \%$ should be used throughout the paper -- hopefully, it will work for other experiments also} Expectedly, both MAAC and AP-MAN have reached steady state values faster than other two algorithms. \textcolor{blue}{i wrote this earlier; not sure, if it is to be in caption; the point, is that FI-MAN would have lower congestion had it reached steady-state -- the way sumo works, we can't simulate this as 2000 decision epochs with same 50k vehicles is completely another system as these 50k are now placed uniformly over the 2000 epochs } FI-MAN and FIAP-MAN almost (upto 1st decimal) are on decreasing trend w.r.t. average network congestion however this decay is very slow (0.1 in 200 epochs).
}
\label{table: network_50K_pattern_1}
\end{table}

We observe an $\approx$ 18\% reduction in average congestion for FIAP-MAN and $\approx$ 14\% reduction for FI-MAN algorithms compared to the MAAC algorithm. These observations are theoretically justified in Section \ref{subsec: algo_actor_comparison}.

\textcolor{black}{To show that these algorithms have attained the steady-state, we provide average congestion,  and the correction factor (CF), i.e., the 95\% confidence value  which is defined as $CF = 1.96 \times \frac{std~dvn}{\sqrt{10}}$ for last 200 decision epochs in Table \ref{table: 50K_each_light_pattern_1}  of  Appendix \ref{app: 50k_pattern_1}. We observe that average network congestion for the MAN algorithms are almost (up to 1st decimal) on decreasing trend w.r.t. network congestion; however, this decay is prolonged (0.1 fall in congestion in 200 epochs), suggesting that algorithms are converging to a local minimum. }

% The average congestion for the AP-MAN algorithm is on par with MAAC.
% Performance of AP-MAN did not improve, possibly because it does not explicitly estimate the Fisher information matrix inverse. 
Thus, we see that algorithms involving the natural gradients dominate those involving standard gradients. Figure \ref{fig: cum_reward_mode1_50k} shows the (time) average network congestion for single run (thus lower the better). 
\begin{figure}[h!]
    \centering
    \includegraphics[width = 0.6\textwidth]{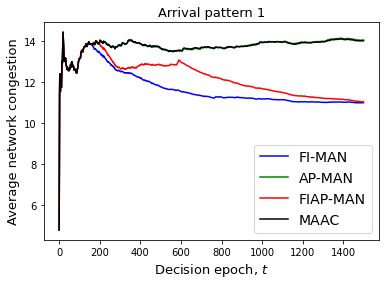}
\caption{(Time) average network congestion for all the algorithms with arrival pattern 1. The congestion is least for FIAP-MAN and FI-MAN algorithms. However, MAAC and AP-MAN algorithms have almost the same congestion.
% , which is $\approx$ 18\% more than FIAP-MAN and $\approx$ 14\% more than FI-MAN. 
Thus 2 (FI-MAN and FIAP-MAN) of the $3$ natural gradient-based algorithms find better optima, and the third one (AP-MAN) is on par with the MAAC algorithm. Moreover, for a few initial decision epochs $\approx$ 180, all the algorithms have almost the same performance, but afterward, they find different directions and end in different optima.}
\label{fig: cum_reward_mode1_50k}
\end{figure}

For almost $180$ decision epochs, all the algorithms have the same (time) average network congestion. However, after 180 decision epochs, FI-MAN and FIAP-MAN follow different paths and hence find different local minima as shown in Theorem \ref{thm: J_comp_t+1}.
We want to emphasize that the Theorem \ref{thm: J_comp_t+1} is for maximization framework. As mentioned above in Section \ref{subsubsub: marl_features}, we are also maximizing the globally average rewards, which is equivalent to minimizing the (time average of) number of stopped vehicles. 

\subsubsection*{Actor parameter comparison for arrival pattern 1 }

Recall, in Theorem \ref{thm: J_comp_t+1} we show that under some conditions, $J(\tilde{\theta}^N_{t+1}) \geq J(\tilde{\theta}^M_{t+1})$ for all $t\geq t_0$,
% if $\theta^N_t = \theta^M_t$. However, we observe that $\lambda_{\max}(G(\theta^N_t))<1$ for all $t\geq 2$. Therefore, we have $J(\tilde{\theta}^N_{t+1}) \geq J(\tilde{\theta}^M_{t+1})$ for all $t\geq 2$,
and hence at each iterate the average network congestion in FI-MAN, and FIAP-MAN algorithms are better than MAAC algorithm. To investigate this further, we plot the norm of difference of the actor parameter of all the 3 MAN algorithms with MAAC algorithm for each agent. For traffic light $T_1$ (or agent 1) these differences are shown in Figure \ref{fig: norm_diffe_agent_1} (for other agents see Figure \ref{fig: norm_diff_other_agents_50k_pattern_1} in Appendix \ref{app: 50k_pattern_1}).

\begin{figure}[h!]
\centering
\begin{subfigure}[b]{0.48\textwidth}
\includegraphics[width = \linewidth]{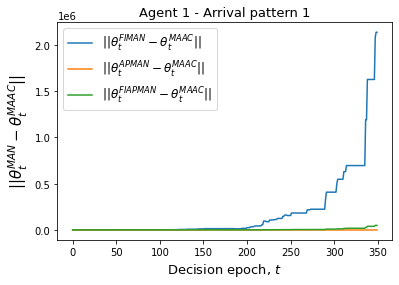}
\caption{}
\end{subfigure}
\begin{subfigure}[b]{0.49\textwidth}
\includegraphics[width = \linewidth]{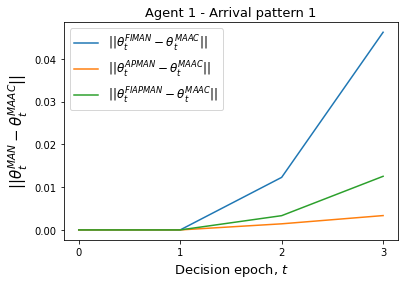}
\caption{}
\end{subfigure}
\caption{Norm of difference in the actor parameter of agent 1 for all the 3 MAN algorithms with MAAC algorithm for arrival pattern 1. Figure (a) is shown for 350 decision epochs; to show that the differences in the actor parameter are from decision epoch 2 itself, we zoom it in figure (b) in the left panel. However, the significant differences are observed only after $\approx$ 180 epochs. This illustrates Theorem \ref{thm: J_comp_t+1} and related discussions in Section \ref{subsec: algo_actor_comparison}.}
\label{fig: norm_diffe_agent_1}
\end{figure} 

We observe that all the 3 MAN algorithms pick up $\theta_2$ (i.e., the actor parameter at decision epoch 2) that is different from that of the MAAC scheme at varying degrees, with FI-MAN being a bit more `farther.' However, a significant difference is observed around decision epoch $\approx 180$. For better understanding, the same graphs are also shown in the logarithmic scale for agent 1 and agent 2 with arrival pattern 1 in Figure \ref{fig: norm_diff_log_scale_agent_1_2_mode_1}. 

\begin{figure}[h!]
\centering
\begin{subfigure}[b]{0.45\textwidth}
\includegraphics[width = \textwidth]{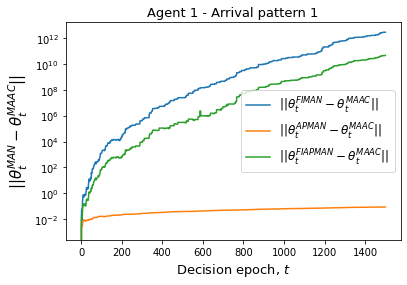}
\end{subfigure}
\begin{subfigure}[b]{0.45\textwidth}
\includegraphics[width = \textwidth]{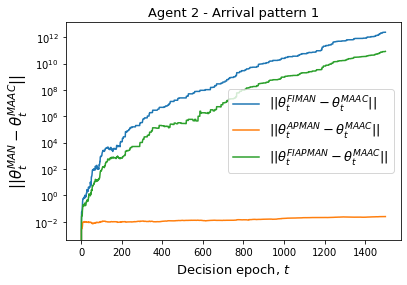}
\end{subfigure}
\caption{Norm of differences of the actor parameters for agents 1 and 2 with traffic arrival pattern 1 in logarithmic scale illustrating Theorem \ref{thm: J_comp_t+1} and related discussions in Section \ref{subsec: algo_actor_comparison}.}
% For details see Section \ref{subsec: algo_actor_comparison}.}
\label{fig: norm_diff_log_scale_agent_1_2_mode_1}
\end{figure}

We see that the norm difference is linearly increasing in FI-MAN and FIAP-MAN algorithms, whereas it is almost flat for the AP-MAN algorithm. So, the iterates of these $2$ algorithms are exponentially separating from those of the MAAC algorithm.
This again substantiates our analysis in Section \ref{subsec: algo_actor_comparison}. 

Though we aim to minimize the network congestion, in Table \ref{table: each_light_mode_1_last_epoch_only} we also provide the average congestion and the correction factor (CF) to each traffic light for last decision epoch (Table \ref{table: 50K_each_light_pattern_1} in Appendix \ref{app: 50k_pattern_1} shows these values for last 200 decision epochs). Expectedly, in all the algorithms, the average congestion for traffic light $T_2$ is highest; almost same for traffic lights $T_1,T_4$; and least for traffic light $T_3$.
% Because of congestion to each traffic light the overall network is also congested, which is reflected in Table \ref{table: network_50K_pattern_1}. 

% Please add the following required packages to your document preamble:
% \usepackage{multirow}
\begin{table}[h!]
\centering
\begin{tabular}{|c|c|c|c|c|}
\hline
\multirow{2}{*}{\textbf{Algorithms}} & \multicolumn{4}{c|}{\textbf{Congestion (Avg $\pm$ CF)}} \\ \cline{2-5} 
 & \textbf{$T_1$} & \textbf{$T_2$} & \textbf{$T_3$} & \textbf{$T_4$} \\ \hline \hline
\textbf{MAAC} & 3.733  $\pm$  0.067 & 4.336 $\pm$ 0.050 & 2.249 $\pm$ 0.013 & 3.699 $\pm$ 0.040  \\ \hline
\textbf{FI-MAN} & 2.613  $\pm$  0.110 & 4.492 $\pm$ 0.868 & 2.359 $\pm$ 0.161 & 2.564 $\pm$ 0.038  \\ \hline
\textbf{AP-MAN} & 3.748  $\pm$  0.073 & 4.338 $\pm$ 0.046 & 2.247 $\pm$ 0.013 & 3.746 $\pm$ 0.039  \\ \hline
\textbf{FIAP-MAN} &  2.711  $\pm$  0.214 & 3.785 $\pm$ 0.371 & 1.907 $\pm$ 0.148 & 2.883 $\pm$ 0.147\\ \hline
\end{tabular}
\caption{The average congestion and correction factor (CF) for each traffic light for arrival pattern 1 (described in Section \ref{subsubsec: diss_com_results}). CF is defined as $1.96 \times \frac{std~dvn}{\sqrt{10}}$.}
\label{table: each_light_mode_1_last_epoch_only}
\end{table}

\subsubsection{Performance of traffic network for arrival pattern 2}
\label{subsubsec: arrival_pattern_2_performance}

Recall, in arrival pattern 1 nodes $N_2,S_2$ and $E_1,W_1$ are assigned more $p_{s,ap}$. We now change the $p_{s,ap}$'s and consider arrival pattern 2. Here the traffic origins $N_1, N_2, S_1$ and $S_2$ have higher probabilities of being assigned a vehicle. 
Since $p_{s,ap}$ for all these nodes is $\frac{3}{14}$, and for all other nodes it is $\frac{1}{28}$, we expect that all the traffic lights will have almost the same average congestion. This observation is reported in Appendix \ref{app: 50k_pattern_2}. Table \ref{table: network_50K_pattern_2} present the average network congestion (averaged over $10$ runs, and round off to 5 decimal places), standard deviation and 95\% confidence interval for arrival pattern 2.   

\begin{table}[h!]
\setlength{\tabcolsep}{2pt}
\centering
\begin{tabular}{|c|c|c|c|}
\hline
\textbf{Algorithms} & \textbf{\begin{tabular}[c]{@{}c@{}}Avg network \\ congestion\end{tabular}} & \textbf{Standard deviation} & \textbf{Confidence Interval  (95\%)} \\ \hline \hline
\textbf{MAAC} & 13.64571 & 0.19755 & (13.52327, 13.76815) \\ \hline
\textbf{FI-MAN} & 10.16988 & 0.11877 & (10.09627, 10.24349) \\ \hline
\textbf{AP-MAN} & 13.77573 & 0.18925 & (13.65843, 13.89303) \\ \hline
\textbf{FIAP-MAN} & 10.19858 & 0.21248 & (10.06689, 10.33027) \\ \hline
\end{tabular}
\caption{The average network congestion, standard deviation and 95\% confidence interval at last decision epoch for arrival pattern 2 (described in Section \ref{subsubsec: arrival_pattern_2_performance}). FI-MAN and FIAP-MAN algorithms has $\approx $ 25\% less average network congestion than MAAC algorithm. The performance of AP-MAN algorithm is almost similar to MAAC algorithm as shown in Theorem \ref{thm: J_comp_t+1} and in Section \ref{subsec: algo_actor_comparison}.}
\label{table: network_50K_pattern_2}
\end{table}

We observe an $\approx$ 25\% reduction in the average congestion with FI-MAN and FIAP-MAN algorithms as compared to the MAAC algorithm. AP-MAN is on par with the MAAC algorithm. This again shows the usefulness of the natural gradient-based algorithms. As opposed to arrival pattern $1$ where FIAP-MAN algorithm has slightly better performance than FI-MAN algorithm, in arrival pattern $2$, both algorithms have almost similar average network congestion. Moreover, the standard deviation in arrival pattern 1 is much higher than in arrival pattern 2. 
% This is possibly because with arrival pattern 2, all the algorithms will assign almost the same congestion to each traffic light with high confidence \textcolor{red}{(more details in appendix ??)}.
Figure \ref{fig: cum_reward_mode2_50k} shows the (time) average congestion for single simulation run of all the algorithms.

\begin{figure}[h!]
    \centering
    \includegraphics[width = 0.6\textwidth]{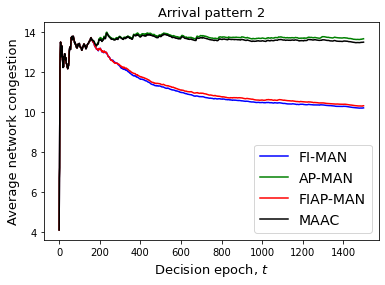}
\caption{(Time) average network congestion for arrival pattern 2. Again, for few initial decision epochs $\approx$ 180 all the algorithms have almost same performance, but afterwards they find different directions, and hence ends in different optima.}
\label{fig: cum_reward_mode2_50k}
\end{figure}

\subsubsection*{Actor parameter comparison for arrival pattern 2 }

Similar to arrival pattern 1, in Figure \ref{fig: norm_diffe_agent_1_mode_2} we plot the norm of the difference for traffic light $T_1$ (for other traffic lights (agents) see Figure \ref{fig: norm_diff_other_agents_50k_pattern_2} of Appendix \ref{app: 50k_pattern_2}). For better understanding, the same graphs are also shown in the logarithmic scale for agent 1 and agent 2 in Figure \ref{fig: norm_diff_log_scale_agent_1_2_mode_2}. 
% (for other agents see \textcolor{red}{Appendix}). 
Again, we see that the norm difference is linearly increasing in FI-MAN and FIAP-MAN algorithms, whereas it is almost flat for the AP-MAN algorithm. So, the iterates of these $2$ algorithms are exponentially separating from the MAAC algorithm.
This again substantiates our analysis in Section \ref{subsec: algo_actor_comparison}. Moreover, we also compute the average congestion and (simulation) correction factor for each traffic light. Table \ref{table: each_light_mode_2_last_epoch_only} shows these values for last decision epoch (See Table \ref{table: 50K_each_light_pattern_2} for last 200 decision epochs). As expected, the average congestion to each traffic light is almost the same.  

% Please add the following required packages to your document preamble:
% \usepackage{multirow}
\begin{table}[h!]
\centering
\begin{tabular}{|c|c|c|c|c|}
\hline
\multirow{2}{*}{\textbf{Algorithms}} & \multicolumn{4}{c|}{\textbf{Congestion (Avg $\pm$ CF)}} \\ \cline{2-5} 
 & \textbf{$T_1$} & \textbf{$T_2$} & \textbf{$T_3$} & \textbf{$T_4$} \\ \hline \hline
\textbf{MAAC} & 3.481 $\pm$ 0.070 & 3.379 $\pm$ 0.034 & 3.348 $\pm$ 0.032 & 3.438 $\pm$ 0.033 \\ \hline
\textbf{FI-MAN} & 2.532 $\pm$ 0.047 & 2.530 $\pm$ 0.040 & 2.519 $\pm$ 0.045 & 2.589 $\pm$ 0.045 \\ \hline
\textbf{AP-MAN} & 3.511 $\pm$ 0.064 & 3.404 $\pm$ 0.032 & 3.376 $\pm$ 0.034 & 3.485 $\pm$ 0.033 \\ \hline
\textbf{FIAP-MAN} &   2.566 $\pm$ 0.041 & 2.534 $\pm$ 0.034 & 2.543 $\pm$ 0.041 & 2.556 $\pm$ 0.036 \\ \hline
\end{tabular}
\caption{The average congestion and correction factor (CF) for each traffic light for arrival pattern 2 (described in Section \ref{subsubsec: arrival_pattern_2_performance}). CF is defined as $1.96 \times \frac{std~dvn}{\sqrt{10}}$. For more details see Table \ref{table: 50K_each_light_pattern_2}.}
\label{table: each_light_mode_2_last_epoch_only}
\end{table}

% \textcolor{black}{we can have another subsection and talk about the congestion at the individual traffic lights, for both traffic intensity, $N_v$, as minimizing individual agent's congestion is not the aim of these algos. this is interesting, but, we need to present it carefully. }

\begin{figure}[h!]
\centering
\begin{subfigure}[b]{0.48\textwidth}
\includegraphics[width = \linewidth]{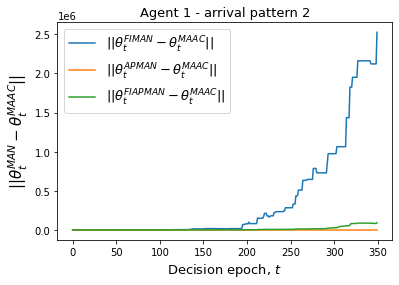}
\caption{}
\end{subfigure}
\begin{subfigure}[b]{0.49\textwidth}
\includegraphics[width = \linewidth]{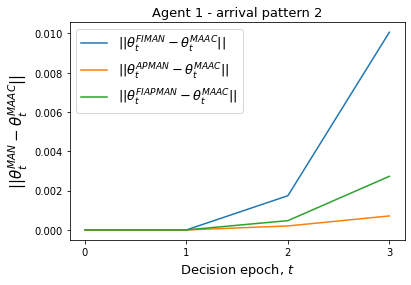}
\caption{}
\end{subfigure}
\caption{Norm of difference in the actor parameter of agent 1 for all the 3 MAN algorithms with MAAC algorithm. Figure (a) is shown for 350 decision epochs; to show that the differences in the actor parameter are from decision epoch 2 itself, we zoom it in Figure (b). However, the significant differences are observed only after $\approx$ 180 epochs. This illustrates Theorem \ref{thm: J_comp_t+1} and related discussions in Section \ref{subsec: algo_actor_comparison}.}
\label{fig: norm_diffe_agent_1_mode_2}
\end{figure}

\begin{figure}[h!]
\centering
\begin{subfigure}[b]{0.49\textwidth}
\includegraphics[width = \textwidth]{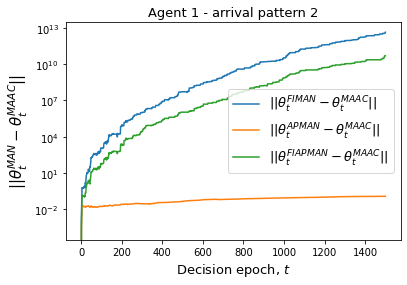}
\end{subfigure}
\begin{subfigure}[b]{0.49\textwidth}
\includegraphics[width = \textwidth]{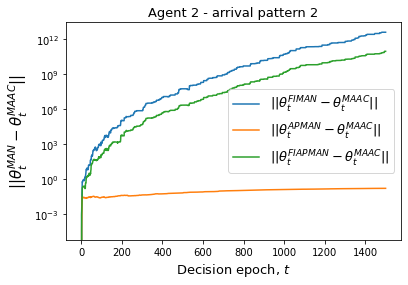}
\end{subfigure}
\caption{Norm of differences of the actor parameter for agents 1 and 2 with traffic arrival pattern 2 in logarithmic scale illustrating Theorem \ref{thm: J_comp_t+1} and related discussions in Section \ref{subsec: algo_actor_comparison}.}
% For details see Section \ref{subsec: algo_actor_comparison}.}
\label{fig: norm_diff_log_scale_agent_1_2_mode_2}
\end{figure}

% Note that we also consider another arrival pattern $2^\prime$ and \textcolor{red}{Appendix **} reports the congestion levels of all the 4 algorithms for arrival pattern $2^{\prime}$. The arrival pattern $2^{\prime}$ is such that vehicles are being assigned now with the same higher probabilities, but  along East-West directions. In view of these symmetries, the congestion levels in the network and to each traffic lights for arrival pattern $2^{\prime}$ are almost same as those of Arrival pattern 2. 

% -------

% \textcolor{black}{FIAP-MAN and FI-MAN have essentially similar performance when the number of vehicles $N_v = 40000$; congestion induced by FIMAN-MAN is  marginally lower, differing in just second decimal places.}

% \textcolor{red}{A small remark can be written for 40000 vehicles, and a table of average rewards can also be given??}
% We also provide the average congestion for $N_v = 40000, T=180000$ seconds in Appendix **. We expect that average congestion will be smaller for all algorithms. 

We now present another computational experiment where we consider an abstract MARL with $n=15$ agents. The model, algorithm parameters, including transition probabilities, rewards, and features for state value function, and rewards are the same as given in \cite{dann2014policy,zhang2018fully}

% \textcolor{black}{FIAP-MAN has least congestion when the number of vehicles is higher at $N_v = 50000$; lower than that of FI-MAN by  $\approx 0.7$ or $ \%$; first setting when FIAP-MAN is doing better than FI-MAN?? ALso, at this volume of $N_v = 50000$ vehicles in the system, network needs more than 1500 epochs to reach steady-state?? }

% \textcolor{black}{Even though FIAP-MAN does better than FI-MAN as far as system wide total congestion is concerned, it is not that FIAP-MAN does better than FI-MAN at each traffic light $T_i, i \in \{1,\cdots, 4\}$ at this $N_v = 50000$ vehicles; while congestion at traffic lights $T_3$ and at the bottle-neck node $T_2$ has improved from that offered by FI-MAN, it has increased marginally at $T_1$ and $T_4$.}

\subsection{Performance of algorithms in abstract MARL model %\textcolor{black}{Garnet?}
}
\label{subsec: garnet_experiments}
The abstract MARL model that we consider consists of $n=15$ agents and $|\mathcal{S}| = 15$ states. Each agent $i\in N$ is endowed with the binary valued actions $\mathcal{A}^i \in \{0,1\}$. Therefore, the total number of actions are $2^{15}$. Each element of the transition probability is a random number uniformly generated from the interval $[0,1]$. These values are normalized to incorporate the stochasticity. To ensure the ergodicity we add a small constant $10^{-5}$ to each entry of the transition matrix. The mean reward $R^i(s,a)$ are sampled uniformly from the interval $[0,4]$ for each agent $i\in N$, and for each state-action pair $(s,a)$. The instantaneous rewards $r^i_t$ are sampled uniformly from the interval $[R^i(s,a)-0.5, R^i(s,a) + 0.5]$. We parameterize the policy using the Boltzmann distribution as given in Equation (\ref{eqn: gibbs_policy}) with $m_i = 5, ~ \forall ~i\in N$.
All the feature vectors (for the state value and the reward functions) are sampled uniformly from the set $[0,1]$ of suitable dimensions. More details of the multi-agent model are available in Appendix \ref{supp: MARL_Garnet}.

We compare all the 3 MAN actor-critic algorithms with the MAAC algorithm. We observe that the globally averaged return from all the algorithms is almost close to each other (for details, see Table \ref{table: avg_rewards_garnet} of Appendix \ref{supp: MARL_Garnet}). To provide more details, we compute the relative $V$ values for each agent $i\in N$ that is defined as $V_{\theta}(s;v^i) = v^{i^\top} \varphi(s)$; this way of comparison of MARL algorithms was earlier  used by \cite{zhang2018fully}. Thus, the higher the value, the better is the algorithm. The relative values corresponding to the FI-MAN and FIAP-MAN algorithms are more than the MAAC algorithm almost for all agents, suggesting again that the natural gradient-based algorithms are better than the standard gradient ones (as far as the relative values are concerned). The detailed observations are available in Appendix \ref{supp: MARL_Garnet}.

\section{Related work}
\label{sec: related_work_manac}
Reinforcement learning has been extensively studied and explored by researchers because of its varied applications and usefulness in many real-world applications  \cite{fax2004information,corke2005networked,dall2013distributed}. Single-agent  reinforcement learning models are well explained in many works including \cite{bertsekas1995dynamic,sutton2018reinforcement,bertsekas2019reinforcement}. The average reward reinforcement learning algorithms are available in \cite{mahadevan1996average,dewanto2020average}.

Various algorithms to compute the optimal policy for single-agent RL are available; these are mainly classified as off-policy and on-policy algorithms in literature \cite{sutton2018reinforcement}. Moreover, because of the large state and action space, it is often helpful to consider the function approximations of the state value functions \cite{sutton1999policy}. To this end, actor-critic algorithms with function approximations are presented in \cite{konda2000actor}. In actor-critic algorithms, the actor updates the policy parameters, and the critic evaluates the policy's value for the actor parameters until convergence. The convergence of linear architecture in actor-critic methods is known. The algorithm in \cite{konda2000actor} uses the standard gradient while estimating the objective function. However, as mentioned in Section \ref{sec: MARL_framework}, we outlined some drawbacks of using standard gradients \cite{ratliff2013information,martens2020new}. 

To the best of our knowledge, the idea of natural gradients stems from the work of \cite{amari1998natural}. Afterward, it has been expanded to learning setup in \cite{natural_amari_in_learning}. For recent developments and work on natural gradients we refer to \cite{martens2020new}. Some recent  overviews of natural gradients are available in \cite{bottou2018optimization} and lecture slides by Roger Grosse\footnote{\url{https://csc2541-f17.github.io/slides/lec05a.pdf}}. The policy gradient theorem involving the natural gradients is explored in \cite{kakade2001natural}. For the discounted reward  \cite{agarwal2021theory,mei2020global} 
% \textcolor{blue}{off-policy setup??}
recently showed that despite the non-concavity in the objective function, the policy gradient methods under tabular setting with softmax policy characterization find the global optima. However, such a result is not available for average reward criteria with actor-critic methods and the general class of policy. Moreover, we also see in our computations in Section \ref{subsubsec: traffic_network_details} that MAN algorithms are stabilizing at different local optima. Actor-critic methods involving the natural gradients for single-agent are available in  \cite{peters2008natural,bhatnagar2009natural}. On the contrary, we deal with the multi-agent setup where all agents have private rewards but have a common objective. For a comparative survey of the MARL algorithms, we refer to \cite{busoniu2008comprehensive,tuyls2012multiagent,zhang2019multi}. 

The MARL algorithms given in \cite{zhang2019multi} are majorly centralized, and hence relatively slow. However, in many situations \cite{corke2005networked,dall2013distributed} deploying a centralized agent is inefficient and costly. Recently, \cite{zhang2018fully} gave two different actor-critic algorithms in a fully decentralized setup; one based on approximating the state-action value function and the other approximating the state value function. Another work in the same direction is available in \cite{heredia2019distributed,suttle2020multi,zhang2021decentralized,mathkar2016nonlinear}. In particular, for distributed stochastic approximations, authors in \cite{mathkar2016nonlinear} introduced and analyzed a non-linear gossip-based distributed stochastic approximation scheme. We use some proof techniques as part of consensus updates from it. We build on algorithm 2 of the \cite{zhang2018fully} and incorporate the natural gradients into it. The algorithms that we propose use the natural gradients as in  \cite{bhatnagar2009natural}. We propose three algorithms incorporating natural gradients into multi-agent RL based on Fisher's information matrix inverse, approximation of advantage parameters, or both. Using the ideas of stochastic approximation available in  \cite{borkar2000ode,borkar2009book,kushner2012stochastic}, we prove the convergence of all the proposed algorithms. 

\section{Discussion}
\label{sec: discussion}

This paper proposes three multi-agent natural actor-critic (MAN) reinforcement learning algorithms. Instead of using a central controller for taking action, our algorithms use the consensus matrix and are fully decentralized. These MAN algorithms majorly use the Fisher information matrix and the advantage function approximations. We show the convergence of all the three MAN algorithms, possibly to different local optima. 
 
We prove that a deterministic equivalent of the natural gradient-based algorithm dominates that of the MAAC algorithm under some conditions. It follows by leveraging a fundamental property of the Fisher information matrix that we show: the minimum singular value is within the reciprocal of the dimension of the policy parameterization space.

The Fisher information matrix in the natural gradient-based algorithms captures the KL divergence curvature between the policies at consecutive iterates. Indeed, we show that the KL divergence is proportional to the objective function's gradient. The use of natural gradients offered a \emph{new representation} to the objective function's gradient in the prediction space of policy distributions which improved the search for better policies.

To demonstrate the usefulness of our algorithm, we empirically evaluate them on a bi-lane traffic network model. The goal is to minimize the overall congestion in the network in a fully decentralized fashion. Sometimes the MAN algorithms can reduce the network congestion by almost $\approx 25\%$ compared to the MAAC algorithm. One of our natural gradient-based algorithms, AP-MAN, is on par with the MAAC algorithm. Moreover, we also consider an abstract MARL with $n=15$ agents; again, the MAN algorithms are at least as good as the MAAC algorithm with high confidence.

We now mention some of the further possibilities. Firstly, some assumptions on the communication network can be relaxed \cite{thoppe2021law}. A careful study of the trade-off between extra per iterate computation versus the gain in the objective function value of the learned MARL policy obtained by these MAN algorithms would be useful. It is in the context of a similar phenomenon in optimization algorithms \cite{bottou2018optimization},  \cite{nocedal2006numerical} and other computational sciences.

Moreover, further understanding of the natural gradients and its key ingredient, the Fisher information matrix $G(\theta)$, is needed in their role as learning representations. Our uniform bound on the smallest singular value of $G(\theta)$ and its role in the dominance of deterministic MAN algorithms are initial results in these directions. More broadly, various learning representations for RL like natural gradients and others are further possibilities.

\bibliography{References}
\bibliographystyle{apalike}

\newpage

\begin{appendix}

\section{Proof of theorems and lemmas}
\label{app: proofs}
In this section, we provide proof of some important theorems and lemmas stated in the main paper.

\subsection{Proof of Theorem \ref{thm: pgt} \cite{zhang2018fully}}
\label{app: pgt}
Recall Theorem \ref{thm: pgt}: Under assumption X. \ref{ass: regularity}, for any $\theta \in \Theta$, and each agent $i\in N$, the gradient of $J(\theta)$ with respect to $\theta^i$ is given by 
\begin{equation*}
\begin{aligned}
    \nabla_{\theta^i}J(\theta) &=& \mathbb{E}_{s\sim d_{\theta}, a\sim \pi_{\theta}}[\nabla_{\theta^i} \log \pi^{i}_{\theta^i}(s,a^i) \cdot A_{\theta}(s,a)]
    \\
    &=& \mathbb{E}_{s\sim d_{\theta}, a\sim \pi_{\theta}}[\nabla_{\theta^i} \log \pi^{i}_{\theta^i}(s,a^i) \cdot A^i_{\theta}(s,a)].
\end{aligned}
\end{equation*}

\begin{proof}
\label{proof: pgt}
The proof is a suitable extension of the policy gradient theorem for single agent RL \cite{sutton1999policy} which says
\begin{eqnarray}
    \nabla_{\theta} J(\theta) &=& \mathbb{E}_{s\sim d_{\theta},a \sim \pi_{\theta}}[\nabla_{\theta} \log \pi_{\theta}(s,a) \cdot Q_{\theta}(s,a)], \nonumber
    \\
    &=& \sum_{s\in \mathcal{S}}d_{\theta}(s) \sum_{a \in \mathcal{A}} \pi_{\theta}(s,a) \left[\nabla_{\theta} \sum_{i\in N} \log \pi^i_{\theta^i}(s,a^i)\right] \cdot Q_{\theta}(s,a).
\end{eqnarray}
The second equality holds because for the multi-agent setup, $\pi_{\theta} = \prod_{i\in N} \pi_{\theta^i}^i(s,a^i)$. Therefore, the gradient of the objective function $J(\theta)$ with respect to $\theta^i$ is given by
\begin{equation}
    \nabla_{\theta^i} J(\theta) = \sum_{s\in \mathcal{S}}d_{\theta}(s) \sum_{a \in \mathcal{A}} \pi_{\theta}(s,a) \left[\nabla_{\theta^i} \log \pi^i_{\theta^i}(s,a^i)\right] \cdot Q_{\theta}(s,a).
\end{equation}
Moreover, we have $\sum_{a^i\in \mathcal{A}^i} \pi^i_{\theta^i} (s,a^i) = 1$, and hence $\nabla_{\theta^i} \left[\sum_{a^i\in \mathcal{A}^i} \pi^i_{\theta^i} (s,a^i)\right] = 0$. This implies, $\nabla_{\theta^i} J(\theta) = \mathbb{E}_{s\sim d_{\theta},a \sim \pi_{\theta}}[\nabla_{\theta^i} \log \pi^i_{\theta^i}(s,a^i) \cdot Q_{\theta}(s,a)] $.

Now let $a^{-i}= (a^1, \dots, a^{i-1}, a^{i+1}, \dots, a^n)$ be the joint action of all the agents except agent $i\in N$, and let $\mathcal{A}^{-1} = \prod_{j\neq i} \mathcal{A}^j$. Thus, for any function $\Lambda:\mathcal{S} \times \mathcal{A}^{-i} \rightarrow \mathbb{R}$ that doesn't depend on $a^i$, we have
\begin{equation}
\begin{aligned}
    \sum_{a \in \mathcal{A}} \pi_{\theta}(s,a) \left[\nabla_{\theta^i} \log \pi^i_{\theta^i}(s,a^i)\right] \cdot \Lambda(s,a^{-i}) & = \sum_{a^{-i} \in \mathcal{A}^{-i}} \Lambda(s,a^{-i}) \cdot \left[ \prod_{j\in N, j\neq i} \pi^j_{\theta^j}(s,a^j) \right] \cdot \left[ \sum_{a^i\in \mathcal{A}^i} \nabla_{\theta^i} \pi^i_{\theta^i}(s,a^i)\right]
    \\
    & = 0.
\end{aligned}
\end{equation}
Thus, replacing $\Lambda$ in above equation by value function $V_{\theta}(s)$ we have first equality of theorem, and replacing it by $\tilde{V}_{\theta}^i(s,a^{-i})$ we obtain the second equality.

\end{proof}

\subsection{Stationary points of optimization problems in the reward function parameterization}
\label{app: derivative_reward_optimizations}

Recall the two optimization problems are:
\begin{equation}
\label{eqn: OP_1}
\tag{OP 1}
    min_{\lambda} ~ \sum_{s\in \mathcal{S}, a\in \mathcal{A}} \tilde{d}_{\theta}(s,a) [ \bar{R}(s,a) - \bar{R}(s,a; \lambda)]^2.
\end{equation}

\begin{equation}
\label{eqn: OP_2}
\tag{OP 2}
min_{\lambda} ~ \sum_{i\in {N}} \sum_{s\in \mathcal{S}, a\in \mathcal{A}} \tilde{d}_{\theta}(s,a) [ {R}^i(s,a) - \bar{R}(s,a; \lambda) ]^2,
\end{equation}

Taking the first derivative of the objective function in optimization problem (\ref{eqn: OP_1}) w.r.t. $\lambda$, we have: 
\begin{eqnarray*}
    & & -2 \times \sum_{s\in \mathcal{S}, a\in \mathcal{A}} \tilde{d}_{\theta}(s,a) [ \bar{R}(s,a) - \bar{R}(s,a; \lambda)] \times \nabla_{\lambda} \bar{R}(s,a; \lambda),
    \\
    &=& -2 \times \sum_{s\in \mathcal{S}, a\in \mathcal{A}} \tilde{d}_{\theta}(s,a) \left[ \frac{1}{n}\sum_{i\in {N}} R^i(s,a) - \bar{R}(s,a; \lambda)\right] \times \nabla_{\lambda} \bar{R}(s,a; \lambda),
    \\
    &=& -\frac{2}{n} \times \sum_{s\in \mathcal{S}, a\in \mathcal{A}} \tilde{d}_{\theta}(s,a) \left[ \sum_{i\in {N}} R^i(s,a) - n\cdot\bar{R}(s,a; \lambda)\right] \times \nabla_{\lambda} \bar{R}(s,a; \lambda),
    \\
    &=& -\frac{2}{n} \times \sum_{s\in \mathcal{S}, a\in \mathcal{A}} \tilde{d}_{\theta}(s,a) \left[ \sum_{i\in {N}} \left( R^i(s,a) - \bar{R}(s,a; \lambda)\right)\right] \times \nabla_{\lambda} \bar{R}(s,a; \lambda).
\end{eqnarray*}
Ignoring the factor $\frac{1}{n}$, the above equation we exactly have the first derivative of the objective function in \ref{eqn: OP_2}. Thus, both the optimization problems have the same stationary points. Hence, \ref{eqn: OP_1} is an \textit{equivalent characterization} of \ref{eqn: OP_2}.

\subsection{Proof of Lemma \ref{lemma: sigma_min_less_1_m}}
\label{app: sigma_min_less_1_m}
Recall Lemma \ref{lemma: sigma_min_less_1_m}: For the Fisher information matrix, $G(\theta) = \mathbb{E}[\psi \psi^{\top}]$, such that $|| \psi ||\leq 1$, the minimum singular value, $\sigma_{\min}(G(\theta))$ is upper bounded by $\frac{1}{m}$, i.e.,
\begin{equation}
\sigma_{\min}(G(\theta)) \leq \frac{1}{m}.
\end{equation}
% Recall the Lemma \ref{lemma: sigma_min_less_1_m}: Let the features associated to the parameterized policies are uniformly bounded by 1. Moreover, $\psi_t \in \mathbb{R}^{m}$. Then the smallest singular value of the Fisher information matrix $G(\theta)$ is upper bounded by $\frac{1}{m}$, i.e,
% \begin{equation}
% \label{eqn: sgima_min_less_than_frac}
%     \sigma_{\min}(G(\theta)) \leq \frac{1}{m}.
% \end{equation}

\begin{proof}
\label{proof: sigma_min_less_1_m}
% The proof is easy to follow:
First recall that the Fisher matrix is defined as $G(\theta) = \mathbb{E}[\psi \psi^{\top}]$. Also, the matrix $G(\theta)$ is symmetric and positive definite. Let $\lambda_{\min}(G(\theta))$ be the minimum eigenvalue of $G(\theta)$. Thus, $\sigma_{\min}(G(\theta)) = \sqrt{\lambda_{\min}(G(\theta) G(\theta)^{\top})} = \sqrt{\lambda_{\min}(G^2(\theta))} = \sqrt{\lambda_{\min}^2(G(\theta))} =  \lambda_{\min}(G(\theta))$. 
\begin{eqnarray*}
    \lambda_{\min}(G(\theta)) &\leq& \frac{1}{m} \sum_{j=1}^{m} \lambda_j (G(\theta)) = \frac{1}{m} tr(G(\theta))
    \\
    &=& \frac{1}{m} tr(\mathbb{E}[\psi \psi^{\top}]) =  \frac{1}{m} \mathbb{E}[tr(\psi \psi^{\top})]
    \\
    &=& \frac{1}{m} \mathbb{E}[||\psi||^{2}]
    \\
    &\leq& \frac{1}{m}.
\end{eqnarray*}
This ends the proof.

\end{proof}

\subsection{Proof of Lemma \ref{lemma: KL_boltzmann}}
\label{app: KL_boltzmann}

Recall the Lemma \ref{lemma: KL_boltzmann}: For the Boltzmann policy as given in Equation (\ref{eqn: boltzmann}) the KL divergence between policy parameterized by $\theta_t$ and $\theta_t + \Delta \theta_t$ is given by
\begin{equation}
    KL(\pi_{\theta_t}(s,a)|| \pi_{\theta_t + \Delta \theta_t} (s,a)) = \mathbb{E} \left[\log \left( \sum_{b\in \mathcal{A}} \pi_{\theta_t}(s,b)  \exp(\Delta q^{\top}_{s,ba} \Delta \theta_t)\right) \right],
\end{equation}
where $\Delta q^{\top}_{s,ba} = q^{\top}_{s,b} - q^{\top}_{s,a}$.
\begin{proof}
Recall, KL divergence is defined as
\begin{eqnarray*}
	KL(\pi_{\theta_t}(\cdot, \cdot)|| \pi_{\theta_t + \Delta \theta_t} (\cdot, \cdot)) &=& \mathbb{E} \left[\log \left(\frac{\pi_{\theta_t}(s,a)}{\pi_{\theta_t + \Delta \theta_t}(s,a)}\right) \right] = \mathbb{E} \left[ \log \left( \frac{\frac{\exp(q^{\top}_{s,a} \theta_t)}{\sum_{c\in\mathcal{A}} \exp(q^{\top}_{s,c} \theta_t)}}{\frac{\exp(q^{\top}_{s,a} (\theta_t + \Delta \theta_t))}{\sum_{b\in\mathcal{A}} \exp(q^{\top}_{s,b} (\theta_t + \Delta \theta_t))}} \right) \right]
	\\
	&=& \mathbb{E} \left[ \log \left( \exp(q^{\top}_{s,a} (\theta_t - (\theta_t + \Delta \theta_t)) \cdot \frac{\sum_{b\in\mathcal{A}} \exp(q^{\top}_{s,b} (\theta_t + \Delta \theta_t))}{\sum_{c\in\mathcal{A}} \exp(q^{\top}_{s,c} \theta_t)}  \right) \right]
	\\
	&=& \mathbb{E} \left[ -q^{\top}_{s,a} \Delta \theta_t + \log \left( \sum_{b\in \mathcal{A}} \left( \frac{\exp(q^{\top}_{s,b} \theta_t) \cdot \exp(q^{\top}_{s,b} \Delta \theta_t)}{\sum_{c\in\mathcal{A}} \exp(q^{\top}_{s,c} \theta_t)} \right) \right) \right]
	\\
	&=&\mathbb{E} \left[ -q^{\top}_{s,a} \Delta \theta_t + \log \left( \sum_{b\in \mathcal{A}} \pi_{\theta_t}(s,b) \cdot \exp(q^{\top}_{s,b} \Delta \theta_t) \right) \right]
	\\
	&=& \mathbb{E} \left[\log \left( \sum_{b\in \mathcal{A}} \pi_{\theta_t}(s,b) \cdot \frac{\exp(q^{\top}_{s,b} \Delta \theta_t)}{ \exp(q^{\top}_{s,a} \Delta \theta_t)} \right) \right]
	\\
	&=& \mathbb{E} \left[\log \left( \sum_{b\in \mathcal{A}} \pi_{\theta_t}(s,b) \cdot \exp((q^{\top}_{s,b} - q^{\top}_{s,a}) \Delta \theta_t)\right) \right]
	\\
	&=& \mathbb{E} \left[\log \left( \sum_{b\in \mathcal{A}} \pi_{\theta_t}(s,b) \cdot \exp(\Delta q_{s,ba}^{\top} \Delta \theta_t)\right) \right].
\end{eqnarray*}
This ends the proof.

\end{proof}

\subsection{Proof of Lemma \ref{lemma: grad_KL_softmax}}
\label{app: grad_KL_softmax}

Recall the Lemma \ref{lemma: grad_KL_softmax}: For the Boltzmann policy as given in Equation (\ref{eqn: boltzmann}) we have 
\begin{equation}
    \nabla KL(\pi_{\theta_t}(\cdot,\cdot)|| \pi_{\theta_t + \Delta \theta_t}(\cdot,\cdot)) = -\mathbb{E}[\nabla \log \pi_{\theta_t + \Delta \theta_t} (s,a)].
\end{equation}
\begin{proof}
Recall, $KL(\pi_{\theta_t}(\cdot,\cdot)|| \pi_{\theta_t + \Delta \theta_t}(\cdot,\cdot))$ for the Boltzmann policy is given in Lemma \ref{lemma: KL_boltzmann}. Thus, $\nabla KL(\pi_{\theta_t}(\cdot,\cdot)|| \pi_{\theta_t + \Delta \theta_t}(\cdot,\cdot)) $ is
\begin{eqnarray*}
	&=& \nabla \mathbb{E} \left[\log \left( \sum_{b\in \mathcal{A}} \pi_{\theta_t}(s,b) \cdot \exp(\Delta q_{s,ba}^{\top} \Delta \theta_t)\right) \right]
	\\
	&=& \mathbb{E}\left[ \nabla \log \left( \sum_{b\in \mathcal{A}} \pi_{\theta_t}(s,b) \cdot \exp(\Delta q_{s,ba}^{\top} \Delta \theta_t)\right) \right]~~(\because ~\mathcal{A} ~is~finite)
	\\
	&=& \mathbb{E}\left[ \frac{1}{\sum_{b\in \mathcal{A}} \pi_{\theta_t}(s,b) \cdot \exp(\Delta q_{s,ba}^{\top} \Delta \theta_t)} \cdot \sum_{b\in \mathcal{A}} \pi_{\theta_t}(s,b) \cdot \Delta q_{s,ba} \cdot \exp(\Delta q_{s,ba}^{\top} \Delta \theta_t) \right]
	\\
	&=& \mathbb{E}\left[ \sum_{b\in \mathcal{A}} \pi_{\theta_t}(s,b) \cdot \Delta q_{s,ba} \cdot \left( \frac{\exp(\Delta q_{s,ba}^{\top} \Delta \theta_t)}{\sum_{c\in \mathcal{A}} \pi_{\theta_t}(s,c) \cdot \exp(\Delta q_{s,ca}^{\top} \Delta \theta_t)} \right) \right]
	\\
	&=& \mathbb{E}\left[  \sum_{b\in \mathcal{A}} \Delta q_{s,ba} \cdot \left( \frac{\pi_{\theta_t}(s,b) \cdot \exp(\Delta q_{s,ba}^{\top} \Delta \theta_t)}{\sum_{c\in \mathcal{A}} \pi_{\theta_t}(s,c) \cdot \exp(\Delta q_{s,ca}^{\top} \Delta \theta_t)} \right) \right]
	\\
	&=& \mathbb{E}\left[ \sum_{b\in \mathcal{A}} \Delta q_{s,ba} \cdot \left( \frac{\frac{\exp(q^{\top}_{s,b} \theta_t)}{\sum_{e\in\mathcal{A}} \exp(q^{\top}_{s,e} \theta_t)} \cdot \frac{\exp(q^{\top}_{s,b} \Delta \theta_t)}{\exp( q^{\top}_{s,a} \Delta \theta_t)}}{\sum_{c\in \mathcal{A}} \frac{\exp(q^{\top}_{s,c} \theta_t)}{\sum_{d\in\mathcal{A}} \exp(q^{\top}_{s,d} \theta_t)} \cdot \frac{\exp(q^{\top}_{s,c} \Delta \theta_t)}{\exp( q^{\top}_{s,a} \Delta \theta_t)}} \right) \right]
	\\
	&=& \mathbb{E}\left[ \sum_{b\in \mathcal{A}} \Delta q_{s,ba} \cdot \left( \frac{\exp(q^{\top}_{s,b} (\theta_t + \Delta \theta_t))}{\sum_{c\in \mathcal{A}} \exp(q^{\top}_{s,c} (\theta_t + \Delta \theta_t))} \right) \right]
	\\
	&=& \mathbb{E}\left[ \sum_{b\in \mathcal{A}} (q^{\top}_{s,b} - q^{\top}_{s,a}) \cdot \pi_{\theta_t + \Delta \theta_t} (s,b) \right]
	\\
	&=& -\mathbb{E}\left[ q^{\top}_{s,a} - \sum_{b\in \mathcal{A}} q^{\top}_{s,b} \cdot \pi_{\theta_t + \Delta \theta_t} (s,b)  \right]
	\\
	&=& -\mathbb{E}[\nabla \log \pi_{\theta_t + \Delta \theta_t} (s,a)].
\end{eqnarray*}
This ends the proof.

\end{proof}

\subsection{Proof of Theorem \ref{thm: critic_convergence_fisher}}
\label{app: critic_convergence_proof}

Recall Theorem \ref{thm: critic_convergence_fisher}: Under assumptions X. \ref{ass: regularity}, X. \ref{ass: comm_matrix}, and X. \ref{ass: feature_conds}, for any policy $\pi_{\theta}$, with sequences $\{\lambda^i_t\}, \{\mu^i_t\}, \{v^i_t\}$, we have $lim_t~ \mu^i_t = J(\theta),~ lim_t~ \lambda^i_t = \lambda_{\theta}$, and $lim_t~v^i_t = v_{\theta}$ a.s. for each agent $i\in N$, where $J(\theta),~ \lambda_{\theta}$, and $v_{\theta}$ are unique solutions to 
\begin{eqnarray*}
    F^{\top}D_{\theta}^{s,a}(\bar{R} - F\lambda_{\theta}) &=& 0,
    \\
    \varPhi^{\top} D_{\theta}^s[T_{\theta}^V(\varPhi v_{\theta}) - \varPhi v_{\theta}] &=& 0.
\end{eqnarray*}

\begin{proof}
\label{proof: critic_convergence_fisher}
% The proof is similar to that of Theorem 4.9 of \cite{zhang2018fully}. 
For all $i\in N$, define the following $z^i_t = [\mu^i_t, (\lambda^i_t)^{\top}, (v^i_t)^{\top}]^{\top} \in \mathbb{R}^{1+M+L}$. Firstly, we recall the following lemma from \cite{zhang2018fully} to give the bounds on $z^i_t$ (for proof see \cite{zhang2018fully}).

\begin{lemma}
\label{lemma: z_boundedness}
Under assumptions X. \ref{ass: regularity}, X. \ref{ass: comm_matrix}, and X. \ref{ass: feature_conds} the sequence $\{z^i_t\}$ satisfy $sup_t~||z^i_t||<\infty$ a.s., for all $i\in {N}$.
\end{lemma}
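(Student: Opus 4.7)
The plan is to establish boundedness of each of the three components of $z^i_t = [\mu^i_t, (\lambda^i_t)^\top, (v^i_t)^\top]^\top$ in turn, because the argument for $v^i_t$ will use the boundedness of $\mu^i_t$ as input (through the local TD error $\delta^i_t = r^i_{t+1} - \mu^i_t + v^{i\top}_t(\varphi(s_{t+1})-\varphi(s_t))$).

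First I would bound $\mu^i_t$ by a direct convex-combination induction. For all $t$ large enough we have $\beta_{v,t}\in(0,1)$, so $\tilde{\mu}^i_t = (1-\beta_{v,t})\mu^i_t + \beta_{v,t} r^i_{t+1}$ is a convex combination of two quantities, and the consensus step $\mu^i_{t+1} = \sum_j c_t(i,j)\tilde{\mu}^j_t$ is again a convex combination by the row-stochasticity of $C_t$ from Assumption X.\ref{ass: comm_matrix}(1). Since $\mathcal{S}\times\mathcal{A}$ is finite, the local rewards $r^i_{t+1}$ are uniformly bounded (their conditional means are $R^i(s,a)$, a finite set), so $\sup_t \max_i |\mu^i_t| \leq \max\bigl\{\max_i|\mu^i_0|,\, \sup_{s,a,i}|R^i(s,a)|\bigr\} + O(1)$ almost surely.

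For $\lambda^i_t$ and $v^i_t$ I would follow the standard two-step template for distributed linear stochastic approximation. Step (i): decompose $x^i_t = \bar{x}_t + (x^i_t - \bar{x}_t)$, where $\bar{x}_t = \frac{1}{n}\sum_i x^i_t$ is the network average, and show the disagreement $x^i_t-\bar{x}_t$ is asymptotically negligible using Assumption X.\ref{ass: comm_matrix}(3) (the spectral-norm contraction on $\mathbb{E}[C_t^\top(I-\mathbbm{1}\mathbbm{1}^\top/n)C_t]$) together with the square-summability of $\beta_{v,t}$ from Equation~(\ref{eqn: robbins_monro_conds}). Step (ii): apply the Borkar--Meyn ODE stability criterion to the averaged sequence $\bar{x}_t$. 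The scaled-out drifts are
\begin{equation*}
h_\infty^\lambda(\lambda) = -F^\top D_\theta^{s,a} F \lambda, \qquad h_\infty^v(v) = \varPhi^\top D_\theta^s (P^\theta - I) \varPhi \, v.
\end{equation*}
By Assumption X.\ref{ass: feature_conds}, $F$ has full column rank and $D_\theta^{s,a}$ has strictly positive diagonal (ergodicity plus $\pi_\theta>0$), so $F^\top D_\theta^{s,a} F \succ 0$ and the origin is the unique globally asymptotically stable equilibrium of $\dot{\lambda}=h_\infty^\lambda(\lambda)$. The $v$-case is the standard average-reward TD argument: $\varPhi^\top D_\theta^s(I-P^\theta)\varPhi\succ 0$ because $\varPhi$ has full column rank and $\varPhi\omega\neq \mathbbm{1}$ for any $\omega$, both parts of Assumption X.\ref{ass: feature_conds}; this gives the global asymptotic stability of the scaled ODE at the origin.

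The main obstacle I anticipate is the two-level randomness: the consensus matrices $C_t$ are random and only assumed conditionally independent of $r^i_{t+1}$ in X.\ref{ass: comm_matrix}(4), so the drift and the martingale-difference noise are entangled with the iterates through $C_t$; and the $v$-recursion's drift additionally depends on $\mu^i_t$, which is itself a random sequence. One must verify that the consensus disagreement from step (i) is $o(1)$ a.s.\ at a rate fast enough to be absorbed into the martingale-noise bookkeeping required by Borkar--Meyn on the averaged sequence in step (ii). Since the same template has been worked out in detail in Appendix A of Zhang et al.\ (2018) under identical assumptions on $\{C_t\}$ and the features, I would import those intermediate bounds (consensus-error control plus stability of the averaged recursion) rather than redo them, and the statement of Lemma~\ref{lemma: z_boundedness} follows by combining the three cases above.
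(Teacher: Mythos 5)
Your proposal is consistent with the paper, which does not prove this lemma itself but imports it verbatim from \cite{zhang2018fully}; your sketch (convex-combination induction for $\mu^i_t$, consensus/disagreement decomposition controlled via Assumption X.~2(3), and the Borkar--Meyn stability criterion applied to the scaled drifts $-F^\top D_\theta^{s,a}F\lambda$ and $\varPhi^\top D_\theta^s(P^\theta-I)\varPhi v$) correctly reconstructs the argument of Appendix~A of that reference, and you end in the same place by importing its intermediate bounds. The only cosmetic caveat is that $\varPhi^\top D_\theta^s(I-P^\theta)\varPhi$ is not symmetric, so ``$\succ 0$'' should be read as all eigenvalues having positive real part (the Hurwitz property needed for the scaled ODE), which is what the standard average-reward TD argument actually delivers.
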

Now consider the actor step as given in Equation (\ref{eqn: actor_update_manac1})
\begin{eqnarray}
    \theta^i_{t+1} &=& \theta^i_t + \beta_{\theta,t} \cdot G_t^{i^{-1}} \cdot \tilde{\delta}^i_t \cdot \psi^i_t \nonumber
    \\
    &=& \theta^i_t + \beta_{v,t} \cdot \frac{\beta_{\theta,t}}{\beta_{v,t}} \cdot G_t^{i^{-1}} \cdot \tilde{\delta}^i_t \cdot \psi^i_t.
\label{eqn: actor_step_fisher}
\end{eqnarray}
By assumption, $\frac{\beta_{\theta,t}}{\beta_{v,t}} \rightarrow 0$. Also note that $\tilde{\delta}^i_t \cdot \psi^i_t$ is bounded by assumptions and Lemma \ref{lemma: z_boundedness}. Moreover, $G_t^{i^{-1}}$ is bounded by assumption X. \ref{ass: g_inv}. 
Thus, the actor update in Equation (\ref{eqn: actor_step_fisher}) can be traced via an ODE $\dot \theta^i = 0$. Hence we can fix the value of $\theta_t$ as constant $\theta$ while analyzing the critic-step at the faster time scale. Note that the above argument of taking $\theta_t \equiv \theta$ while analyzing the critic step is common to all the 3 MAN algorithms. The only difference is in actor update Equation (\ref{eqn: actor_step_fisher}), nonetheless a similar argument concludes that  $\theta_t \equiv \theta$. 

Let $\mathcal{F}_{t,4} = \sigma(r_{\tau}, \mu_{\tau}, \lambda_{\tau},v_{\tau}, s_{\tau},a_{\tau},C_{\tau - 1}, \tau \leq t)$ be the filtration which is an increasing $\sigma$-algebra over time $t$. Define the following for notation convenience. Let $r_t = [r^1_t, \dots, r^n_t]^{\top}, ~v_t = [(v^1_t)^{\top}, \dots, (v^n_t)^{\top}]^{\top},~ \delta_t = [(\delta^1_t)^{\top}, \dots, (\delta^n_t)^{\top}]^{\top}$, and $z_t = [(z^1_t)^{\top}, \dots, (z^n_t)^{\top}]^{\top} \in \mathbb{R}^{n(1+M+L)}$. Moreover, let $A \otimes B$ represents the Kronecker product of matrices $A$ and $B$. Let $y_t = [(y^1_t)^{\top}, \dots, (y^n_t)^{\top}]^{\top}$, where $y^i_{t+1} = [r^i_{t+1} - \mu^i_t, (r^i_{t+1} - f_t^{\top}\lambda^i_t)f_t^{\top}, \delta^i_t \varphi_t^{\top}]^{\top}$. Recall, $f_t = f(s_t,a_t)$, and $\varphi_t = \varphi(s_t)$. Let $I$ be the identity matrix of the dimension $(1+M+L) \times (1+M+L)$. Then update of $z_t$ can be written as 
\begin{equation}
\label{eqn: z_recursion}
z_{t+1} = (C_t \otimes I)(z_t + \beta_{v,t}\cdot y_{t+1}).
\end{equation}
Let $\mathbbm{1} = (1,\dots, 1)$ represents the vector of all 1's. We define the operator $\left\langle z \right\rangle = \frac{1}{n} (\mathbbm{1}^{\top} \otimes I) z = \frac{1}{n} \sum_{i\in {N}} z^i $. This $\langle z \rangle \in \mathbb{R}^{1+M+L}$ represents the average of the vectors in $\{z^1, z^2, \dots, z^n \}$. For notational simplicity let $k=1+M+L$. 
Moreover, let $\mathcal{J} = (\frac{1}{n} \mathbbm{1}\mathbbm{1}^{\top})\otimes I \in \mathbb{R}^{nk \times nk}$ is the projection operator that projects a vector into the consensus subspace $\{\mathbbm{1}\otimes u: u\in \mathbb{R}^{k}\}$. Thus $\mathcal{J}z = \mathbbm{1}\otimes \langle z \rangle$. Now define the 
disagreement vector $z_{\perp} = \mathcal{J}_{\perp}z = z- \mathbbm{1}\otimes \langle z \rangle$, where $\mathcal{J}_{\perp} = I-\mathcal{J}$. Here $I$ is $nk \times nk$ dimensional identity matrix. The iteration $z_t$ can be separated as the sum of a vector in disagreement space and a vector in consensus space, i.e., $z_t = z_{\perp,t} + \mathbbm{1} \otimes \langle z_t \rangle$. The proof of critic step convergence consists of two steps.

% Step 1 shows a.s. convergence of $\{z_{\perp,t}\}$ to zero, and step 2 shows $\{\mathbbm{1} \otimes \langle z_t \rangle\}$ convergence to equilibrium such that $\langle z_t \rangle$ satisfy the equations  
% \begin{eqnarray}
%     F^{\top}D_{\theta}^{s,a}(\bar{R} - F\lambda_{\theta}) &=& 0,
%     \\
%     \Phi^{\top} D_{\theta}^s[T_{\theta}^V(\Phi v_{\theta}) - \Phi v_{\theta}] &=& 0
% \end{eqnarray}

\textbf{Step 01:} To show $lim_t~ z_{\perp,t} = 0$ a.s. From Lemma \ref{lemma: z_boundedness} we have $\mathbb{P}[sup_t ||z_t|| < \infty] = 1$, i.e., $\mathbb{P}[\cup_{K\in \mathbb{Z}^{+}}~\{sup_t ||z_t|| < K\}] = 1$. It suffices to show that $lim_t~ z_{\perp, t} \mathbbm{1}_{\{sup_t ||z_t|| < K\}} = 0$ for any $K\in \mathbb{Z}^+$. Lemma B.5 in \cite{zhang2018fully} proves the boundedness of  $\mathbb{E} \left[||\beta_{v,t}^{-1}z_{\perp,t}||^2\right]$ over the set  $\{sup_t ||z_t|| \leq K\}$, for any $K>0$. We state the lemma here. 
\begin{lemma}
\label{lemma: sup_algo1}
Under assumptions X. \ref{ass: comm_matrix}, and X. \ref{ass: feature_conds} for any $K>0$, we have 
\begin{equation*}
    sup_t ~\mathbb{E}[||\beta_{v,t}^{-1}z_{\perp,t}||^2 \mathbbm{1}_{\{sup_t ||z_t|| \leq K\}}] < \infty.
\end{equation*}
\end{lemma}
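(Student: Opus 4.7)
The plan is to derive an explicit recursion for the disagreement vector $z_{\perp,t}$, show that it admits a geometric contraction in mean square due to the spectral-norm assumption X.\ref{ass: comm_matrix}(3), and then convert this contraction together with the step-size conditions into the desired uniform bound. The key algebraic observation is that $(C_t \otimes I)\mathcal{J} = \mathcal{J}$, because row stochasticity gives $C_t \mathbbm{1} = \mathbbm{1}$, and hence $\mathcal{J}_\perp (C_t \otimes I)\mathcal{J} = 0$. Using $\mathcal{J}_\perp = I - \mathcal{J}$ and applying $\mathcal{J}_\perp$ to the recursion $z_{t+1} = (C_t \otimes I)(z_t + \beta_{v,t} y_{t+1})$, I would obtain
\begin{equation*}
z_{\perp,t+1} = \mathcal{J}_\perp (C_t \otimes I)\, z_{\perp,t} + \beta_{v,t}\, \mathcal{J}_\perp (C_t \otimes I)\, y_{t+1}.
\end{equation*}

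Next I would square this identity and take conditional expectation given $\mathcal{F}_{t,4}$. The quadratic form in $z_{\perp,t}$ satisfies
\begin{equation*}
\mathbb{E}\bigl[\, z_{\perp,t}^\top (C_t^\top \otimes I)\mathcal{J}_\perp (C_t \otimes I) z_{\perp,t} \,\bigm|\, \mathcal{F}_{t,4} \bigr] = z_{\perp,t}^\top \bigl(\mathbb{E}[C_t^\top(I-\tfrac{1}{n}\mathbbm{1}\mathbbm{1}^\top)C_t]\otimes I\bigr) z_{\perp,t} \leq \rho \,\|z_{\perp,t}\|^2,
\end{equation*}
where $\rho < 1$ comes directly from assumption X.\ref{ass: comm_matrix}(3). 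The cross term would be controlled by Young's inequality ($2ab \leq \epsilon a^2 + \epsilon^{-1} b^2$) for a small $\epsilon > 0$ chosen so that $\rho(1+\epsilon) < 1$, and the pure $y_{t+1}$ term is handled by the operator norm bound $\|\mathcal{J}_\perp (C_t \otimes I)\| \leq 1$ combined with a bound on $\|y_{t+1}\|$.

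The bound on $y_{t+1}$ is where I would invoke the restriction $\{\sup_t \|z_t\|\leq K\}$. Since $r^i_{t+1}$ is uniformly bounded (rewards take finitely many values on the finite state-action space), $f_t$ and $\varphi_t$ are uniformly bounded by assumption X.\ref{ass: feature_conds}, and $\mu^i_t, \lambda^i_t, v^i_t$ are bounded on this event by $K$, there exists $C_K < \infty$ such that $\|y_{t+1}\|^2 \mathbbm{1}_{\{\sup_s \|z_s\| \leq K\}} \leq C_K$ for all $t$. Putting the pieces together yields, on the event $\Omega_K = \{\sup_s \|z_s\| \leq K\}$, a recursion of the form
\begin{equation*}
\mathbb{E}\bigl[\|z_{\perp,t+1}\|^2 \mathbbm{1}_{\Omega_K}\bigr] \leq \rho'\, \mathbb{E}\bigl[\|z_{\perp,t}\|^2 \mathbbm{1}_{\Omega_K}\bigr] + \beta_{v,t}^2\, C_K'
\end{equation*}
with $\rho' \in (\rho, 1)$. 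Unrolling and using $\sum_t \beta_{v,t}^2 < \infty$ from Equation (\ref{eqn: robbins_monro_conds}) gives $\mathbb{E}[\|z_{\perp,t}\|^2 \mathbbm{1}_{\Omega_K}] = O(\beta_{v,t}^2)$.

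The main obstacle, and the step where care is needed, is the transition from the pathwise restriction $\mathbbm{1}_{\{\sup_t \|z_t\|\leq K\}}$ (which is not $\mathcal{F}_{t,4}$-measurable at time $t$) to a filtration-adapted form. The standard remedy is to replace it by the larger but $\mathcal{F}_{t,4}$-measurable indicator $\mathbbm{1}_{\{\max_{s\leq t} \|z_s\| \leq K\}}$, which dominates the original and supports the conditional-expectation manipulation; the inequality $\mathbbm{1}_{\{\sup_t \|z_t\|\leq K\}} \leq \mathbbm{1}_{\{\max_{s\leq t}\|z_s\|\leq K\}}$ is used once at the end. A second subtlety is verifying the independence assumption X.\ref{ass: comm_matrix}(4) so that $C_t$ can be pulled outside the conditional expectation when forming the quadratic form. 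Once these are handled, dividing through by $\beta_{v,t}^2$ and using $\lim_t \beta_{v,t+1}/\beta_{v,t}=1$ to convert between $\beta_{v,t}^2$ and $\beta_{v,t+1}^2$ concludes $\sup_t \mathbb{E}[\beta_{v,t}^{-2}\|z_{\perp,t}\|^2 \mathbbm{1}_{\Omega_K}] < \infty$.
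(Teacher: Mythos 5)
Your argument is correct in structure and is, in essence, exactly the proof the paper relies on: the paper does not prove this lemma itself but imports it verbatim as Lemma B.5 of \cite{zhang2018fully}, whose proof follows the same route you sketch --- project the iteration onto the disagreement subspace using $(C_t\otimes I)\mathcal{J}=\mathcal{J}$, contract in mean square via the spectral-norm condition X.\ref{ass: comm_matrix}(3), bound $y_{t+1}$ on the event $\{\sup_t\|z_t\|\leq K\}$ using X.\ref{ass: feature_conds} together with uniform boundedness of the rewards, and unroll the resulting recursion using $\sum_t\beta_{v,t}^2<\infty$ and $\beta_{v,t+1}/\beta_{v,t}\rightarrow 1$. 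The two points you flag but do not fully close --- replacing $\mathbb{E}[C_t^{\top}(I-\mathbbm{1}\mathbbm{1}^{\top}/n)C_t\mid\mathcal{F}_{t,4}]$ by its unconditional counterpart so that X.\ref{ass: comm_matrix}(3) applies, and substituting the non-adapted indicator by the $\mathcal{F}_{t,4}$-measurable $\mathbbm{1}_{\{\max_{s\leq t}\|z_s\|\leq K\}}$ --- are handled in the cited source in precisely the way you propose, so your reconstruction matches the intended proof.
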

From Lemma \ref{lemma: sup_algo1} we obtain that for any $K>0, ~\exists ~ K_1< \infty$ such that for any $t\geq 0,~ \mathbb{E}[||z_{\perp,t}||^2] < K_1 \beta_{v,t}^2 $ over the set $\{sup_t~ ||z_t|| < K\}$. Since $\sum_t \beta_{v,t}^2 < \infty$, by Fubini's theorem we have $\sum_t \mathbb{E}(||z_{\perp, t}||^2 \mathbbm{1}_{\{sup_{t} \ ||z_t|| < K\}}) < \infty$. Thus, $\sum_t ||z_{\perp, t}||^2 \mathbbm{1}_{\{sup_{t} \   ||z_t|| < K\}} < \infty$ a.s. Therefore, $lim_t ~ z_{\perp, t} \mathbbm{1}_{\{sup_t ||z_t|| < K\}} =0 $ a.s. Since $\{sup_t ||z_t||  < \infty\}$ with probability 1, thus $lim_t~ z_{\perp,t} = 0$ a.s. This ends the proof of Step 01.

\textbf{Step 02:} To show the convergence of the consensus vector $\mathbbm{1}\otimes \langle z_t \rangle$ first, note that the iteration of  $\langle z_t \rangle$ (Equation (\ref{eqn: z_recursion})) can be written as
\begin{eqnarray}
    \langle z_{t+1} \rangle  &=& \frac{1}{N} (\mathbbm{1}^{\top} \otimes I)(C_t \otimes I)(\mathbbm{1}\otimes \langle z_t \rangle + z_{\perp, t} + \beta_{v,t} y_{t+1}) \nonumber
    \\
    &=& \langle z_t \rangle + \beta_{v,t} \langle (C_t \otimes I)(y_{t+1} + \beta_{v,t}^{-1}z_{\perp,t}) \rangle \nonumber
    \\
    &=&  \langle z_{t} \rangle + \beta_{v,t} \mathbb{E}( \langle y_{t+1} \rangle | \mathcal{F}_{t,4}) + \beta_{v,t}\xi_{t+1}, \label{eqn: z_update}
\end{eqnarray}
where 
\begin{eqnarray*}
\xi_{t+1} &=& \langle (C_t \otimes I)(y_{t+1} + \beta_{v,t}^{-1}z_{\perp,t}) \rangle - \mathbb{E}( \langle y_{t+1} \rangle| \mathcal{F}_{t,4}), ~~ and
\\
\label{y}
\langle y_{t+1} \rangle &=& [\bar{r}_{t+1} - \langle \mu_t \rangle, (\bar{r}_{t+1} - f_t^{\top} \langle \lambda_t \rangle)f_t^{\top}, \langle \delta_t \rangle \varphi_t^{\top}]^{\top}.
\end{eqnarray*}
Here $\langle \delta_{t} \rangle = \bar{r}_{t+1} - \langle \mu_t \rangle + \varphi^{\top}_{t+1} \langle v_t \rangle - \varphi^{\top}_t \langle v_t \rangle$. Note that $\mathbb{E}(\langle y_{t+1} \rangle | \mathcal{F}_{t,4})$ is Lipschitz continuous in $\langle z_t \rangle = (\langle \mu_t \rangle, \langle \lambda_t \rangle^{\top}, \langle v_t \rangle^{\top})^{\top}$. Moreover, $\xi_{t+1}$ is a martingale difference sequence and satisfies 
\begin{equation}
\label{eqn: xi-martingale}
\mathbb{E}[||\xi_{t+1}||^2 ~|~ \mathcal{F}_{t,4}] \leq  \mathbb{E}[||y_{t+1} + \beta_{v,t}^{-1}z_{\perp,t}||^2_{S_t}~|~\mathcal{F}_{t,4}] +  || \mathbb{E}(\langle y_{t+1} \rangle ~|~ \mathcal{F}_{t,4}) ||^2,
\end{equation}
where $S_t = \frac{C_t^{\top} \mathbbm{1} \mathbbm{1}^{\top}C_t \otimes I}{ n^{2}}$ has bounded spectral norm. 
% The first term is bounded over a set $\{sup_t ||z_t|| \leq M\}$, for any $M>0$, i.e., there exists $K_3,K_4 < \infty$ such that 
% \begin{eqnarray*}
%     \mathbb{E}[||y_{t+1} + \beta_{v,t}^{-1}z_{\perp,t}||^2_{S_t}|\mathcal{F}_{t,1}] \mathbbm{1}_{\{sup_t ||z_t|| \leq M\}}
%     &\leq& K_3 \mathbb{E}[||y_{t+1}||^2 +||\beta_{v,t}^{-1}z_{\perp,t}||^2_{S_t}||^2 ~|\mathcal{F}_{t,1}] \mathbbm{1}_{\{sup_t ||z_t|| \leq M\}}
% \end{eqnarray*}
% Using Eq(\ref{y}) the second term can be bounded by 
% \begin{eqnarray}
% ||\mathbb{E}(\langle y_{t+1} \rangle | \mathcal{F}_{t,1}) ||^2 \leq  \mathbb{E}( || \langle y_{t+1} \rangle ||^2 ~| \mathcal{F}_{t,1}) &\leq&  K_5 (1+ ||\langle \mu_t \rangle||^2 + ||\langle \lambda_t \rangle||^2 + ||\langle v_t \rangle||^2) 
% \\
% &=& K_5(1+||\langle z_t \rangle||^2)
% \end{eqnarray}
Bounding first and second terms in RHS of Equation (\ref{eqn: xi-martingale}) we have, for any $K>0$
\begin{equation*}
  \mathbb{E}(||\xi_{t+1}||^2 | \mathcal{F}_{t,4}) \leq K_2 (1+ ||\langle z_t \rangle||^2),
\end{equation*}
over the set  $\{sup_t ~ ||z_t||\leq K\}$ for some $K_2 < \infty$. The ODE associated with the Equation (\ref{eqn: z_update}) has the form
\begin{equation}
\label{eqn: z_ode}
\langle \dot z \rangle 
=
\begin{pmatrix}  \langle \dot \mu \rangle \\ \langle \dot \lambda \rangle \\ \langle \dot v \rangle
\end{pmatrix}
= 
\begin{pmatrix}
-1 & 0 & 0 \\
0 & -F^{\top}D_{\theta}^{s,a}F & 0 \\
-\Phi^{\top}D_{\theta}^s\mathbbm{1} & 0 & \Phi^{\top}D_{\theta}^s(P^{\theta}-I)\Phi
\end{pmatrix}
\begin{pmatrix}  \langle \mu \rangle \\ \langle \lambda \rangle \\ \langle v \rangle
\end{pmatrix}
+
\begin{pmatrix}
J(\theta)
\\
F^{\top}D_{\theta}^{s,a}\bar{R}
\\
\Phi^{\top}D_{\theta}^s\bar{R}_{\theta}
\end{pmatrix}.
\end{equation}

Let the RHS of equation (\ref{eqn: z_ode}) be $h(\langle z \rangle)$. Note that $h(\langle z \rangle)$ is Lipschitz  continuous in $\langle z \rangle$. Also, recall that $D_{\theta}^{s,a} = diag[d_{\theta}(s) \cdot \pi_{\theta}(s,a), s\in\mathcal{S}, a\in \mathcal{A}]$. Using assumption X. \ref{ass: regularity}, and Perron-Frobenius theorem the stochastic matrix $P^{\theta}$ has a eigenvalue of 1, and remaining eigenvalues have real part less than 1. Hence $(P^{\theta} -I)$ has one eigenvalue zero, and all other eigenvalues with negative real parts. Same follows for the matrix $\Phi^{\top}D_{\theta}^s(P^{\theta}-I)\Phi$, since $\Phi$ is full rank column matrix. The simple eigen-vector $\nu$ of the matrix satisfies $\Phi \nu = \alpha\mathbbm{1}$ for some $\alpha\neq 0$. However, this will not happen because of the assumption X. \ref{ass: feature_conds}. Hence the ODE given in Equation (\ref{eqn: z_ode}) has unique globally asymptotically stable equilibrium $[J(\theta), \lambda_{\theta}^{\top}, v_{\theta}^{\top}]^{\top}$ satisfying 
\begin{eqnarray*}
    - \langle \mu \rangle + J(\theta) = 0;~~F^{\top}D_{\theta}^{s,a}(\bar{R} - F\lambda_{\theta}) = 0;~~\Phi^{\top} D_{\theta}^s[T_{\theta}^V(\Phi v_{\theta}) - \Phi v_{\theta}] = 0.
\end{eqnarray*}
% where $[J(\theta), \lambda_{\theta}^{\top}, v_{\theta}^{\top}]^{\top}$ are the unique equilibrium solutions to the above equations. 
Moreover, from Lemma \ref{lemma: z_boundedness}, and Lemma \ref{lemma: sup_algo1}, the sequence $\{z_t\}$ is bounded almost surely, so is the sequence $\{\langle z_t \rangle\}$. Specializing Corollary 8 and Theorem 9 on page 74-75 in \cite{borkar2009book} we have $lim_t~ \langle \mu_t \rangle= J(\theta),~ lim_t~ \langle \lambda_t \rangle = \lambda_{\theta}$, and $lim_t~ \langle v_t \rangle = v_{\theta}$ a.s. over the set $\{sup_t~||z_t|| \leq K\}$ for any $K>0$. This concludes the proof of Step 02. 

The proof of the theorem thus follows from Lemma \ref{lemma: z_boundedness} and results from Step 01. Thus, we have $lim_t~ \mu^i_t = J(\theta),~ lim_t~ \lambda^i_t = \lambda_{\theta}$, and $lim_t~ v^i_t = v_{\theta}$ a.s. for each $i\in {N}$.
\end{proof}

% \subsection{Proof of Theorem \ref{thm: J_comp_t+1}}
% \label{app: J_comp_t+1}
% Recall the Lemma: Let $\theta^M_t$ and $\theta^N_t$ are the actor parameters at time $t$ for FI-MAN and MAAC algorithms respectively such that $\theta^M_t = \theta^N_t$, so $J(\theta^M_t) = J(\theta^N_t)$, and $\nabla J(\theta^M_t) = \nabla J(\theta^N_t)$. Then
% \begin{eqnarray*}
%     J(\tilde{\theta}^N_{t+1}) &\geq& J(\tilde{\theta}^M_{t+1})~~ if~~ \sigma_{\min}(G(\theta^N_t)) \leq 1,~ and
%     \\
%     J(\tilde{\theta}^N_{t+1}) &\leq& J(\tilde{\theta}^M_{t+1})~~ if~~ \sigma_{\max}(G(\theta^N_t)) \geq 1,
% \end{eqnarray*}
% where $\sigma_{\max}(G(\theta^N_t))$, and $\sigma_{\min}(G(\theta^N_t))$ are maximum and minimum singular values of the Fisher information matrix $G(\theta^N_t)$.

% Moreover, if the sequences $\{\sigma_{\min}(G(\theta^N_t))\}_{t\geq0}$ and $\{\sigma_{\max}(G(\theta^N_t))\}_{t\geq0}$ are monotonically decreasing, then
% \begin{eqnarray*}
%     J(\theta^{N^{\star}}) &\geq& J(\theta^{M^{\star}})~~ if~~ \sigma_{\min}(G(\theta^{N^{\star}})) \leq 1,
%     \\
%     J(\theta^{N^{\star}}) &\leq& J(\theta^{M^{\star}})~~ if~~ \sigma_{\max}(G(\theta^{N^{\star}})) \geq 1,
% \end{eqnarray*}
% where $J(\theta^{N^{\star}})$ and $J(\theta^{M^{\star}})$ are the asymptotic optimal values of the objective function for FI-MAN and MAAC algorithms respectively, and $\sigma_{\min}(G(\theta^{N^{\star}}))$ and $\sigma_{\max}(G(\theta^{N^{\star}}))$ are the smallest and the largest singular values of the limiting Fisher information matrix respectively.

\section{Further details on computational experiments}
\label{app: experiments_detailed}
This section will provide further details of the computational experiments for traffic network control and the abstract multi-agent reinforcement learning setup.

\subsection{Experiments for abstract multi-agent reinforcement learning model}
\label{supp: MARL_Garnet}

We will provide the details of the experiments for an abstract multi-agent RL model we have considered in Section \ref{subsec: garnet_experiments}. 

The abstract MARL model that we consider consists of $n=15$ agents and $|\mathcal{S}| = 15$ states. Each agent $i\in N$ is endowed with the binary valued actions, $\mathcal{A}^i = \{0,1\}$. Therefore, total number of actions are $2^{15}$. Each element of the transition probability is a random number uniformly generated from the interval $[0,1]$. These values are normalized to be stochastic. To ensure the ergodicity we add a small constant $10^{-5}$ to each entry of the transition matrix. The mean reward $R^i(s,a)$ is sampled uniformly from the interval $[0,4]$ for each agent $i \in N$, and each state-action pair $(s,a)$. The instantaneous rewards $r^i_t$ are sampled uniformly from the interval $[R^i(s,a)-0.5, R^i(s,a) + 0.5]$. We parameterize the policy using the Boltzmann distribution as 
\begin{equation}
\label{eqn: gibbs_policy_2}
\pi^i_{\theta^i}(s,a^i) = \frac{\exp(q_{s,a^i}^{\top} \cdot \theta^i)}{\sum_{b^i\in \mathcal{A}^i} \exp(q_{s,b^i}^{\top}\cdot \theta^i)},
\end{equation}
where $q_{s,b^i}\in \mathbb{R}^{m_i}$ is the feature vector of dimension same as $\theta^i$, for any $s \in \mathcal{S}$, and $b^i \in \mathcal{A}^i$, for all $i\in N$. 
We set $m_i = 5,~ \forall i\in N$. All the elements of $q_{s,b^i}$ are also uniformly sampled from $[0,1]$. For the Boltzmann policy function we have \cite{sutton1999policy}
\begin{equation*}
    \nabla_{\theta^i} \log \pi^i_{\theta^i} (s, a^i)  = q_{s,a^i} - \sum_{b^i\in \mathcal{A}^i} \pi^i_{\theta^i} (s, b^i) q_{s,b^i}.
\end{equation*}
% All the feature vectors (value function, and reward function) are sampled uniformly from the set $[0,1]$ of corresponding dimensions. 
The features for state value function $\varphi(s) \in \mathcal{R}^L$ are uniformly sampled from $[0,1]$ of dimension $L = 5 << |\mathcal{S}|$, whereas the features for the reward function $f(s,a) \in \mathbb{R}^M$ are uniformly sampled from $[0,1]$ of dimension $M = 10 << |\mathcal{S}||\mathcal{A}|$. The communication network $\mathcal{G}_t$ is generated randomly at each time $t$, such that the links in the network are formed with the connectivity ratio\footnote{Ratio of total degree of the graph and the degree of complete graph, i.e., $\frac{2E}{n(n-1)}$, where $E$ is the number of edges in the graph.} $4/n$. The step sizes are taken as $\beta_{v,t} = \frac{1}{t^{0.65}}$, and $\beta_{\theta,t} = \frac{1}{t^{0.85}}$ respectively. Initial values of parameters $\mu^i_0, \tilde{\mu}^i_0, v^i_0, \tilde{v}^i_0, \lambda^i_0, \tilde{\lambda}^i_0, \theta^i_0, w^i_0$ are taken as zero vectors of appropriate dimension, $\forall ~i\in {N}$. The Fisher information matrix inverse is initialized to $G_0^{i^{-1}} = 1.5\times I, ~ \forall~ i\in N$. We compared the MAAC algorithm with the multi-agent natural actor-critic algorithms FI-MAN, AP-MAN, and FIAP-MAN. The globally averaged reward, standard deviation and 95\% confidence intervals (averaged over 25 iterations) for all multi-agent natural actor-critic algorithms are given in Table \ref{table: avg_rewards_garnet}.

\begin{table}[!ht]
\centering
\begin{tabular}{|c|c|c|c|}
\hline
\textbf{Algorithm} & \textbf{Avg Rewards} & \textbf{Std Dvn} & \textbf{Confidence interval} \\ \hline \hline
\textbf{MAAC} & 1.993280 & 0.066421 &  (1.967243, 2.019316) \\ \hline
\textbf{FI-MAN} & 2.008412 & 0.055538 & (1.986642, 2.030183) \\ \hline
\textbf{AP-MAN} & 1.982451 & 0.079404 & (1.951325, 2.013576) \\ \hline
\textbf{FIAP-MAN} & 1.981089 & 0.093754 &  (1.944338, 2.017839) \\ \hline

\end{tabular}
\caption{Globally averaged rewards, standard deviation and 95\% confidence for all the algorithms for the abstract multi-agent RL problem. We observe that globally averaged rewards and standard deviation are almost same for all the algorithms with high confidence. All the values are averaged over $25$ runs.}
\label{table: avg_rewards_garnet}
\end{table}

We see that the globally averaged returns from all the algorithms are almost close with high confidence. Since the performance of all the algorithms are same, we have compared the relative $V$ values, defined as $V(s;v^i) = v^{i^{\top}}\varphi(s)$ for each agent $i\in N$ averaged over 25 runs with each run involving $12000$ iterations. These values are available in Table \ref{table: rel_v_values} (maximum values in each row are bold).
\begin{table}[!ht]
\centering
\begin{tabular}{|c|c|c|c|c|}
\hline
\multicolumn{1}{|l|}{\multirow{2}{*}{\textbf{Agents}}} & \multicolumn{4}{c|}{\textbf{Algorithms}} \\ \cline{2-5} 
\multicolumn{1}{|l|}{} & \textbf{FI-MAN} & \textbf{ AP-MAN} & \textbf{FIAP-MAN} & \textbf{MAAC} \\ \hline \hline
\multicolumn{1}{|c|}{1} & \textbf{0.099790} & -0.081895 & 0.096078 & -0.011488 \\ \hline
\multicolumn{1}{|c|}{2} & \textbf{0.082189} & -0.006449 & 0.030639 & 0.012191 \\ \hline
\multicolumn{1}{|c|}{3} & 0.055486 & -0.078218 & \textbf{0.082690} & -0.082770 \\ \hline
\multicolumn{1}{|c|}{4} & \textbf{0.076051} & -0.079353 & 0.012621 & 0.015753 \\ \hline
\multicolumn{1}{|c|}{5} & \textbf{0.083763} & -0.063922 & 0.061808 & 0.002050 \\ \hline
\multicolumn{1}{|c|}{6} & 0.020304 & -0.052126 & \textbf{0.108242} & -0.008364 \\ \hline
\multicolumn{1}{|c|}{7} & 0.056736 & -0.105311 & \textbf{0.087904} & 0.042844 \\ \hline
\multicolumn{1}{|c|}{8} & 0.036493 & -0.039098 & -0.004051 & \textbf{0.052373} \\ \hline
\multicolumn{1}{|c|}{9} & 0.101874 & -0.026014 & \textbf{0.153574} & 0.030972 \\ \hline
\multicolumn{1}{|c|}{10} & 0.040236 & -0.041242 & \textbf{0.045329} & 0.016195 \\ \hline
\multicolumn{1}{|c|}{11} & 0.024138 & -0.035986 & \textbf{0.101919} & -0.006431 \\ \hline
\multicolumn{1}{|c|}{12} & \textbf{0.106377} & -0.054188 & -0.018450 & 0.012857 \\ \hline
\multicolumn{1}{|c|}{13} & 0.058369 & -0.049010 & \textbf{0.078504} & 0.003431 \\ \hline
\multicolumn{1}{|c|}{14} & 0.049588 & -0.090194 & \textbf{0.056069} & -0.003832 \\ \hline
\multicolumn{1}{|c|}{15} & \textbf{0.090342} & -0.057247 & 0.066245 & 0.013376 \\ \hline
\end{tabular}
\caption{Relative $V$ values for each agent for all algorithms averaged over 25 runs. The values in bold represent the maximum in each row, i.e., for each agent. The relative $V$ values are maximum either for FI-MAN or FIAP-MAN algorithm for all the agents except agent $8$, suggesting that natural actor-critic algorithms have better relative $V$ value than standard gradient-based MAAC algorithm. However, the globally averaged returns are almost the same for all the algorithms as shown in Table \ref{table: avg_rewards_garnet}.}
\label{table: rel_v_values}
\end{table}

Though the globally averaged returns are almost the same for all the algorithms, the relative $V$ values are maximum for all but agent $8$ in the FI-MAN or FIAP-MAN algorithms showing the usefulness of multi-agent natural actor-critic algorithms.

% \textcolor{red}{Write that similar thing is observed in \cite{bhatnagar2009natural}}.

% Following are the observations from the above figures:
% \begin{itemize}
%     \item The globally averaged return from all the algorithms is almost the same.
%     \item The relative $V$ values for each agent $i\in N$ is defined as $V(s,v^i) = v^{i^\top} \varphi(s)$. Thus higher is better
%     \item The approximate of the global value function in algorithm \ref{alg: MANAC-fisher}, \ref{alg: MANAC-advantage-parameters} and \ref{alg: MANAC-advantage-parameters-fisher} reach consensus much faster than the algorithm \ref{alg: MARL-algo2} converges. 
%     \item Moreover, the relative values corresponding to natural gradient based algorithms are much more than the MARL algorithm \ref{alg: MARL-algo2}.
% \end{itemize}

\subsection{Experiments for traffic network control}
\label{app: experiments_traffic_network}

Here we provide some more details about the traffic network congestion model for both arrival patterns.

\subsubsection{More performance details for arrival pattern 1}
\label{app: 50k_pattern_1}

Though we aim to minimize the network congestion only, in Table \ref{table: 50K_each_light_pattern_1}, we also provide the congestion to each traffic light for the last 200 decision epochs (all the values are round off to 3 decimal places). Because of the $p_{s,ap}$'s for arrival pattern 1, as expected the traffic light $T_2$ is heavily congested; $T_1, T_4$ are almost equally congested and $T_3$ is the least congested. We also present the correction factor (CF) that is calculated as $1.96 \times \frac{std~dvn}{\sqrt{10}}$. It captures the 95\% confidence about the average congestion of each traffic light and the overall network congestion. 

% \input{Tables-Traffic Light/Mode 1/50k_new}

% Please add the following required packages to your document preamble:
% \usepackage{multirow}
\begin{table}[h!]
\centering
\setlength{\tabcolsep}{1.5pt}
\begin{tabular}{|c|c|c|c|c|c|c|}
\hline
\multirow{2}{*}{\textbf{Algos}} & \multirow{2}{*}{\textbf{\begin{tabular}[c]{@{}c@{}}Decision \\ Epochs\end{tabular}}} & \multicolumn{5}{c|}{\textbf{Congestion (Avg  $\pm$  CF)}} \\ \cline{3-7} 
 &  & \textbf{$T_1$} & \textbf{$T_2$} & \textbf{$T_3$} & \textbf{$T_4$} & \textbf{Network} \\ \hline \hline
\multirow{5}{*}{\textbf{MAAC}} & 1300 & 3.742  $\pm$  0.071 & 4.344 $\pm$ 0.049 & 2.241 $\pm$ 0.012 & 3.716 $\pm$ 0.045 & 14.042 $\pm$ 0.053 \\ \cline{2-7} 
 & 1350 & 3.737  $\pm$  0.072 & 4.345 $\pm$ 0.050 & 2.239 $\pm$ 0.011 & 3.717 $\pm$ 0.041 & 14.037 $\pm$ 0.054 \\ \cline{2-7} 
 & 1400 & 3.742  $\pm$  0.065 & 4.344 $\pm$ 0.049 & 2.242 $\pm$ 0.011 & 3.708 $\pm$ 0.042 & 14.035 $\pm$ 0.043 \\ \cline{2-7} 
 & 1450 & 3.741  $\pm$  0.061 & 4.347 $\pm$ 0.053 & 2.246 $\pm$ 0.011 & 3.705 $\pm$ 0.042 & 14.039 $\pm$ 0.046 \\ \cline{2-7} 
 & 1500 & 3.733  $\pm$  0.067 & 4.336 $\pm$ 0.050 & 2.249 $\pm$ 0.013 & 3.699 $\pm$ 0.040 & 14.017 $\pm$ 0.052 \\ \hline \hline
\multirow{5}{*}{\textbf{FI}} & 1300 & 2.652  $\pm$  0.114 & 4.515 $\pm$ 0.838 & 2.359 $\pm$ 0.159 & 2.598 $\pm$ 0.041 & 12.124 $\pm$ 0.896 \\ \cline{2-7} 
 & 1350 & 2.638  $\pm$  0.112 & 4.516 $\pm$ 0.851 & 2.358 $\pm$ 0.159 & 2.589 $\pm$ 0.042 & 12.100 $\pm$ 0.903 \\ \cline{2-7} 
 & 1400 & 2.633  $\pm$  0.110 & 4.514 $\pm$ 0.856 & 2.358 $\pm$ 0.160 & 2.579 $\pm$ 0.039 & 12.084 $\pm$ 0.910 \\ \cline{2-7} 
 & 1450 & 2.623  $\pm$  0.111 & 4.515 $\pm$ 0.871 & 2.361 $\pm$ 0.161 & 2.569 $\pm$ 0.039 & 12.068 $\pm$ 0.921 \\ \cline{2-7} 
 & 1500 & 2.613  $\pm$  0.110 & 4.492 $\pm$ 0.868 & 2.359 $\pm$ 0.161 & 2.564 $\pm$ 0.038 & 12.028 $\pm$ 0.918 \\ \hline \hline
\multirow{5}{*}{\textbf{AP}} & 1300 & 3.754  $\pm$  0.077 & 4.349 $\pm$ 0.045 & 2.236 $\pm$ 0.013 & 3.758 $\pm$ 0.041 & 14.097 $\pm$ 0.048 \\ \cline{2-7} 
 & 1350 & 3.747  $\pm$  0.078 & 4.346 $\pm$ 0.046 & 2.235 $\pm$ 0.011 & 3.761 $\pm$ 0.036 & 14.089 $\pm$ 0.047 \\ \cline{2-7} 
 & 1400 & 3.755  $\pm$  0.072 & 4.345 $\pm$ 0.045 & 2.238 $\pm$ 0.011 & 3.751 $\pm$ 0.037 & 14.091 $\pm$ 0.039 \\ \cline{2-7} 
 & 1450 & 3.756  $\pm$  0.068 & 4.348 $\pm$ 0.048 & 2.243 $\pm$ 0.011 & 3.752 $\pm$ 0.041 & 14.099 $\pm$ 0.041 \\ \cline{2-7} 
 & 1500 & 3.748  $\pm$  0.073 & 4.338 $\pm$ 0.046 & 2.247 $\pm$ 0.013 & 3.746 $\pm$ 0.039 & 14.079 $\pm$ 0.051 \\ \hline \hline
\multirow{5}{*}{\textbf{FIAP}} & 1300 & 2.757  $\pm$  0.245 & 3.855 $\pm$ 0.376 & 1.929 $\pm$ 0.148 & 2.939 $\pm$ 0.165 & 11.479 $\pm$ 0.691 \\ \cline{2-7} 
 & 1350 & 2.743  $\pm$  0.236 & 3.833 $\pm$ 0.372 & 1.923 $\pm$ 0.149 & 2.926 $\pm$ 0.161 & 11.424 $\pm$ 0.680 \\ \cline{2-7} 
 & 1400 & 2.735  $\pm$  0.229 & 3.827 $\pm$ 0.383 & 1.916 $\pm$ 0.147 & 2.910 $\pm$ 0.156 & 11.388 $\pm$ 0.675 \\ \cline{2-7} 
 & 1450 & 2.723  $\pm$  0.222 & 3.809 $\pm$ 0.380 & 1.913 $\pm$ 0.149 & 2.897 $\pm$ 0.152 & 11.342 $\pm$ 0.662 \\ \cline{2-7} 
 & 1500 & 2.711  $\pm$  0.214 & 3.785 $\pm$ 0.371 & 1.907 $\pm$ 0.148 & 2.883 $\pm$ 0.147 & 11.287 $\pm$ 0.645 \\ \hline
\end{tabular}
\caption{Table shows the average congestion $\pm$ correction factor (averaged over 10 runs) to each traffic light and to the entire network in last 200 decision epochs for arrival pattern 1. As expected, the average congestion is the highest for traffic light $T_2$ in all the algorithms, almost same for traffic lights $T_1,T_4$ and least for traffic light $T_3$. The overall network congestion is simply the sum of congestion at all the traffic lights. Moreover, for the last 200 decision epochs all the values are close to each other with high confidence implying that the algorithms indeed are attaining the local minima.}
\label{table: 50K_each_light_pattern_1}
\end{table}

For completeness, we now provide the plots of norm of the differences for  agents $T_2,T_3,T_4$ in Figure \ref{fig: norm_diff_other_agents_50k_pattern_1}. 

\begin{figure}[h!]
\centering
\begin{subfigure}[b]{0.48\textwidth}
\includegraphics[width =\textwidth]{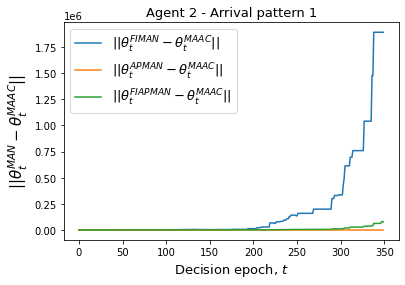}
\end{subfigure}
\begin{subfigure}[b]{0.48\textwidth}
\includegraphics[width =\textwidth]{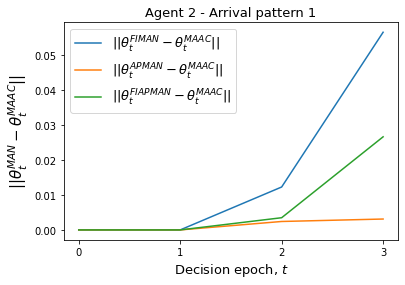}
\end{subfigure}
\begin{subfigure}[b]{0.48\textwidth}
\includegraphics[width =\textwidth]{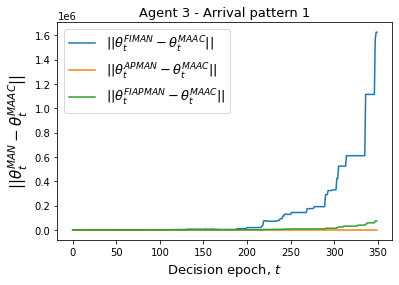}
\end{subfigure}
\begin{subfigure}[b]{0.48\textwidth}
\includegraphics[width =\textwidth]{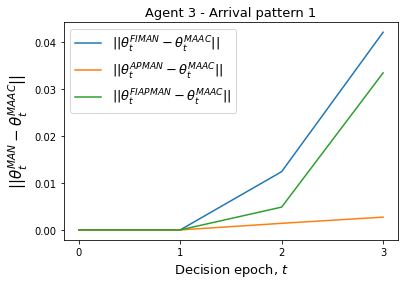}
\end{subfigure}
\begin{subfigure}[b]{0.48\textwidth}
\includegraphics[width =\textwidth]{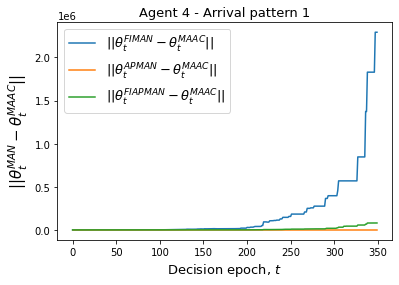}
\end{subfigure}
\begin{subfigure}[b]{0.48\textwidth}
\includegraphics[width =\textwidth]{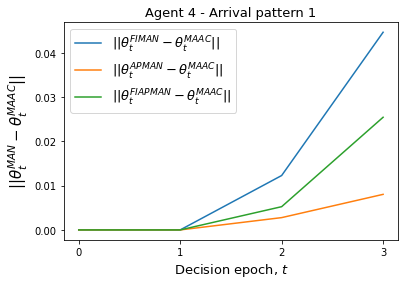}
\end{subfigure}
\caption{Norm of difference of agent $T_2,T_3,T_4$ for all the algorithms. The left panel is shown upto 350 epochs, however to show that actor parameters are actually differing from iterate 2 itself, we zoom them in the left figure. These observations illustrate Theorem \ref{thm: J_comp_t+1} and related discussions in Section \ref{subsec: algo_actor_comparison}.}
\label{fig: norm_diff_other_agents_50k_pattern_1}
\end{figure}

\subsubsection{More performance details of arrival pattern 2}
\label{app: 50k_pattern_2}
Table \ref{table: 50K_each_light_pattern_2} provides the average network congestion and the congestion to each traffic light for last 200 decision epochs. As expected from arrival pattern 2, all the traffic lights are almost equally congested. Again CF is the correction factor as defined earlier. 

% \input{Tables-Traffic Light/Mode 2/Each_light_50K}

% Please add the following required packages to your document preamble:
% \usepackage{multirow}
\begin{table}[h!]
\centering
\setlength{\tabcolsep}{2.2pt}
\begin{tabular}{|c|c|c|c|c|c|c|}
\hline
\multirow{2}{*}{\textbf{Algos}} & \multirow{2}{*}{\textbf{\begin{tabular}[c]{@{}c@{}}Decision \\ Epochs\end{tabular}}} & \multicolumn{5}{c|}{\textbf{Congestion (Avg  $\pm$  CF)}} \\ \cline{3-7} 
 &  & \textbf{$T_1$} & \textbf{$T_2$} & \textbf{$T_3$} & \textbf{$T_4$} & \textbf{Network} \\ \hline \hline
\multirow{5}{*}{\textbf{MAAC}} & 1300 & 3.486 $\pm$ 0.074 & 3.378 $\pm$ 0.031 & 3.348 $\pm$ 0.042 & 3.445 $\pm$ 0.037 & 13.658 $\pm$ 0.130 \\ \cline{2-7} 
 & 1350 & 3.485 $\pm$ 0.073 & 3.375 $\pm$ 0.031 & 3.361 $\pm$ 0.041 & 3.448 $\pm$ 0.035 & 13.670 $\pm$ 0.130 \\ \cline{2-7} 
 & 1400 & 3.482 $\pm$ 0.071 & 3.373 $\pm$ 0.034 & 3.355 $\pm$ 0.040 & 3.448 $\pm$ 0.033 & 13.656 $\pm$ 0.129 \\ \cline{2-7} 
 & 1450 & 3.489 $\pm$ 0.073 & 3.381 $\pm$ 0.035 & 3.350 $\pm$ 0.034 & 3.442 $\pm$ 0.036 & 13.662 $\pm$ 0.130 \\ \cline{2-7} 
 & 1500 & 3.481 $\pm$ 0.070 & 3.379 $\pm$ 0.034 & 3.348 $\pm$ 0.032 & 3.438 $\pm$ 0.033 & 13.646 $\pm$ 0.122 \\ \hline \hline
\multirow{5}{*}{\textbf{FI}} & 1300 & 2.560 $\pm$ 0.048 & 2.554 $\pm$ 0.041 & 2.541 $\pm$ 0.046 & 2.609 $\pm$ 0.052 & 10.264 $\pm$ 0.087 \\ \cline{2-7} 
 & 1350 & 2.554 $\pm$ 0.047 & 2.547 $\pm$ 0.044 & 2.539 $\pm$ 0.045 & 2.605 $\pm$ 0.048 & 10.244 $\pm$ 0.083 \\ \cline{2-7} 
 & 1400 & 2.545 $\pm$ 0.047 & 2.540 $\pm$ 0.043 & 2.534 $\pm$ 0.044 & 2.599 $\pm$ 0.047 & 10.217 $\pm$ 0.079 \\ \cline{2-7} 
 & 1450 & 2.541 $\pm$ 0.046 & 2.535 $\pm$ 0.040 & 2.524 $\pm$ 0.044 & 2.593 $\pm$ 0.045 & 10.194 $\pm$ 0.073 \\ \cline{2-7} 
 & 1500 & 2.532 $\pm$ 0.047 & 2.530 $\pm$ 0.040 & 2.519 $\pm$ 0.045 & 2.589 $\pm$ 0.045 & 10.170$\pm$ 0.074 \\ \hline \hline
\multirow{5}{*}{\textbf{AP}} & 1300 & 3.515 $\pm$ 0.070 & 3.400 $\pm$ 0.026 & 3.377 $\pm$ 0.046 & 3.488 $\pm$ 0.037 & 13.780 $\pm$ 0.129 \\ \cline{2-7} 
 & 1350 & 3.514 $\pm$ 0.067 & 3.398 $\pm$ 0.028 & 3.388 $\pm$ 0.045 & 3.490 $\pm$ 0.031 & 13.792 $\pm$ 0.126 \\ \cline{2-7} 
 & 1400 & 3.513 $\pm$ 0.064 & 3.397 $\pm$ 0.032 & 3.383 $\pm$ 0.043 & 3.490 $\pm$ 0.029 & 13.783 $\pm$ 0.123 \\ \cline{2-7} 
 & 1450 & 3.518 $\pm$ 0.066 & 3.407 $\pm$ 0.033 & 3.377 $\pm$ 0.037 & 3.488 $\pm$ 0.035 & 13.790 $\pm$ 0.126 \\ \cline{2-7} 
 & 1500 & 3.511 $\pm$ 0.064 & 3.404 $\pm$ 0.032 & 3.376 $\pm$ 0.034 & 3.485 $\pm$ 0.033 & 13.776 $\pm$ 0.117 \\ \hline \hline
\multirow{5}{*}{\textbf{FIAP}} & 1300 & 2.591 $\pm$ 0.043 & 2.558 $\pm$ 0.037 & 2.570 $\pm$ 0.043 & 2.579 $\pm$ 0.038 & 10.298 $\pm$ 0.140 \\ \cline{2-7} 
 & 1350 & 2.586 $\pm$ 0.042 & 2.549 $\pm$ 0.036 & 2.564 $\pm$ 0.043 & 2.576 $\pm$ 0.034 & 10.275 $\pm$ 0.134 \\ \cline{2-7} 
 & 1400 & 2.577 $\pm$ 0.042 & 2.544 $\pm$ 0.035 & 2.557 $\pm$ 0.041 & 2.569 $\pm$ 0.035 & 10.247 $\pm$ 0.135 \\ \cline{2-7} 
 & 1450 & 2.574 $\pm$ 0.039 & 2.540 $\pm$ 0.034 & 2.549 $\pm$ 0.041 & 2.562 $\pm$ 0.034 & 10.226 $\pm$ 0.128 \\ \cline{2-7} 
 & 1500 & 2.566 $\pm$ 0.041 & 2.534 $\pm$ 0.034 & 2.543 $\pm$ 0.041 & 2.556 $\pm$ 0.036 & 10.199 $\pm$ 0.132 \\ \hline
\end{tabular}
\caption{Table shows the average congestion  $\pm$  correction factor (averaged over 10 runs) to each traffic light and to the entire network in last 200 decision epochs for arrival pattern 2. As expected, the average congestion is almost the same for all traffic lights  in all the algorithms. }
\label{table: 50K_each_light_pattern_2}
\end{table}

We again plot the norm of the difference of the actor parameter for remaining agent $T_2,T_3$ and $T_4$ in Figure \ref{fig: norm_diff_other_agents_50k_pattern_2} .

\begin{figure}[h!]
\begin{subfigure}[b]{0.48\textwidth}
\includegraphics[width =\textwidth]{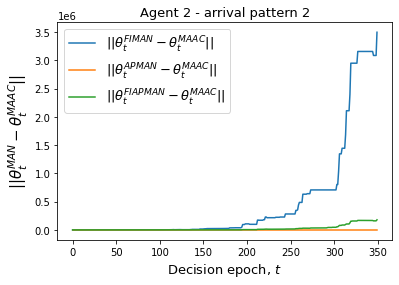}
\end{subfigure}
\begin{subfigure}[b]{0.5\textwidth}
\includegraphics[width =\textwidth]{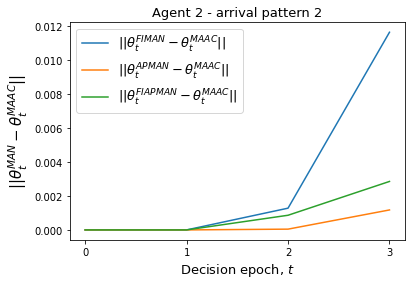}
\end{subfigure}
\begin{subfigure}[b]{0.48\textwidth}
\includegraphics[width =\textwidth]{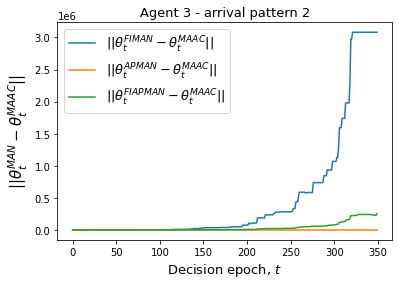}
\end{subfigure}
\begin{subfigure}[b]{0.5\textwidth}
\includegraphics[width =\textwidth]{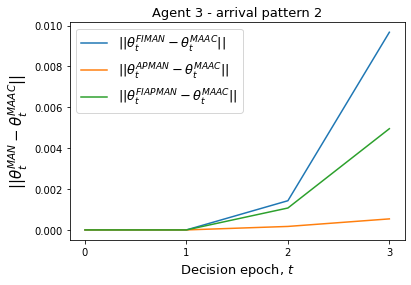}
\end{subfigure}
\begin{subfigure}[b]{0.48\textwidth}
\includegraphics[width =\textwidth]{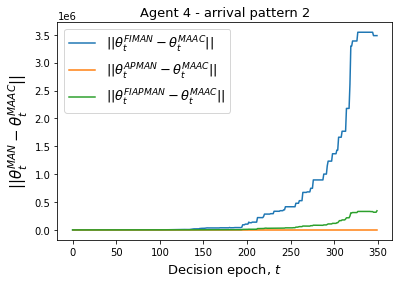}
\end{subfigure}
\begin{subfigure}[b]{0.51\textwidth}
\includegraphics[width =\textwidth]{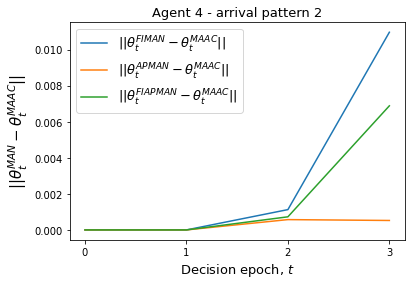}
\end{subfigure}
\caption{For arrival pattern 2, norm of difference of agent $T_2,T_3,T_4$ for all the algorithms. The left panel is shown upto 350 epochs, however to show that actor parameters are actually differing from iterate 2 itself, we zoom them in the left figure. These observations illustrate Theorem \ref{thm: J_comp_t+1} in Section \ref{subsec: algo_actor_comparison}.}
\label{fig: norm_diff_other_agents_50k_pattern_2}
\end{figure}

\subsubsection{Justification of Equation (\ref{eqn: binomial})}
\label{app: justification_eqn_40}
We first use the law of total probability to show that for a given arrival pattern $ap$ the number of arrivals to the source node $s$ at time $t$ follows the binomial distribution as given in Equation (\ref{eqn: binomial}). 

\begin{align*}
\mathbb{P}(N_t^s=k | M_t=ap ) &= \sum_{j=0}^{N_v} \mathbb{P}(N_t^s = k | N_t=j, M_t=ap) \mathbb{P}(N_t=j | M_t=ap) \\ 
&= \sum_{j=k}^{N_v} \binom{j}{k} p_{s,ap}^k (1-p_{s,ap})^{j-k} \mathbb{P}(N_t = j) \\
&= p_{s,ap}^k \sum_{j=0}^{N_v} \binom{j}{k} \binom{N_v}{j} (1-p_{s,ap})^{j-k} \left(\frac{1}{T}\right)^j  \left( 1-\frac{1}{T}\right)^{N_v-j}  \\
&= \binom{N_v}{k} p_{s,ap}^k \sum_{l=0}^{N_v-k}\binom{N_v-k}{l} (1-p_{s,ap})^{N_v-k-l} \left(\frac{1}{T} \right)^{N_v-l} \left(1-\frac{1}{T} \right)^l \\
&= \binom{N_v}{k} \left(\frac{p_{s,ap}}{T}\right)^k \left(1-\frac{p_{s,ap}}{T} \right)^{N_v-k}.
\end{align*}

\section{Some relevant background}
\label{app: relevant_background}
In this section, we will provide details of single-agent MDP, actor-critic algorithm, the MAAC algorithm \cite{zhang2018fully}, and the Kushner-Clark Lemma.

\subsection{Markov decision processes and actor-critic algorithms with linear function approximation}
\label{app: MDP_AC_fn_approx}

Markov decision process (MDP) is a stochastic control process that provides a mathematical framework for sequential decision making. Formally it is defined as below.
\begin{definition}[MDP \cite{puterman2014markov}]
\label{def: MDP}
A MDP is characterised by a quadruple $\mathcal{M} = \langle \mathcal{S,A},P, r\rangle$, where $\mathcal{S}$ and $\mathcal{A}$ are finite state and action space respectively. ${P} : \mathcal{S}\times\mathcal{A}\times\mathcal{S}\rightarrow [0,1]$ is the transition probability of taking action $a$ in state $s$ and reaching to state $s^{\prime}$. ${R}$ : $\mathcal{S}\times\mathcal{A}\rightarrow \mathbb{R}$ is the reward function defined as ${R}(s,a) = \mathbb{E}[r_{t+1}|s_t=s,a_t=a]$ where $r_{t+1}$ is the instantaneous reward at time $t$.
\end{definition}

% The instantaneous reward $r_t$ is assumed to be uniformly bounded for any $t\geq 0$. 
The policy of an agent is a decision rule using which agent takes action in each state; formally, it is a mapping $\pi: \mathcal{S}\times\mathcal{A}\rightarrow[0,1]$. It also represents the probability of taking action $a$ in the state $s$. The goal of the agent is to maximize the expected average reward,
\begin{equation*}
\label{eqn: obj_mdp}
\max_{\pi} ~~ J(\pi) = \max_{\pi} ~~ \underset{T}{lim}\hspace{2pt} \dfrac{1}{T}\sum_{t=0}^{T-1}\mathbb{E}(r_{t+1}) = \max_{\pi} ~~ \sum_{s \in \mathcal{S}}d_{\pi}(s)\sum_{a \in \mathcal{A}}\pi(s,a){R}(s,a),
\end{equation*}
where $d_{\pi}(s) = \underset{t}{lim}~ \mathbb{P}(s_t=s|\pi)$ is the stationary state distribution of the Markov chain under policy $\pi$. The action value function for a policy, $\pi(\cdot,\cdot)$ is defined as 
\begin{equation*}
\label{eqn: Q_fn_MDP}
Q_{\pi}(s,a) = \sum_{t\geq 0}\mathbb{E}[r_{t+1}-J(\pi)|s_0=s,a_0=a,\pi].
\end{equation*}
The state value function $V_{\pi}(s)$ for any state $s$ is defined as  
\begin{equation*}
\label{eqn: V_fn_MDP}
{V}_{\pi}(s) = \sum_{a \in \mathcal{A}}\pi(s,a){Q}_{\pi}(s,a). 
\end{equation*}

In most real-life scenarios, the state space, the action space, or both are large and infinite, and hence finding the optimal policy is computationally heavy 
% The most common technique to avoid this is to work with the function approximations, i.e., construct a function that mimics the behavior of the actual scenario, and at the same time, it is computationally tractable. To this end, we will apply the function approximations for the state action-value function and the policies. 
so, it is useful to consider the parameterized policies.
% The policy gradient methods majorly consist of three steps: define the class of randomized local policy $\pi$ for the agent parameterized by  $\theta \in \Theta \subseteq \mathbb{R}^{m}$, where $\Theta$ is the compact set; estimate the gradient of the average reward for the policy parameters, and then improve the policy by adjusting the parameters in the direction of the estimate of the objective function. 
Let the parameterized policy be denoted by $\pi_{\theta}$. We need the following regularity assumption on the parameterized policy function \cite{bhatnagar2009natural,sutton2018reinforcement}. For any $s \in \mathcal{S}$, and $a\in \mathcal{A}$, the policy function $\pi_{\theta}(s,a)>0$ for any $\theta \in \Theta$. Also, $\pi_{\theta}(s,a)$ is continuously differentiable with respect to the parameter $\theta$ over $\Theta$. 
Moreover, for any $\theta \in \Theta$, let $P^{\theta}$ be the transition matrix for the Markov chain $\{s_t\}_{t\geq 0}$ induced by policy $\pi_{\theta}$, that is, for any $s,s^{\prime} \in \mathcal{S}$,
\begin{equation*}
\label{eqn: P^theta_MDP}
    P^{\theta}(s^{\prime}|s) = \sum_{a\in \mathcal{A}} \pi_{\theta}(s,a) P(s^{\prime}|s,a).
\end{equation*}
Futhermore, the Markov chain $\{s_t\}_{t\geq 0}$ is assumed to be ergodic under $\pi_{\theta}$ with stationary distribution denoted by $d_{\theta}(s)$ over $\mathcal{S}$.

Under this parameterization, the policy gradient theorem \cite{sutton2018reinforcement} is: $\nabla_{\theta} J(\theta) = \mathbb{E}_{s\sim d_{\theta},a\sim \pi_{\theta}}[\nabla_{\theta}\hspace{2pt}log\hspace{2pt} \pi_{\theta}(s,a)\cdot\{Q_{\theta}(s,a)-b(s)\}]$.
The term $b(s)$ is usually referred to as the baseline.
This baseline helps in reducing the variance in the gradient of the objective function. It turns out that ${V}_{\theta}$(s) serves as the minimum variance baseline. We define the advantage function as follows: $ A_{\theta}(s,a) = {Q}_{\theta}(s,a)-{V}_{\theta}(s)$.

% The actor-critic algorithm is obtained by parameterizing the state-action value function $Q(s,a)$. At time step $t$, define ${Q}_t(w)$ := ${Q}_t(s_t,a_t;w)$ and $\psi_t:= \psi_t(s_t,a_t)$ = $\nabla_{\theta} \log\pi_{\theta_t}(s_t,a_t)$. One common actor-critic algorithm has the following form 
% \begin{eqnarray}
% \mu_{t+1}&=&(1-\beta_{w,t})\cdot\mu_t+\beta_{w,t}\cdot r_{t+1}
% \\
% w_{t+1}&=&w_t+\beta_{w,t}\cdot\delta_t \cdot \nabla_{w}{Q}_t(w_t)
% \\
% \theta_{t+1}&=&\theta_t+\beta_{\theta,t}\cdot A_t \cdot \psi_t
% \end{eqnarray}
% where $\mu_t$ tracks the estimate of averaged return $J(\theta)$, and $\beta_{\theta,t},\beta_{w,t}$ are step sizes satisfying following 
% \begin{eqnarray}
%     \sum_{t} \beta_{w,t} = \sum_{t} \beta_{\theta,t} = \infty
%     \\
%     \sum_t \beta_{w,t}^2 + \beta_{\theta,t}^2 <\infty
% \end{eqnarray}
% Moreover, $\beta_{\theta,t} = o(\beta_{w,t})$, and $lim_{t} \frac{\beta_{w,t+1}}{\beta_{w,t}} = 1$. 
% Furthermore, $\delta_t$ is the action-value temporal difference (TD) error that is given by
% \begin{equation}
% \label{eqn: TD_error_MDP_Q_fn}
% \delta_t=r_{t+1}-\mu_t+{Q}(s_{t+1},a_{t+1};w_t)-{Q}(s_{t},a_{t};w_t)
% \end{equation}

The actor-critic algorithm is obtained by parameterizing the state value function. Let the state value function, $V_{\theta}(s)$ be parameterized as $V_{t+1}(v_t) := v_t^{\top}\varphi(s_{t+1})$, where $v_t$ are the parameters and $\varphi(s_{t+1})$ are the features associated to the state $s_{t+1}$. Moreover, let $\mu_t$ be the estimate of the objective function at time $t$. The actor-critic algorithms use the following to update the critic and actor parameters
\begin{equation*}
\begin{aligned}
\textbf{Critic update:} \hspace{4 mm} \mu_{t+1}&=(1-\beta_{v,t})\cdot\mu_t+\beta_{v,t}\cdot r_{t+1}; \hspace{3 mm} v_{t+1}=v_t+\beta_{v,t}\cdot\delta_t \cdot \nabla_{v}{V}_t(v_t)
\\
\textbf{Actor update:} \hspace{4 mm} \theta_{t+1}&=\theta_t+\beta_{\theta,t}\cdot \delta_t \cdot \psi_t,
\end{aligned}
\end{equation*}
where $\delta_t$ is the TD error involving the state value function and defined as  $\delta_t=r_{t+1}-\mu_t+{V}_{t+1}(v_t)-{V}_t(v_t)$. It is known to be an unbiased estimate of the advantage function \cite{bhatnagar2009natural}, i.e., $\mathbb{E}[\delta_t | s_t=s,a_t=s, \pi_{\theta}] = A_{\theta}(s,a)$. 
Here $\beta_{\theta,t}$ and $\beta_{v,t}$ are step-sizes and satisfy the following conditions
\begin{equation*}
    a) \sum_{t} \beta_{v,t} = \sum_{t} \beta_{\theta,t} = \infty;~~~ b)\sum_t (\beta_{v,t}^2 + \beta_{\theta,t}^2) <\infty,
\end{equation*}
moreover, $\beta_{\theta,t} = o(\beta_{v,t})$, and $lim_{t} \frac{\beta_{v,t+1}}{\beta_{v,t}} = 1$.

\subsection{Taylor's formula with Lagrange's form of remainder in several variables \cite{folland1999real}}
\label{app: Taylor_expansion}

Let $f : \mathbb{R}^k \mapsto \mathbb{R}$ be a continuously double differentiable function and $S$ be an open convex set in $\mathbb{R}^k$. Then for any $\mathbf{a},\mathbf{h} \in S$, the first order Taylor expansion of $f(\mathbf{a}+\mathbf{h})$ with Lagrange form of remainder is \cite{folland1999real}
\begin{equation}
    f(\mathbf{a}+\mathbf{h}) = f(\mathbf{a}) + \mathbf{a}^{\top} \nabla f(\mathbf{a}) + R_1(\mathbf{h})
\end{equation}
where $R_1(\mathbf{h})$ is the Lagrange remainder given by
\begin{equation}
    R_1(\mathbf{h}) = \frac{1}{2!} \mathbf{a}^{\top} \nabla^2 f(\mathbf{a}+c\cdot \mathbf{h}) \mathbf{a} ~ for ~some ~c\in (0,1).
\end{equation}
% \textcolor{red}{We can make this result as Lemma}. \textcolor{black}{we will leave this as it is; we will recall the Eq 88 whenever needed. Else, we can write this as a lemma with a reference to a good book. }
Moreover, for the above Taylor expansion we have:
%\textcolor{black}{end sentence here?} following result from \cite{folland1999real}. 
If each entry of the Hessian, $\nabla^2 f(\mathbf{a})$ is uniformly bounded by $H$, i.e., $|\{\nabla^2 f(\mathbf{a})\}_{i,j}| \leq M,~\forall~ i,j\in [k]$, then
\begin{equation}
    |R_1(\mathbf{h})| \leq \frac{H}{2!} ||\mathbf{h}||^2_1, ~ where~ ||\mathbf{h}||_1 = |h_1| + |h_2|+ \dots + |h_n|.
\end{equation}
% \textcolor{black}{Since $f$ is assumed to be continuously double differentiable on $S$, such a $M$ exists by Weierstrass theorem.}
Next, we provide the Kushner-Clark lemma that we often use while proving the convergence. 

\subsection{Kushner-Clark Lemma \cite{kushner2003stochastic,metivier1984applications}}
\label{app: K-C_lemma}
% In this Section, we will now provide the Kushner-Clark lemma
Let $\mathcal{X}\subseteq \mathbb{R}^p$ be a compact set and let $h: \mathcal{X} \rightarrow \mathbb{R}^p$ be a continuous function. Consider the following recursion in $p$-dimensions 
\begin{equation}
\label{eqn: x_recursion}
    x_{t+1} = \Gamma\{x_t + \gamma_t[h(x_t) + \zeta_t + \beta_t]\}.
\end{equation}
Let $\hat{\Gamma}(\cdot)$ be transformed projection operator defined for any $x\in \mathcal{X}\subseteq \mathbb{R}^{p}$ 
% and $h: \mathcal{X} \rightarrow \mathbb{R}^p$ be a continuous function with $\mathcal{X}$ being a compact set 
 as
\begin{equation*}
    \hat{\Gamma}(h(x)) = lim_{0< \eta \rightarrow 0} \left\lbrace \frac{\Gamma(x+\eta h(x)) - x}{\eta} \right\rbrace,
\end{equation*}
then the ODE associated with Equation (\ref{eqn: x_recursion}) is $\dot x = \hat{\Gamma} (h(x))$.
\begin{assumption}
\label{ass: K-C_lemma}
Kushner-Clark lemma requires following assumptions
\begin{enumerate}
% \item $h(\cdot)$ is a continuous $\mathbb{R}^p$ valued function.
\item Stepsize $\{\gamma_t\}_{t\geq 0}$ satisfy $\sum_{t} \gamma_t = \infty$, and $\gamma_t \rightarrow 0$ as $t\rightarrow \infty$.
\item The sequence $\{\beta_t\}_{t\geq 0}$ is a bounded random sequence with $\beta_t \rightarrow 0$ almost surely as $t\rightarrow \infty$.
\item For any $\epsilon > 0$, the sequence $\{\zeta_t\}_{t\geq 0}$ satisfy
\begin{equation*}
    \lim_t~\mathbb{P}\left( sup_{p\geq t} \left\Vert \sum_{\tau = t}^p \gamma_{\tau}\zeta_{\tau} \right\Vert
    \geq \epsilon \right) = 0.
\end{equation*}
\end{enumerate}
\end{assumption}

Kushner-Clark lemma is as follows: suppose that ODE $\dot x = \hat{\Gamma} (h(x))$ has a compact set $\mathcal{K}^{\star}$ as its asymptotically stable equilibria, then under assumption  X. \ref{ass: K-C_lemma}, $x_t$ in Equation (\ref{eqn: x_recursion}) converges almost surely to $\mathcal{K}^{\star}$ as $t\rightarrow \infty$. 

\subsection{Multi-agent actor-critic algorithm based on state value function}
\label{app: MARL_algo2}
For completeness, we reproduce the multi-agent actor-critic (MAAC) algorithm based on the state value function \cite{zhang2018fully}. 
\begin{marl}[!ht]
\caption{Multi-agent actor-critic based on state value function} 
% 	\SetAlgoNoLine
	\KwIn{Initial values of $\mu^i_0, \tilde{\mu}^i_0,v^i_0, \tilde{v}^i_0, \lambda^i_0, \tilde{\lambda}^i_0, \theta^i_0,~ \forall i\in {N}$; initial state $s_0$; stepsizes $\{\beta_{v,t}\}_{t\geq 0}, \{\beta_{\theta,t}\}_{t\geq 0}$.\\
	Each agent $i$ implements $a^i_0 \sim \pi_{\theta^i_0}(s_0,\cdot)$.
	\\
	Initialize the step counter $t\leftarrow 0$.
	}
	\Repeat{Convergence}
    {\For {all $i\in {N}$}
    {Observe state $s_{t+1}$, and reward $r^i_{t+1}$. \\
    Update: $\tilde{\mu}^i_t \leftarrow (1-\beta_{v,t})\cdot \mu^i_t + \beta_{v,t} \cdot r^i_{t+1}$.
    \\
    $\tilde{\lambda}^i_t \leftarrow \lambda^i_t + \beta_{v,t} \cdot [r^i_{t+1} - \bar{R}_t(\lambda^i_t)]  \cdot \nabla_{\lambda}\bar{R}_t(\lambda^i_t)$,~ 
    where $\bar{R}_t(\lambda^i_t) = \lambda^{i^{\top}}_t f(s_t,a_t)$.
    \\
    Update: $\delta^i_t \leftarrow r^i_{t+1} - \mu^i_t + V_{t+1}(v^i_t) - V_{t}(v^i_t)$, 
    ~where $V_{t+1}(v^i_t) = v^{i^{\top}}_t \varphi(s_{t+1})$.
    \\
    \textbf{Critic Step:} $\tilde{v}^i_t \leftarrow v^i_t + \beta_{v,t} \cdot \delta^i_t \cdot \nabla_v V_t(v^i_t)$
    \\
    Update $\tilde{\delta}^i_t \leftarrow \bar{R}_{t}(\lambda^i_t) - \mu^i_t + V_{t+1}(v^i_t) - V_{t}(v^i_t)$;~ $\psi^i_t \leftarrow \nabla_{\theta^i} \log \pi^i_{\theta^i_t}(s_t,a^i_t)$.
    \\
    \textbf{Actor Step:} $\theta^i_{t+1} \leftarrow \theta^i_t + \beta_{\theta,t} \cdot \tilde{\delta}^i_t \cdot \psi^i_t$. 
    \\
    Send $\tilde{\mu}^i_t, \tilde{\lambda}^i_t, \tilde{v}^i_t$ to the neighbors over $\mathcal{G}_t$.}
    {\For {all $i\in {N}$}
    {\textbf{Consensus update:} $\mu^i_{t+1} \leftarrow  \sum_{j\in {N}} c_t(i,j) \tilde{\mu}^j_t$;
    \\
    $\lambda^i_{t+1} \leftarrow \sum_{j\in N} c_t(i,j) \tilde{\lambda}^j_t;~~v^i_{t+1} \leftarrow \sum_{j\in {N}}c_t(i,j)\tilde{v}^j_t $.
    }}
    Update: $t\leftarrow t+1$.}
\label{alg: MARL-algo2}
\end{marl}

\newpage

\end{appendix}
\end{document}